\newlist{alphenum}{enumerate}{2}
\setlist[alphenum]{label=(\alph*),beginpenalty=10000}
\newlist{menum}{enumerate}{2}
\setlist[menum]{label=(\arabic*),beginpenalty=10000}
\newif\ifextended%
\newenvironment{extended}{\ifextended\else\comment\fi}{\ifextended\else\endcomment\fi}
\newenvironment{short}{\ifextended\comment\else\fi}{\ifextended\endcomment\else\fi}
\newcommand{\extonly}[1]{\ifextended#1\else\fi}
\pgfplotsset{compat=1.17}
\renewcommand{\paragraph}[1]{\smallskip\par\noindent\textbf{#1}. }
\newcommand{\paragraphnopoint}[1]{\smallskip\par\noindent\textbf{#1} }
\newtheorem{fact}[theorem]{Fact}
\newtheorem{observation}[theorem]{Observation}
\renewcommand{\longleftrightarrow}{\leftrightarrow}
\newcommand{\Top}{\mathsf{Top}}
\newcommand{\Rel}{\mathsf{Rel}}
\renewcommand{\phi}{\varphi}
\newcommand{\kn}{\unlhd}
\newcommand{\prop}{\mathsf{Prop}}
\newcommand{\share}{\ensuremath{\mathsf{share}}}
\newcommand{\mE}{\mathcal{E}}
\newcommand{\mfM}{\mathfrak{M}}
\newcommand{\A}{\forall}
\newcommand{\E}{\exists}
\newcommand{\B}{\Box}
\begin{document}
\title{Virtual Group Knowledge and Group Belief in Topological Evidence Models\extonly{ (Extended Version)}}
\titlerunning{Virtual Group Knowledge and Group Belief in Top.\ E.\ Models\extonly{ (Ext. Vers.)}}
\author{Alexandru Baltag\inst{1}
  %\orcidID{0000-0002-5533-7976}
  \and
  Malvin Gattinger\inst{1}
  %\orcidID{0000-0002-2498-5073}
  \and
  Djanira Gomes\inst{2}}
\authorrunning{A. Baltag et al.}
\institute{ILLC, University of Amsterdam, The Netherlands
\and Institute of Computer Science, University of Bern, Switzerland}
\maketitle
\begin{abstract}
We study notions of (virtual) group knowledge and group belief within multi-agent evidence models, obtained by extending the topological semantics of evidence-based belief and fallible knowledge from individuals to groups. We completely axiomatize and show the decidability of the logic of (``hard'' and ``soft'') group evidence, and do the same for an especially interesting fragment of it: the logic of group knowledge and group belief. We also extend these languages with dynamic evidence-sharing operators, and completely axiomatize the corresponding logics, showing that they are co-expressive with their static bases.
\end{abstract}

\section{Introduction}

A natural framework for reasoning about knowledge in distributed systems is Epistemic Logic: an umbrella term for modal logics that formalize notions of knowledge and belief for rational agents. Traditionally, these logics are interpreted on \emph{relational (Kripke) models}, according to Hintikka's semantics~\cite{hintikka}. It is also useful to have notions of knowledge and belief associated with \emph{groups}~\cite{ogdistrkn}. The best-known are \emph{distributed} and \emph{common knowledge}. The first is inherently linked to communication: it describes what a group of agents \emph{could come to know} after sharing their individual information with the group~\cite{halpernmoses}.
This ``virtual'' or ``potential'' aspect is made explicit in Dynamic Epistemic Logic~\cite{BDM2008:PhilInfoChapter,SEP,Hans-DELbook}, with dynamic operators for information sharing~\cite{resolution,BaltagEtAl18:GroupKnowledgeInterrogative,subgroups,Goldbach15:ModellingDemocraticDeliberation}.

Recently, \emph{topological models} for epistemic logics have gained popularity, see e.g.~\cite{aybukephd,benthempacuit,defknowledgebaltag,justifmodels,argbelief,aldomaster,saulmaster,saulpaper}.
An advantage of topological semantics is that it comes with a natural, semantical notion of \emph{evidence}, making the evidential basis of knowledge and belief apparent.

In this paper we use multi-agent topological evidence models, or \emph{topo-e-models}, which explicitly represent the topology of evidence~\cite{baltagevidence,baltagbeliefknowledgeevidence}. One way to interpret knowledge and belief in topo-e-models is to apply the \emph{interior semantics} of McKinsey and Tarski~\cite{mckinseytarski} to (a basis for) the so-called \emph{dense-open topology}. This restricts the evidential topology to \emph{dense open sets}, which represemt \emph{``uncontroversial'' evidential justifications}: 
pieces of evidence consistent with all other evidence. Belief amounts to having such a justification, and (fallible, defeasible) knowledge is interpreted as \emph{correctly justified belief}~\cite{baltagevidence}. 

A natural continuation of this research is to extend the framework to the multi-agent case and to incorporate a notion of group knowledge. It has long been noticed~\cite{aldomaster,saulmaster,aybukephd,saulpaper} that the most straightforward such extension is obtained by applying the same definitions (as for individual knowledge and belief) to the \emph{join topology}, obtained by pooling together all the individual evidence. One objection~\cite{aldomaster,saulmaster} raised against this notion is that it 
loses the main characteristic property of classical distributed knowledge, namely \emph{Group Monotonicity} (saying that \emph{a group potentially knows everything known by any subgroup}): in topo-e-models, group knowledge is \emph{not} monotonic with respect to group inclusion. In fact, a group may sometimes know even \emph{less} than \emph{any} of its members~\cite{baltagevidence}.\footnote{In order to resolve this, two alternatives of this semantics have been proposed~\cite{aldomaster,saulmaster}, both ensuring the validity of the Group Monotonicity property.}

Nevertheless, in this paper we argue that Monotonicity will have to fail for \emph{any} realistic notion of group knowledge.
Since fallible knowledge is not fully introspective, agents cannot separate it from other beliefs in order to share it; so the best they can do is to share all their \emph{evidence}. And it turns out that the topological notion of group knowledge matches the knowledge that can be obtained after evidence-sharing. In this sense, topological group knowledge accurately captures the \emph{group's epistemic potential}: its true ``virtual'' knowledge.
In a nutshell:
the failure of Group Monotonicity is a ``feature'', not a ``bug''. 

Given this fact, it becomes imperative to study \emph{the laws governing this natural notion of group knowledge, and the corresponding concept of group belief}. In this paper, we provide \emph{complete and decidable axiomatizations} of these notions, as well as of the related concepts of \emph{group evidence}. While our axioms of group evidence are the expected ones (similar to any other distributed attitude in Epistemic Logic), it turns out that virtual group knowledge obeys new interesting laws, that can be seen as subtle forms of weakening Group Monotonicity. The completeness proof for the logic of group knowledge is also more intricate, relying on a new representation result. In order to make explicit the sense in which our 
notion captures a group's epistemic potential, we add \emph{dynamic evidence-sharing modalities}, and we completely axiomatize the resulting dynamic logics.

The paper is structured as follows.
\Cref{sec:background} presents topo-e-models
and defines our key epistemic notions.
\Cref{sec:Logics} gives the syntax, semantics and axiomatizations of our logics, and states our completeness/decidability results.
\Cref{sec:Conclusion} contains some conclusions and an open question for future work. 

\begin{short}
\paragraph{Extended Version}
\end{short}
This paper is based on the Master thesis of the third author~\cite{dosSantosGomes2025:thesisVGK}. The original proofs in~\cite{dosSantosGomes2025:thesisVGK} use somewhat different notations and definitions than the ones adopted here.
\begin{short}
  We provide an extended version of the paper at
  \url{https://malv.in/2025/VirtualGroupK-extended.pdf}
  with full proof details fitting the current version.
\end{short}

\section{Topological Knowledge and Evidence-Sharing}\label{sec:background}\label{sec:Background}

In this section we introduce \emph{multi-agent topological evidence models}, and define the notions of \emph{hard and soft evidence, knowledge and belief}, and their \emph{natural extensions to groups}.
We discuss the crucial differences between virtual group knowledge and the standard concept of distributed knowledge, and we explain and defend the first from a communication-based perspective, formalized in terms of an \emph{evidence-sharing update}.
The presentation is purely semantical-mathematical: we postpone the introduction of our formal languages to \Cref{sec:Logics}. 

\paragraph{Topological prerequisites} We first recall some basic topological notions. Given a set $X$, a \emph{topology} $\tau$ is a family of subsets of $X$, called \emph{open sets}. The \emph{closed} sets are given by their complements: $\bar{\tau}=\{X \setminus U \mid U \in \tau\}$. The topology $\tau$ by definition contains $\emptyset$ and $X$ as elements, and is closed under finite intersections and arbitrary unions. A set $X$ equipped with a topology $\tau$ is called a \emph{topological space}, denoted $(X, \tau)$. 

Given a space $X$, every set $A\subseteq X$ has an \emph{interior} and a \emph{closure}, which are computed by the \emph{interior} and \emph{closure} operators $Int_\tau, Cl_\tau : \mathcal{P}(X)\to \mathcal{P}(X)$, respectively. The \emph{interior} $Int_\tau(A)$ of $A\subseteq X$ is the union of all open subsets of $A$; the \emph{closure} $Cl_\tau(A)$ is its dual:
\begin{align*}
    Int_\tau(A) &= \bigcup\{U\in\tau\mid U\subseteq A\} \\
    Cl_\tau(A) &= \bigcap\{C\in\bar{\tau}\mid A\subseteq C\}.
\end{align*}
While the interior of $A$ is the largest open set contained in $A$, its closure is the least closed set containing $A$. 

A family $\mathcal{B} \subseteq \tau$ is a \emph{topological basis} for a topological space $(X, \tau)$ if every non-empty open subset of $X$ can be written as a union of elements of $\mathcal{B}$. A \emph{subbasis} for $(X, \tau)$ is a family $\mathcal{B} \subseteq \tau$, whose closure under finite intersections forms a basis for $(X, \tau)$. Given any family of subsets $\mathcal{E}\subseteq \mathcal{P}(X)$, we obtain the \emph{generated topology} by closing $\mathcal{E}$ under finite intersections and, subsequently, under arbitrary unions. The topology generated by $\mathcal{E}$ is the smallest topology $\tau$ on $X$ s.t.\ $\mathcal{E}\subseteq \tau$. The \emph{join} $\bigvee_{i\in I} \tau_i$ of a family ${\{\tau_i\}}_{i\in I}$ of topologies on the same set $X$ is defined as the topology generated by the union $\bigcup_{i\in I} \tau_i$.\footnote{This is the same as the \emph{supremum} of the family ${\{\tau_i\}}_{i\in I}$ in the lattice of all topologies on $X$ with inclusion.}

\paragraph{Topology-partition pairs, local density and the dense-open topology}
We shall consider \emph{topology-partition pairs} $(\tau, \Pi)$, consisting of a topology $\tau$ on a set $X$ and a partition $\Pi = \{\Pi(x) \mid x \in X\}$ of $X$ (where each $x$ belongs to a unique partition cell $\Pi(x)$), s.t.\ all partition cells are open (i.e., $\Pi \subseteq \tau$). For every open set $U\in \tau$, we denote by $\Pi(U):=\bigcup \{\Pi(x) \mid x\in U\}$ the union of cells of all points in $U$.
For a point $x\in X$, we say that $U$ is \emph{locally dense in $\Pi(x)$} (or ``locally dense at $x$'') if $Cl_\tau (U)\supseteq \Pi(x)$. 
We say that $U$ is \emph{locally dense in $\Pi$} (or just ``locally dense'', when $\Pi$ is understood) if $U$ is locally dense at all its points, i.e., $Cl_\tau (U)\supseteq \Pi(U)$.
It is easy to see that the family
\[\tau^{dense(\Pi)}\, := \, \{U\in \tau \mid Cl_\tau(U)\supseteq \Pi(U)\}\cup \{\emptyset\}\]
(consisting of all locally dense open sets and $\emptyset$) is itself a topology, called \emph{the dense-open topology} for $(\tau, \Pi)$. Once again, when $\Pi$ is understood from context, we skip it and just write $\tau^{dense}$ instead. 

\subsection{Knowledge and Belief in Multi-Agent Topo-Evidence Models}\label{backgr-topsingle}

\emph{Topological evidence models}~\cite{baltagevidence,baltagbeliefknowledgeevidence} are a variant of the \emph{evidence models} defined by~\cite{benthempacuit}, in which the role of the topology is stressed and the definition of belief is streamlined (to ensure its consistency). While~\cite{baltagevidence} studied these notions within a single-agent setting, this has been generalized to multi-agent models in~\cite{aldomaster,saulmaster,saulpaper}.

\paragraph{Vocabulary: atoms, agents and groups}
Throughout this paper, we fix a vocabulary, consisting of: a finite or countable set $\prop$ of \emph{atomic formulas} $p,q, \ldots$, intuitively denoting ``ontic facts'': non-epistemic features of the world; and a finite set $A=\{1, 2 \ldots, n\}$ of \emph{agents}, labeled by numbers, and
denoted by meta-variables $i, j, k, \ldots$. A \emph{group} is a non-empty set of agents (i.e., any $I\subseteq A$ with $I\neq \emptyset$). We use capital letters $I, J, K, \ldots$ as meta-variables for groups.

\begin{definition}[Topo-E-Models]\label{multiag-topoemodel}
  A \emph{multi-agent topological evidence model} (or ``topo-e-model'', for short) is a tuple
  $\mfM = (X, \Pi_1,\ldots, \Pi_n, \tau_1, \ldots, \tau_n, \llbracket \cdot \rrbracket)$ (or ${(X, \Pi_i, \tau_i, \llbracket \cdot \rrbracket)}_{i\in A}$ for short), where:
  \begin{itemize}
  \item[$\bullet$] $X$ is a set of \emph{states} (or ``possible worlds'');
  \item[$\bullet$] 
  For each $i \in A$, the family 
$\Pi_i\subseteq \mathcal{P}(X)$ is a partition of $X$, called \emph{agent $i$'s information partition}, and consisting of mutually disjoint partition cells. Every state $x\in X$ belongs to a
  \emph{unique cell} $\Pi_i(x)\in \Pi_i$, representing the \emph{ private information} --- the ``hard evidence'' --- possessed by agent $i$ in state $x$. The states $x'\in \Pi_i(x)$ are said to be \emph{indistinguishable} from $x$ by agent $i$;  
  \item[$\bullet$] 
    For each $i \in A$, $\tau_i \subseteq \mathcal{P}(X)$ is a topology on $X$, called \emph{agent $i$'s evidential topology}, and subject to the constraint that $\Pi_i \subseteq \tau_i$ (``\emph{hard evidence is evidence}''). The 
 non-empty open sets ($U\in \tau_i$ s.t.\ $U\neq\emptyset$) represent \emph{agent $i$'s (``soft'') evidence}. For any state $x\in X$, $\tau_i^* (\Pi_i(x)) :=\{U\in \tau_i \mid \emptyset\neq U\subseteq \Pi_i(x)\}$ is the collection of all \emph{soft evidence possessed by agent $i$ at state $x$}; while
    $\tau_i^*(x) := \{ U \in \tau_i \mid x \in U \subseteq \Pi_i(x) \}$ is the collection of \emph{agent $i$'s factive (``true'') evidence at state $x$}.\footnote{For the consistency of our notation, note that $\tau_i^*(\Pi_i(x))= \bigcup\{\tau_i^*(y): y\in \Pi_i(x)\}$.} We denote by $Cl_i$ and $Int_i$ the \emph{closure $Cl_{\tau_i}$ and interior $Int_{\tau_i}$ operators} with respect to agent $i$'s evidential topology $\tau_i$. 
  \item[$\bullet$] $\llbracket \cdot \rrbracket:X\rightarrow\mathcal{P}\left(\prop\right)$ is a \emph{valuation} function, mapping each atomic formula $p\in \prop$
  to the set $\llbracket p \rrbracket\subseteq X$ of states ``satisfying'' $p$.
  \end{itemize}
\end{definition}
The intuition is that in state $x$, each agent $i\in A$ has some ``hard'' evidence $\Pi_i(x)$, as well as some pieces of ``soft'' evidence  $U\in \tau_i^*(\Pi_i(x))$.  Since $x\in \Pi_i(x)$, the hard evidence is \emph{infallibly true} (i.e., true with absolute certainty),\footnote{This is the reason we assigned \emph{only one piece of (private) hard evidence} $\Pi_i(x)$ to each agent $i$ at each state $x$. In principle, one can of course have many pieces of hard evidence; but, since they are mutually consistent (being all true in the actual world), the agent can just combine all of them by taking their intersection.} while soft evidence can be false (when $x\not\in U$); moreover, two pieces of soft evidence $U, V\in \tau^*_i(\Pi_i(x))$ may be mutually inconsistent (when $U\cap V=\emptyset$).
\smallskip

\paragraph{Subbasis presentation}
The evidential topology is sometimes specified using a \emph{designated subbasis} $\mE_i^0\subseteq \mathcal{P}(X)$, with $\emptyset\not\in \mE_i^0$. Intuitively, the sets $U\in \mE_i^0$ represent the ``basic'' or \emph{``primary'' evidence}: the pieces of evidence that are \emph{directly observable}. 
The agent then forms the family $\mE_i$ of \emph{conjunctive evidence} by taking the closure of $\mE_i^0$ under finite intersections.\footnote{Note that $X=\bigcap\emptyset\in \mE_i$.} 
Finally, she forms the topology $\tau_i$, as the family of  \emph{disjunctive evidence} (also known as ``arguments''),  by closing $\mE_i$ under unions.\footnote{Note that $\tau_i$ equals the topology \emph{generated} by $\mE_i^0$.} While the subbasis presentation is 
computationally less demanding, the
distinction between primary evidence and indirect (conjunctive or disjunctive) evidence does not play any role in the semantics.

\paragraph{Propositions and Operators} A \emph{proposition} in model $\mfM = {(X, \Pi_i, \tau_i, \llbracket \cdot \rrbracket)}_{i\in A}$ is a set of states $P\in \mathcal{P}(X)$. An example are \emph{atomic propositions}: those of the form $\llbracket p \rrbracket$, for $p\in \prop$. Note that the family $\mathcal{P}(X)$ forms a Boolean algebra, with the operations of set-complementation, intersection and union. Next, we define a number of (unary) \emph{propositional operators} $\Gamma: \mathcal{P}(X) \to \mathcal{P}(X)$. 

\paragraph{Hard evidence gives infallible knowledge}
Given a proposition $P\subseteq X$, we say that \emph{an agent $i$ has hard evidence for $P$ (or ``infallibly knows'' $P$) at state $x$} if $P$ is true at all states that are indistinguishable for $i$ from $x$, i.e., if $\Pi_i(x)\subseteq P$. 
Formally, the proposition 
``agent $i$ infallibly knows $P$'' is denoted by
\vspace{-1mm}
\[ {[\forall]}_i(P) \, :=\, \{x \in X \mid \Pi_i(x)\subseteq P\}. \vspace{-2mm}\]
This is an absolutely certain, ``infallible'' type of knowledge, hence it is \emph{factive}, i.e., we have ${[\forall]}_i (P) \subseteq P$, and \emph{fully (=positively and negatively) introspective}, i.e., we have
${[\forall]}_i(P) = {[\forall]}_i( {[\forall]}_i (P))$ and
$X - {[\forall]}_i (P)= {[\forall]}_i (X- {[\forall]}_i (P))$.

\paragraph{Interior as ``soft evidence'' operator}
We say that \emph{agent $i$ has factive evidence for $P$ at state $x$} if there is some 
$U\in \tau_i^* (x)$ with $U\subseteq P$;\footnote{Requiring $U\in \tau_i^* (x)$ with $U\subseteq P$ is in fact equivalent to requiring $U\in \tau_i$ with $x\in U\subseteq P$, as $U\in \tau_i^* (x)$ implies that $x\in U$ and that $U\in \tau_i$ and, conversely, the existence of an $U\in \tau_i$ with $x\in U\subseteq P$ implies the existence of an $U'\in \tau_i^* (x)$ with $U'\subseteq P$. Hence, throughout the paper, we use the two specifications interchangeably.} equivalently, if $x\in Int_i(P)$. The proposition ``$i$ has factive evidence for $P$'' is denoted by:
\vspace{-1mm}
\[ \Box_i(P) \, :=\, \{x\in X \mid \exists U \in \tau_i : x \in U \subseteq P \} = Int_i (P). \vspace{-1mm}\]
This attitude is again \emph{factive}, i.e., $\Box_i (P)=Int_i (P)\subseteq P$,  and \emph{positively (but not negatively) introspective}, i.e., $\Box_i (P)= \Box_i (\Box_i (P))$.
The dual of $\Box_i$ is denoted by $\Diamond_i (P)$ and matches \emph{topological closure}:
$\Diamond_i (P) \, :=\, X- \Box_i (X-P) = Cl_i (P)$.

\paragraph{Justified Belief} According to the dense-interior semantics~\cite{baltagevidence,baltagbeliefknowledgeevidence}, \emph{rational agents base their beliefs only on ``uncontroversial'' evidence}: those pieces of evidence that are not contradicted by any other evidence available to them.\footnote{Note that ``uncontroversial'' does not mean ``factive'': such evidence can be false.} \emph{Agent $i$ believes $P$ at state $x$} if $i$ has such ``uncontroversial'' evidence for $P$: some $U\in \tau_i^*(\Pi_i(x))$ s.t.\ $U\subseteq P$ and $U\cap V\neq \emptyset$ for all $V\in \tau^*_i (\Pi_i(x))$.  It is easy to see that an open subset $U\subseteq P$ is an uncontroversial piece of evidence for $P$ at $x$ for agent $i$ iff $U$ is \emph{locally dense at $x$} with respect to $(\tau_i, \Pi_i)$, i.e., iff $Cl_i (U)\supseteq \Pi_i(x)$. In this case, $U$ can be thought of as a \emph{justification} for (believing) $P$: one that ``coheres'' with all the available evidence. 
Equivalently, \emph{$P$ is believed at $x$ iff its interior is locally dense at $x$}. 
The operator for \emph{agent $i$'s belief} is denoted by
\vspace{-2mm}
\[ B_i (P) \,\, :=\, \{ x \in X \mid \Pi_i(x) \subseteq Cl_{i}(Int_{i}(P)),\}, \vspace{-1mm}\]
while its dual $\langle B_i\rangle (P)  :=  X- B_i (X-P)$ captures ``doxastic possibility''.

\paragraph{Fallible Knowledge}\footnote{Notions of knowledge that do not imply absolute certainty are called \emph{fallible}. In our setting, only the ``hard'' evidence $\Pi_i(x)$ provides ``infallible'' knowledge.} 
We say that \emph{an agent $i$ ``knows'' $P$ at state $x$} if she has a \emph{factive justification} (= true uncontroversial evidence) for $P$: there is some $U \in \tau_i^*(x)$, with $U \subseteq P$, and $Cl_{i}(U) \supseteq \Pi_i(x)$.
Equivalently, iff $x$ is in the locally dense interior of $P$ for $i$: $x\in Int_{i}(P)$ and $Cl_{i}(Int_{i}(P)) \supseteq \Pi_i (x)$.
We denote by $K_i (P)$ the proposition ``$i$ knows $P$'':
\[ K_i (P) \, :=\, \{x\in X\mid \exists U \in \tau_i : x\in U\subseteq P \mbox{ and }  Cl_i (U) \supseteq \Pi_i(x) \}.\vspace{-2mm} \]
\noindent In words: \emph{knowledge is correctly justified belief}.\footnote{Note the difference between \emph{correctly} justified belief and \emph{true} justified belief~\cite{aybukephd}.}
In contrast to ${[\forall]}_i(P)$, this type of knowledge is ``defeasible'': it can be defeated by ``misleading'' evidence~\cite{baltagevidence}.
Its dual 
$\langle K_i\rangle (P) \, :=\, X- K_i (X-P)$ captures a notion of ``soft epistemic possibility''.

\paragraph{Knowledge as Interior in the Dense-Open Topology} We characterized knowledge $K_i(P)$ of a proposition $P$ as the locally dense interior of $P$ (for $i$). Equivalently, we can characterize knowledge as the \emph{interior in the dense-open topology $\tau_i^{dense}$}: 
\vspace{-1mm}\[ K_i (P)= Int_{\tau_i^{dense}}(P).\]
That is, under our characterization, $K_i(P)$ coincides with the interior of $P$ in agent $i$'s topology of locally dense open sets. 

\paragraph{Connections between operators} For $P\subseteq X$ and $i\in A$, we have:
\vspace{-1mm}
\[ {[\forall]}_i (P) \subseteq \Box_i (P), \,\, \, \, \,\, \, \, K_i (P) \subseteq B_i (P).
\vspace{-2mm}
\]
\noindent In words: \emph{hard evidence is also soft evidence}, and \emph{agents believe the things they know}. More interestingly, we have the following equations, which will allow us to define belief and knowledge as abbreviations in one of our formal languages:
%\vspace{-1mm}
\[ B_i (P) = {[\forall]}_i \Diamond_i \Box_i (P),  \,\, \, \, \,\, \, \,
K_i (P) = \Box_i (P) \cap B_i (P), \,\, \, \, \,\, \, \, B_i (P) = \langle K_i\rangle K_i (P).\vspace{-1mm}\]
The first equation follows directly from the characterizations of ${[\forall]}_i$, $\Diamond_i$, and $\Box_i$. The second states that having a correct justification of $P$ amounts to having a justification for $P$, as well as a piece of factive evidence $U$ for $P$.\footnote{The left-to-right inclusion of this equation is immediate; for the converse inclusion, recall that agent $i$ has a justification for $P$ at $x$ iff $Int_i(P)$ is locally dense at $x$. By definition, $Int_i(P)$ contains $U$, which contains $x$, hence, the justification is correct.} Finally, the last equation says that belief is also definable in terms of fallible knowledge: \emph{belief is the ``soft possibility'' of knowledge}.\footnote{This observation was taken by Stalnaker as the basis of a version of knowledge-first epistemology, which differs from the more well-known Williamsonian knowledge-first conception, by the fact that it is positively introspective.}   

\subsection{Group Evidence, Group Belief and Group Knowledge}

The most natural way to generalize the above notions from individual agents $i\in A$ to \emph{groups} $I\subseteq A$ is to pool together all the hard and soft evidence possessed by agents in $I$ into a group partition $\Pi_I$ and a group evidential topology $\tau_I$.

\paragraph{Group Evidence: join partition and join topology}
Given a group $I\subseteq A$ and a topo-e-model $\mfM = {(X, \Pi_i,\tau_i, \llbracket \cdot \rrbracket)}_{i\in A}$, \emph{group $I$'s hard evidence
at a state $x\in X$} is the intersection (conjunction) of all individual group members' hard evidence at state $x$. 
The group's hard-evidence sets form again a partition $\Pi_I$, 
called \emph{group $I$'s partition}, which coincides with the join (supremum) $\bigvee \Pi_i$ of all individual partitions (in the lattice of partitions on $X$ with inclusion):\footnote{This is the smallest partition $\Pi_I$ that includes that union $\bigcup_{i\in I} \Pi_i$.}
\[ \Pi_I:\, \, = \textstyle\bigvee_{i\in I}\Pi_i = \{\Pi_I (x) \mid x\in X\}, \,\, \mbox{ where } \,\,
\Pi_I (x):=\textstyle\bigcap_{i \in I}\Pi_i(x). 
\vspace{-1mm}
\]
Similarly, \emph{group $I$'s evidential topology} $\tau_I$ is 
just the \emph{join topology}
\[ \tau_I\,\, :=\,\, \textstyle\bigvee_{i \in I} \tau _i \, \, (=\mbox{the topology generated by the union $\bigcup_{i\in I} \tau_i$}). \]
To motivate this, note that $\tau_I$ is also
generated by \emph{the group's ``joint evidence''}, i.e.\ by the family of all non-empty intersections $\bigcap_{i\in I} U_i\neq\emptyset$ of individual pieces of soft evidence $U_i$ possessed by any of the group's members $i\in I$.
As before, we use $Int_I$ and $Cl_I$ for the interior and closure operators w.r.t.~$\tau_I$.

\paragraph{Group Operators}
A \emph{group operator} on a set $X$ is a group-indexed family $\Gamma={\{\Gamma_I\}}_{I\subseteq A, I\neq \emptyset}$ of propositional operators $\Gamma_I:\mathcal{P}(X)\to \mathcal{P}(X)$. As usual, when $I=\{i\}$ is a singleton consisting of a single agent, we write $\Gamma_i$ instead of $\Gamma_{\{i\}}$.

\paragraph{Examples: group evidence, group belief, group knowledge}
As important examples, we define \emph{group analogues} of all the individual attitudes, by simply \emph{applying the same definitions to the group partition and the group's soft evidence}:
\[
  \begin{array}{lrl}%
    {[\forall]}_I (P) & := & \{x\in X \mid \Pi_I(x)\subseteq P\},                                                               \\
    \Box_I (P)        & := & \{x\in X \mid \exists U \in \tau_I : x\in U\subseteq P \}  = Int_I (P),                              \\
    B_I (P)           & := & \{x\in X\mid \exists U \in \tau_I : U\subseteq P \mbox{ and } Cl_I (U)\supseteq \Pi_I(x) \}        \\
                      & =  & \{x\in X\mid  \Pi_I(x)\subseteq Cl_{I} (Int_{I}(P))\},                                             \\
    K_I (P)           & := & \{x\in X\mid \exists U \in \tau_I : x \in U \subseteq P \mbox{ and } Cl_I(U) \supseteq \Pi_I(x) \} \\
                      & =  & \{x\in X \mid x\in Int_{I}(P) \mbox{ and } \Pi_I (x)\subseteq Cl_{I}(Int_{I} (P))\}.
  \end{array}
\]
The \emph{Diamond (possibility) operators} $\Diamond_I (P)$, $\langle B_I\rangle (P)$ and $\langle K_I\rangle (P)$ are defined in the same way (as De Morgan duals) for groups $I\subseteq A$ as for individuals $i\in A$. 

\emph{Group operators are connected in the same way as the individual ones}: we have 
${[\forall]}_I (P) \subseteq \Box_I (P)$, $K_I (P) \subseteq B_I (P)$, $B_I (P) = {[\forall]}_I \Diamond_I \Box_I (P)$, $K_I (P) = \Box_I (P) \cap B_I (P)$ and $B_I (P) = \langle K_I\rangle K_I (P)$.  As a consequence, we will \emph{define group belief and group knowledge as abbreviations} in one of our formal languages.

\paragraph{Group Knowledge is Interior in the Dense-Open Join Topology}
Similar to the alternative characterization of individual knowledge $K_i$ as interior w.r.t.\ the individual dense-open topology $\tau_i^{dense}$, we can equivalently characterize group knowledge $K_I$ as the \emph{interior operator w.r.t.\ the dense-open topology $\tau_I^{dense}=\tau_I^{dense(\Pi_I)} $ associated to the join topology $\tau_I$}:
$K_I(P) \,\, = \,\, Int_{\tau_I^{dense}} (P)$.

\paragraph{Interpreting the group operators}
The above definitions seem natural from a mathematical point of view. But what is the \emph{interpretation} of these group operators? Are they just formal analogues of the individual ones, with no intrinsic meaning or practical application, or do they capture some useful group attitudes?
To give a partial answer, we need the following generalized notions:

\paragraph{Monotonicity and Distributedness}
A group operator $\Gamma$ is \emph{monotonic} if it satisfies the \emph{Group Monotonicity} condition: $ I\subseteq J \, \, \mbox{ implies } \,\, \Gamma_I (P) \subseteq \Gamma_J (P)$. The operator $\Gamma$ is \emph{distributed} if it satisfies the \emph{Group Distributedness} condition:
\[ x \in \Gamma_I (P) \, \, \mbox{ iff } \, \, x\in \textstyle\bigcap_{i\in I} \Gamma_i (P_i) \mbox{ for some } {(P_i)}_{i\in I} \mbox{ s.t. } \textstyle\bigcap_{i\in I} P_i\subseteq P. \]
Distributedness implies that \emph{$\Gamma$'s behavior on sets can be recovered from its behavior on singletons}.\footnote{In philosophical jargon, the distributed group operators are \emph{summative} attitudes.}
Moreover, it is easy to see that \emph{every distributed operator is monotonic}. 

\paragraph{Example: distributed knowledge in relational structures}
The standard example of a distributed operator is the classical relational concept of \emph{distributed knowledge} $D_I$ in a multi-agent epistemic Kripke model, defined as the \emph{Kripke modality for the intersection of all agents' accessibility relations}. This notion satisfies Group Distributedness (and hence also Group Monotonicity).\footnote{Indeed, Group Monotonicity is the main axiom for $D_I$ in standard Epistemic Logic.} This  fits the intended meaning of $D_I$: a group's distributed knowledge is simply the result of ``adding'' or ``aggregating'' all the knowledge possessed by the individuals. 

\paragraph{Group evidence is distributed evidence}
It is easy to see that our group evidences operators ${[\forall]}_I$ and $\Box_I$ are distributed (and thus also monotonic). This provides the promised interpretation: \emph{a group's evidence is the result of ``adding'' or ``aggregating'' all the evidence possessed by the individuals}. 

\subsection{The ``Problem'' of Non-Monotonicity}
Unfortunately, we cannot use the ``distributed knowledge'' interpretation for our topological group knowledge and belief operators: \emph{neither $K_I$ nor $B_I$ are distributed group operators}, and \emph{they do not even satisfy the weaker Group Monotonicity property}! 
Moreover, a group may even \emph{fail to (know or even just) believe} facts that are \emph{known by} \emph{all} its members: in general, we have $\bigcap_{i\in I} K_i (P) \not\subseteq B_I (P)$, as shown by the following counterexample.\begin{short}
\end{short}

\begin{example}\label{counterexample}
  Let $\mfM = {(X, \Pi_i, \tau_i, \llbracket \cdot \rrbracket)}_{i\in A}$ 
  be given by: $\prop = \{p\}$; $A = \{a,b\}$;
  $X = \{w_1,w_2,w_3,w_4\}$; $\llbracket p \rrbracket=\{w_1, w_2, w_4\}$;
  partitions $\Pi_a = \Pi_b=\{\{X\}\}$; and topologies $\tau_a$ and $\tau_b$ are  generated respectively by subases 
  $\mE_a^0=\{\{w_2, w_4\}, \{w_3, w_4\}\}$ and
  $\mE_b^0=\{\{w_1, w_2\}, \{w_1, w_3\}\}$, representing each agent's primary or ``direct'' evidence.
  We can then calculate the topologies $\tau_a$, $\tau_b$ and  $\tau_{\{a,b\}} =\tau_a \vee \tau_b= \mathcal{P}(X)$. Note that $\tau_{\{a,b\}}$ is the discrete topology, generated by $\mE_A^0=\mE_a^0\cup \mE_b^0$.
  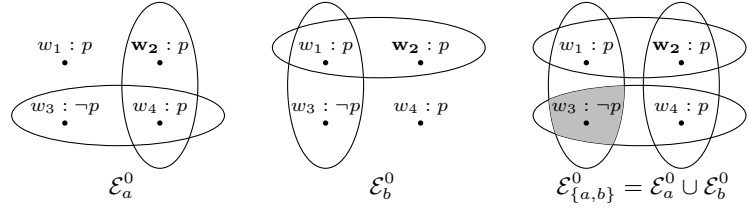
\begin{figure}[H]
    \centering
    \begin{tikzpicture}[scale=0.5, baseline={(0,-0.5)}] % Alice
      \draw(1.5,3.2) ellipse (1 and 2.2); % Alice rechtop
      \draw(0.4,2.4) ellipse (2.8 and 0.8); % Alice liggend
      % Worlds
      \fill (-1,2.2) circle (2pt); % w_3
      \fill (1.5,2.2) circle (2pt); % w_4
      \fill (1.5,3.8) circle (2pt); % w_2
      \fill (-1,3.8) circle (2pt); % w_1
      % Labels
      \node[below] at (0.5, 1.1) {$\mE_a^0$}; 
      \node[above] at (-1,2.2) {\scriptsize$w_3:\neg p$};
      \node[above] at (1.5,2.2) {\scriptsize$w_4:p$}; 
      \node[above] at (1.5,3.8) {\scriptsize$\mathbf{w_2}:p$};
      \node[above] at (-1,3.8) {\scriptsize$w_1:p$};
    \end{tikzpicture}
    \hspace{0.4cm}
    \begin{tikzpicture}[scale=0.5, baseline={(0,-0.5)}] % Bob
      \draw (-1,3.2) ellipse (1 and 2.2); % Bob rechtop
      \draw(0.4,4.2) ellipse (2.8 and 0.8); % Bob liggend
      % Worlds
      \fill (-1,2.2) circle (2pt); % w_3
      \fill (1.5,2.2) circle (2pt); % w_4
      \fill (1.5,3.8) circle (2pt); % w_2
      \fill (-1,3.8) circle (2pt); % w_1
      % Labels
      \node[below] at (0.5, 1.1) {$\mE_b^0$}; 
      \node[above] at (-1,2.2) {\scriptsize$w_3:\neg p$};
      \node[above] at (1.5,2.2) {\scriptsize$w_4:p$}; 
      \node[above] at (1.5,3.8) {\scriptsize$\mathbf{w_2}:p$};
      \node[above] at (-1,3.8) {\scriptsize$w_1:p$};
    \end{tikzpicture}
    \hspace{0.4cm}
    \begin{tikzpicture}[scale=0.5, baseline={(0,-0.5)}] % Join
      \def\secondellipse{(-1,3.2) ellipse (1 and 2.2)}
      \draw (-1,3.2) ellipse (1 and 2.2); % Bob rechtop
      \draw(1.5,3.2) ellipse (1 and 2.2); % Alice rechtop
      \def\firstellipse{(0.4,2.4) ellipse (2.8 and 0.8)}
      \draw(0.4,2.4) ellipse (2.8 and 0.8); % Alice liggend
      \draw(0.4,4.2) ellipse (2.8 and 0.8); % Bob liggend
      % Colour intersection
      \begin{scope}
        \clip \firstellipse;
        \fill[lightgray] \secondellipse;
      \end{scope}
      % Worlds
      \fill (-1,2.2) circle (2pt); % w_3
      \fill (1.5,2.2) circle (2pt); % w_4
      \fill (1.5,3.8) circle (2pt); % w_2
      \fill (-1,3.8) circle (2pt); % w_1
      % Labels
      \node[below] at (0.5, 1.1) {$\mE_{\{a,b\}}^0=\mE_a^0 \cup \mE_b^0$}; 
      \node[above] at (-1,2.2) {\scriptsize$w_3:\neg p$};
      \node[above] at (1.5,2.2) {\scriptsize$w_4:p$}; 
      \node[above] at (1.5,3.8) {\scriptsize$\mathbf{w_2}:p$};
      \node[above] at (-1,3.8) {\scriptsize$w_1:p$};
    \end{tikzpicture}
    \vspace{-6mm}
    \caption{The model from \Cref{counterexample}. 
     For each topology, we draw only the primary evidence (the subbases $\mE_a^0$, $\mE_b^0$ and $\mE_A^0=\mE_a^0\cup \mE_b^0$), and omit the single-cell partitions. Take $P=\llbracket p\rrbracket=\{w_1, w_2, w_4\}$. At $w_2$, $a$ has $\tau_a$-dense factive evidence $U_a=\{w_2, w_4\}$ for $P$, and $b$ has $\tau_b$-dense factive evidence $U_b=\{w_1, w_2\}$ for $P$, hence $w_2\in K_a (P)\cap K_b(P)$. But
     $\{w_3\}=\{w_3,w_4\}\cap \{w_1, w_3\}\in \mE_A$ is disjoint from $P$, hence $w_2\not\in B_{\{a,b\}}(P)$.}\label{fig:weather2}
  \end{figure}
\end{example}

\vspace{-7mm}

The failure of Group Monotonicity was taken as an \emph{objection} against the topological definition of group knowledge~\cite{aldomaster,saulmaster,aybukephd}. Consequently, Ramírez~\cite{aldomaster} and Fernández~\cite{saulmaster,saulpaper} proposed alternative notions of group knowledge in topo-evidence models, designed to ``save'' Group Monotonicity. Here we only present Fernández' solution, because of its relevance for our discussion and our axioms.

\paragraph{Fernández' approach: topological distributed knowledge}
In his Master thesis~\cite{saulmaster}, Fernández proposes a different topological definition of group knowledge, later developed and investigated by Baltag et al.~\cite{saulpaper}.  As we saw, individual knowledge $K_i$
for agent $i$ coincides with interior in agent $i$'s dense-open topology $\tau_i^{dense}$; while the virtual group knowledge operator $K_I$ coincides with interior in the \emph{group's} dense-open topology $\tau_I^{dense}$ (which is the dense-open topology for the pair $(\tau_I, \Pi_I)$, obtained by taking the joins of all the individual partitions and respectively all individual topologies). Fernández' proposal is to use instead
the natural topological analogue of distributed knowledge $D_I$, as the \emph{interior operator w.r.t.
the join $\bigvee_{i\in I} \tau_i^{dense}$
of all individuals' dense-open topologies}:
\vspace{-1mm}
\[ D_I (P) \, := \, Int_{\bigvee_{i\in I} \tau_i^{dense}} (P). 
\vspace{-1mm}\]
This topological notion generalizes the relational definition of distributed knowledge in $S4$ (or $S5$) Kripke models,\footnote{$S4$-frames are a special case of topological spaces (the Alexandroff spaces): the standard Kripke modality coincides with the interior operator in this case, and the relational definition of $D_I$ coincides with Fernández' topological definition.} and moreover \emph{the topological $D_I$ is indeed ``distributed''} (in the above sense), and it thus also satisfies Group Monotonicity.

\subsection{Dynamics: A Communication-Based View on Group Knowledge}\label{sec:shareI} 

In contrast to the mentioned authors, we will argue that the non-monotonic notion $K_I$ fits better than $D_I$ with a 
\emph{communication-based interpretation of group knowledge}. In the context of distributed systems (see e.g.~\cite{ogdistrkn}), the concepts of knowledge and communication are intertwined. A realistic notion of ``virtual'' group knowledge should be ``realizable'' (as individual knowledge) through in-group communication. As we will see, $K_I$ fulfills this desideratum (while $D_I$ does not), so $K_I$ is in fact more realistic and useful than $D_I$. 
To show this, we look at the group dynamics induced by evidence-exchange. 

\paragraph{Evidence-Sharing Dynamics}
For each group $I\subseteq A$, one can define an operator $\share(I)$ on topo-e-models, that represents the action of \emph{sharing all evidence (soft and hard) within group $I$}. This is a ``semi-public'' action in the sense of~\cite{subgroups}: intuitively, the outsiders $j \not\in I$ know that this sharing is happening within group $I$, but they do not necessarily have access to the evidence that is being shared; in fact, it is common knowledge among \emph{all} agents that this information-sharing event $\share(I)$ is happening; while the insiders $i \in I$ have more information: they gain common knowledge of which evidence is being shared among them.
This is an ``evidential'' version of other group-sharing operators in the literature: the ``deliberation'' action in~\cite{Goldbach15:ModellingDemocraticDeliberation}, the ``share'' action in~\cite{BaltagEtAl18:GroupKnowledgeInterrogative}, the ``resolution'' action in~\cite{resolution}, or the semi-public sharing actions considered in~\cite{subgroups}. 

\begin{definition}\label{def:share}
  Given a topo-e-model $\mathfrak{M} = {(X, \Pi_i, \tau_i, \llbracket \cdot \rrbracket)}_{i\in A}$ and a group $I \subseteq A$, \emph{the updated model}
  $\mfM({\share_I}):= (X, \Pi({\share_I}), \tau({\share_I}), \llbracket \cdot \rrbracket)$ has the same set of states and valuation, while the new partitions and topologies are given by:
  \vspace{-2mm}
  \[
    \begin{array}{llllll}
        {\tau_i({\share_I})} &= \tau_I, &\quad\quad \Pi_i({\share_I}) &= \Pi_I && \mbox{ (for ``insiders'' $i\in I$)},\\ 
        {\tau_j({\share_I})} &= \tau_j &\quad\quad \Pi_j({\share_I}), &= \Pi_j && \mbox{ (for ``outsiders'' $j\not\in I$)},
    \end{array}
      \vspace{-2mm}
    \]
\noindent where $\tau_I$ is the group's topology, and $\Pi_I$ is the group's partition. 
\end{definition}
Since the set of states $X$ and the valuation $\llbracket \cdot \rrbracket$ stay the same when moving from $\mathfrak{M}$ to the updated model $\mfM({\share_I})$, we can talk about the same
semantic propositions $P\subseteq X$ in both models. However, the meaning of our operators ${[\forall]}_i, \Box_i, K_i, B_i$ differs in the two models!
So we use ${[\forall]}_i^\mathfrak{M}, \Box_i^\mathfrak{M}, K_i^\mathfrak{M}, B_i^\mathfrak{M}$ to denote the operators in the model $\mathfrak{M}$, and ${[\forall]}_i^{\mathfrak{M}(\share_I)}$, $\Box_i^{\mathfrak{M}(\share_I)}$, $K_i^{\mathfrak{M}(\share_I)}$, $B_i^{\mathfrak{M}(\share_I)}$ to denote the operators
in the updated model ${\mathfrak{M}(\share_I)}$.

With these notations, we can now make the following key observation:

\begin{proposition}\label{Virtual-K}
Let $\mathfrak{M} = {(X, \Pi_i, \tau_i, \llbracket \cdot \rrbracket)}_{i\in A}$ be a topo-e-model. Then for every proposition $P\subseteq X$, every group $I\subseteq A$ and every group member $i\in I$, we have: 
\vspace{-2mm}
\[ {[\forall]}_i^{\mathfrak{M}(\share_I)}(P) = {[\forall]}_I^\mathfrak{M}(P), \,\,\,\,\,\,\, \,  \,\,\,\,\,\,\,
\Box_i^{\mathfrak{M}(\share_I)}(P) = \Box_I^\mathfrak{M}(P),\]
\vspace{-5mm}
\[B_i^{\mathfrak{M}(\share_I)} (P)= B_I^\mathfrak{M} (P), \,\,\,\,\,\,\, \,  \,\,\,\,\,\,\,
K_i^{\mathfrak{M}(\share_I)}(P) = K_I^\mathfrak{M}(P).\]
In words: the \emph{individual} group members' hard information ${[\forall]}_i$, soft evidence $\Box_i$, knowledge $K_i$, and belief $B_i$
\emph{after evidence-sharing} match the corresponding \emph{group} attitudes ${[\forall]}_I$, $\Box_I$, $K_I$, $B_I$ \emph{before} the evidence-sharing.
\end{proposition}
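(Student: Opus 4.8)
The plan is to prove all four equalities by directly unfolding the definitions; there is essentially no computation involved. Fix a group $I\subseteq A$, an insider $i\in I$, and a proposition $P\subseteq X$.

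First I would record the one structural fact that drives everything: by \Cref{def:share}, in the updated model $\mfM(\share_I)$ agent $i$ is equipped with the information partition $\Pi_i(\share_I)=\Pi_I$ and the evidential topology $\tau_i(\share_I)=\tau_I$, where $\Pi_I=\bigvee_{k\in I}\Pi_k$ and $\tau_I=\bigvee_{k\in I}\tau_k$ are \emph{exactly} the group partition and group topology of $\mfM$. This identification is legitimate because the update changes neither the carrier $X$ nor the individual $\tau_k,\Pi_k$ for $k\in I$, so these joins denote the same semantic objects in both models. From this I immediately get, for every $x\in X$, that $\Pi_i^{\mfM(\share_I)}(x)=\Pi_I(x)$, and that agent $i$'s interior and closure operators in the updated model coincide with the group ones: $Int_i^{\mfM(\share_I)}=Int_I$ and $Cl_i^{\mfM(\share_I)}=Cl_I$.

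Next I would substitute these identities into the explicit set-builder definitions. The individual operators ${[\forall]}_i,\Box_i,B_i,K_i$ of \Cref{backgr-topsingle} are defined purely in terms of the agent's own cells $\Pi_i(\cdot)$ and of $Int_i,Cl_i$, and the group operators ${[\forall]}_I,\Box_I,B_I,K_I$ of \Cref{sec:Background} are given by literally the same formulas with $\Pi_i(\cdot),Int_i,Cl_i$ replaced by $\Pi_I(\cdot),Int_I,Cl_I$ (for $B_I$ and $K_I$ using their second, equivalent characterizations $B_I(P)=\{x\mid\Pi_I(x)\subseteq Cl_I(Int_I(P))\}$ and $K_I(P)=\{x\mid x\in Int_I(P)\mbox{ and }\Pi_I(x)\subseteq Cl_I(Int_I(P))\}$). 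Term-by-term substitution then yields all four equalities at once, e.g.
\[ {[\forall]}_i^{\mfM(\share_I)}(P)=\{x\mid\Pi_i^{\mfM(\share_I)}(x)\subseteq P\}=\{x\mid\Pi_I(x)\subseteq P\}={[\forall]}_I^{\mfM}(P), \]
and analogously for $\Box_i,B_i,K_i$.

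The honest assessment is that there is no genuine obstacle here: the entire content is the conceptual observation that the insiders' private epistemic state after sharing \emph{is} the group's pooled evidential state. The only point that needs any care is the one flagged above — verifying that the symbols $\tau_I,\Pi_I$ in \Cref{def:share} are the very same objects as the group topology and partition used to define ${[\forall]}_I,\Box_I,B_I,K_I$ — after which the four identities are immediate.
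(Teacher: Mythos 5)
Your proof is correct and matches the paper's (implicit) argument: Proposition~\ref{Virtual-K} is presented as an immediate consequence of \Cref{def:share}, since the insiders' updated partition and topology are by definition the group partition $\Pi_I$ and group topology $\tau_I$ of the original model, and the group operators ${[\forall]}_I,\Box_I,B_I,K_I$ are defined by literally the same formulas as the individual ones with these objects substituted in. One small wording slip: the update \emph{does} change the individual $\tau_k,\Pi_k$ for insiders $k\in I$; what actually legitimizes your identification is simply that the symbols $\tau_I,\Pi_I$ in \Cref{def:share} are computed from the \emph{original} model's data, which is all your substitution argument needs.
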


\paragraph{Interpretation}
This result provides a uniform interpretation of all our group operators: \emph{they simply ``pre-encode'' the individual members' attitudes after in-group evidence-sharing}! This justifies our name of ``virtual group knowledge'', and vindicates our topological definition of $K_I$ and $B_I$, from a communication-based perspective.
\emph{Topological group knowledge/belief is simply the knowledge/belief that the individual members could acquire by sharing all their evidence}.\footnote{As we will see, at the syntactic level, the above equalities have to be replaced by more complex Reduction laws, because the same sentence $\phi$ may denote different sets of worlds in $\mathfrak{M}$ and in $\mathfrak{M}(\share_I)$.}

\paragraphnopoint{Why not directly share knowledge?}
At first sight, it might seem that Fernández' topological distributed knowledge $D_I$ could be similarly given a communication-based interpretation, in terms of the epistemic situation after agents share all their \emph{knowledge} (rather than evidence). Such knowledge-sharing actions were considered in~\cite{Goldbach15:ModellingDemocraticDeliberation,BaltagEtAl18:GroupKnowledgeInterrogative,resolution,subgroups}, but all these proposals assumed an $S5$ setting, in which knowledge is absolutely certain and fully introspective: agents can infallibly distinguish what they know from what they don't know, so they can share exactly only what they know. In our non-$S5$ context, this is not realistic: when interested in fallible knowledge, we cannot assume such infallible powers of discrimination. Agents cannot be sure which of their beliefs (or pieces of evidence) are true and which are not, and they cannot select only those that constitute ``knowledge''. The best the agents can do is to either share all their \emph{beliefs}, or else share all their \emph{evidence} (as in $\share_I$), and then use this to build new consistent and justified beliefs (and thus obtain new knowledge).

\paragraph{Fallible knowledge must violate Group Monotonicity} From a communication based perspective on group knowledge, it would be questionable to impose Group Monotonicity on a \emph{fallible} notion of knowledge.
As widely recognized in the field of Belief Revision Theory~\cite{AGM1985}, the dynamics of belief (and so also the dynamics of fallible knowledge) must be \emph{non-monotonic}: if an agent fallibly knows a proposition, then further evidence might defeat that knowledge again.
The failure of Group Monotonicity is then simply an inescapable consequence of this non-monotonic dynamics: after receiving new ``soft'' evidence from other members of the group, agents may radically revise their beliefs, and thus may lose some of their prior ``knowledge''. This is a feature, not a bug: any realistic notion of (fallible) group knowledge will invalidate Monotonicity.

\paragraph{A concrete scenario for \Cref{counterexample}} To illustrate this point more concretely, consider the following scenario, underlying the model in \Cref{counterexample}.
Daisy was brutally murdered. Detective Bob is leading the case, and Alice is the jury foreperson in the murder trial. The accused is Daisy's husband: the lawyer Charles. The evidence at hand concerns whether Charles got caught in the act ($C:=\{w_1,w_2\}$ in \Cref{counterexample}), as well as his intent to kill ($I:=\{w_2,w_4\}$). Both killing and intent to kill are a crime. Therefore, both $C$ and $I$ individually imply that Charles is \emph{guilty} (proposition $p$ in \Cref{counterexample}). In the actual world ($w_2$), both $C$ and $I$ are factive. Charles is innocent only in world $w_3$. 

But now suppose Bob's evidence was deemed inadmissible, hence Alice does not have access to it. Conversely, Bob does not have access to Alice's evidence, as it is obtained through witness testimony in court. 
Moreover, suppose Alice has the following evidence:
\vspace{-2mm}
\begin{itemize}[$\bullet$]
    \item $I=\{w_2,w_4\}$: Testimonial evidence from Charles' colleague reveals that Charles was inquiring at work about the legalities of collecting life insurance after sudden death. Moreover, he did this only a week after having taken out a life insurance policy for Daisy, and days before her death. This evidence of intent is factive. 
    \item $\neg C=\{w_3,w_4\}$: Testimonial evidence from Charles' friend Ed provides an alibi for Charles at the time of the crime: they were watching tv at home. Ed lied under oath, and therefore this evidence is not factive. 
\end{itemize} 
Bob has the following evidence: 
\begin{itemize}[$\bullet$]
    \item $C=\{w_1,w_2\}$: An alcoholic, who was drunk when he witnessed the murdering of Daisy, identified Charles as the killer in a statement to the police. His statement was not confirmed by any third party: the evidence, although factive, is deemed inadmissible on grounds of being unreliable.  
    \item $\neg I=\{w_1,w_3\}$: Charles handed over to the police his periodical handwritten love letters to Daisy, dating back more than ten years, as evidence against intent. The letters, which appeared to be (and were, in fact) fabricated over the past week, were deemed inadmissible. 
\end{itemize}
Bob and Alice both individually know (fallibly) that Charles is guilty. However, sharing their evidence would result in reasonable doubt, since their (factive) individual evidence is defeated by some of the other's (non-factive) evidence.
\vspace{-1mm}

\section{Logics and Axiomatizations}\label{sec:Logics}
\vspace{-2mm}

In this section we introduce our logics for evidence, knowledge, belief and sharing of evidence, and present our main results on completeness and decidability.

\vspace{-2mm}

\subsection{The Logic of Group Evidence}

Our language of group-evidence $\mathcal{L}_{\B {[\A]}_I}$ will have modalities for soft and hard (group) evidence. We also study a fragment $\mathcal{L}_{{\B[\A]}_{i,A}}$, obtained by restricting these modalities to individuals and the full group $A$.

\paragraph{Notational convention} For concision, we use the symbol $\alpha\in \{A\}\cup A$ to denote either singletons $\{i\}\subseteq A$ or $A$ itself, 
when considering notions of group evidence, knowledge, and belief restricted to individuals or the full group $A$.

\begin{definition}[Syntax and Semantics of $\mathcal{L}_{{\B[\A]}_I}$ and $\mathcal{L}_{{\B[\A]}_{i,A}}$]\label{def:our-full-language}
    The language $\mathcal{L}_{{\B[\A]}_I}$ of evidence is defined recursively as
    \begin{align*}
        \phi::=p\mid\neg \phi\mid \phi \wedge \phi \mid \B_I \phi \mid {[\A]}_I\phi
         \vspace{-1mm}
    \end{align*}
    where $p \in \prop$ and $I$ is any group. The fragment $\mathcal{L}_{{\B[\A]}_{i,A}}$ of $\mathcal{L}_{{{\B[\A]}_I}}$ is obtained by restricting the modalities to $\B_\alpha$ and ${[\A]}_\alpha$, with $\alpha \in \{A\}\cup A$.
    For simplicity, we will write ${[\forall]}_i$ and $\Box_i$ instead of ${[\forall]}_{\{i\}}$ and $\Box_{\{i\}}$.
    
    Given a topo-e-model $\mathfrak{M} = {(X, \Pi_i, \tau_i, \llbracket \cdot \rrbracket)}_{i\in A}$, we define an \emph{interpretation} function $\llbracket \cdot \rrbracket^\mfM$, mapping every formula 
    $\phi$ of  $\mathcal{L}_{{\B[\A]}_{I}}$ to a proposition $\llbracket\phi\rrbracket^\mathfrak{M}\subseteq X$. 
    The interpretation extends the valuation, so whenever the model is understood we can skip the superscript without ambiguity, writing $\llbracket\phi\rrbracket$.
    The definition is by recursion on formulas: for atoms, $\llbracket p\rrbracket$ is just the valuation, and we let
    \[
    \begin{array}{llll}
        \llbracket \neg \phi\rrbracket & :=  X\setminus \llbracket \neg \phi\rrbracket, \,\,\, &
        \llbracket\phi \wedge \psi\rrbracket & :=  \llbracket\phi\rrbracket \cap  \llbracket\psi\rrbracket,\\
        \llbracket\B_I \phi\rrbracket & := \Box_I (\llbracket \phi \rrbracket)= Int_I (\llbracket \phi \rrbracket),\, &
        \llbracket [\A_I] \phi \rrbracket & := [\forall_I] (\llbracket \phi \rrbracket) := \{x\in X: \Pi_I(x)\subseteq \llbracket \phi \rrbracket\}.
    \end{array}
    \]
\vspace{-3mm}

\par\noindent The interpretation for $\mathcal{L}_{{\B[\A]}_{i,A}}$ is simply the restriction of $\llbracket\cdot\rrbracket$ to this language. As usual, we sometimes write $x\models \phi$ for $x\in \llbracket \phi\rrbracket$.
\end{definition}

\vspace{-1mm}

\paragraph{Abbreviations}
The Boolean connectives $\vee$, $\to$, $\longleftrightarrow$, and the modality $\Diamond_I$ (dual to $\Box_I$) are defined as abbreviations as usual.
\emph{Knowledge and belief are also abbreviations}: $B_I \phi \, :=\, {[\forall]}_I \Diamond_I \Box_I \phi$ and $K_I \phi  \, :=\,  \Box_I \phi \wedge B_I \phi$. It is easy to see that we have
$\llbracket B_I \phi\rrbracket = B_I (\llbracket \phi\rrbracket)$, $ \llbracket K_I \phi\rrbracket = K_I (\llbracket \phi\rrbracket)$.

\begin{theorem}\label{corr:compness-boxall-frag}
    The proof system $\bm{{\B[\A]}_{I}}$ from \Cref{pf-syst} is sound and complete for $\mathcal{L}_{{\B[\A]}_I}$ w.r.t.\ multi-agent topo-e-models, and the logic $\mathcal{L}_{{\B[\A]}_I}$ is decidable. All these properties are inherited by the proof system $\bm{{\B[\A]}_{i,A}}$ and the logic $\mathcal{L}_{{\B[\A]}_{i,A}}$.
\end{theorem}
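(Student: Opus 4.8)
The plan is to prove soundness by direct verification and completeness by a canonical-model argument that is then turned into a finite topo-e-model, in the spirit of completeness proofs for distributed-knowledge logics; decidability will come for free from the resulting finite model property.

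\emph{Soundness.} I would check each axiom of \Cref{pf-syst} directly against the semantics of \Cref{def:our-full-language}. The $S4$ axioms for $\B_I$ hold because $\Box_I = Int_I$ is a topological interior operator; the $S5$ axioms for $[\A]_I$ because $\Pi_I$ is a partition; the interaction axiom $[\A]_I\phi\to\B_I\phi$ because of the constraint $\Pi_I\subseteq\tau_I$; and the group-monotonicity axioms $\B_J\phi\to\B_I\phi$ and $[\A]_J\phi\to[\A]_I\phi$ (for $J\subseteq I$) because $\tau_J\subseteq\tau_I$ and $\Pi_I$ refines $\Pi_J$, which in turn follow from $\tau_I=\bigvee_{i\in I}\tau_i$ and $\Pi_I(x)=\bigcap_{i\in I}\Pi_i(x)$. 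That the rules preserve validity is immediate.

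\emph{Completeness.} I would first form the canonical model over maximal $\bm{{\B[\A]}_I}$-consistent sets, with, for each group $I$, the canonical relation $R_{\B_I}$ (a preorder, by $S4$) and $R_{[\A]_I}$ (an equivalence, by $S5$); Lindenbaum and the truth lemma are standard. The obstacle --- exactly the one familiar from the logic of distributed knowledge --- is that the canonical model is \emph{not} a topo-e-model: group monotonicity only gives $R_{\B_I}\subseteq\bigcap_{i\in I}R_{\B_i}$ and $R_{[\A]_I}\subseteq\bigcap_{i\in I}R_{[\A]_i}$, and these inclusions can be strict, so the individual evidence structures need not join up to the group ones. To repair this I would filtrate through the subformulas of the target formula (closed under single negation) to obtain a finite model, and then, following the standard treatment of distributed attitudes (passing through ``pseudo-models'' and a state-splitting/unraveling step that supplies, for every $\B_I\psi$ missing from a state, a genuine $\bigcap_{i\in I}R_{\B_i}$-successor refuting $\psi$, and likewise for $[\A]_I$), obtain a finite relational model in which every group relation is \emph{exactly} the intersection of the individual ones. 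Finally I would topologize this finite model: let $\tau_i$ be the Alexandroff topology of $R_{\B_i}$ and $\Pi_i$ the partition induced by $R_{[\A]_i}$; the interaction axiom gives $R_{\B_i}\subseteq R_{[\A]_i}$, which forces each $\Pi_i$-cell to be $\tau_i$-open, so $\Pi_i\subseteq\tau_i$ and we have a genuine topo-e-model. Since the specialization order of a join of Alexandroff topologies is the intersection of the specialization orders, the group topology $\tau_I=\bigvee_{i\in I}\tau_i$ and group partition $\Pi_I=\bigvee_{i\in I}\Pi_i$ reproduce exactly $R_{\B_I}$ and $R_{[\A]_I}$; hence all modalities are interpreted correctly and the target formula is satisfied.

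\emph{Decidability and the fragment.} Decidability follows from the finite model property together with the finitely many modalities (as $A$ is finite) and the recursive axiomatization. For $\mathcal{L}_{{\B[\A]}_{i,A}}$: $\bm{{\B[\A]}_{i,A}}$ inherits soundness, and I would prove its completeness by running the same argument on a $\bm{{\B[\A]}_{i,A}}$-consistent fragment formula --- its subformula closure stays within the fragment, only the relations $R_{\B_\alpha}, R_{[\A]_\alpha}$ for $\alpha\in\{A\}\cup A$ are involved, and the topologization step is unchanged --- so every such formula is satisfiable in a topo-e-model; decidability for the fragment is then immediate (conservativity of $\bm{{\B[\A]}_I}$ over $\bm{{\B[\A]}_{i,A}}$ then follows from soundness of the former and completeness of the latter). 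I expect the main difficulty to be the middle step of the completeness argument: closing the gap $R_{\B_I}\subseteq\bigcap_{i\in I}R_{\B_i}$ (and its $[\A]$-analogue) between the canonical relation and the join structure --- the same phenomenon that makes standard distributed-knowledge completeness nontrivial --- here further intertwined with the partition/topology interaction.
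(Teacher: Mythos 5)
Your proposal follows essentially the same route as the paper: soundness by direct verification; completeness via a filtrated canonical construction over pseudo-models in which the group relations are only contained in the intersections of the individual ones; an unraveling step that repairs this so that $R_{\B_I}=\bigcap_{i\in I}R_{\B_i}$ and $R_{[\A]_I}=\bigcap_{i\in I}R_{[\A]_i}$ hold exactly; and a final Alexandroff topologization using the specialization preorders. Two small corrections. First, the unraveled (tree-of-histories) model is \emph{infinite}, not finite, so decidability cannot be read off a finite topo-e-model; it must be argued at the level of the finite filtrated pseudo-models (enumerate all pseudo-models up to size $2^{|\Phi|}$ and model-check), with the chain of truth-preserving correspondences guaranteeing that pseudo-model satisfiability coincides with topo-e-model satisfiability --- which is exactly what you have available, so this is a matter of where the finiteness is invoked rather than a gap. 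Second, the filtration closure must be somewhat richer than subformulas closed under single negation: to verify the pseudo-model conditions (anti-monotonicity, inclusion, and the $S5$/$S4$ interaction) on the filtrated structure one also needs, e.g., $[\A]_I\psi$ and $\B_I\psi$ for supergroups $I\supseteq J$ whenever $[\A]_J\psi$ or $\B_J\psi$ is in the closure, as well as formulas such as $\B_I[\A]_I\psi$ and $\B_I\neg[\A]_I\psi$; this keeps the closure finite but is needed for the truth lemma to go through.
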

\vspace{-6mm}
\begin{table}[H]
    \centering
    \begin{tabular}{ll}
    \toprule
     ($\mathsf{S4}_{\B}$) &\quad\quad $\mathsf{S4}$ axioms and rules for $\B_I$ \\
     ($\mathsf{S5}_{[\A]}$) &\quad\quad $\mathsf{S5}$ axioms and rules for ${[\A]}_I$ \\
    Monotonicity &\quad\quad $\B_J\phi\rightarrow \B_I\phi$, \,\,\,\,\,\,  \,\,\,\,\,\,  ${[\A]}_J\phi\rightarrow {[\A]}_I\phi$ \,\,\,\,\,\,  \,\,\,\,\,\,  (for $J\subseteq I$) \\
    Inclusion &\quad\quad ${[\A]}_I\phi\rightarrow \B_I\phi$ \\
    \bottomrule
    \vspace{0.01mm}
    \end{tabular}
    \caption{The proof system $\bm{{\B[\A]}_I}$, where $I, J\subseteq A$ are groups. The proof system $\bm{{\B[\A]}_{i,A}}$ for the fragment $\mathcal{L}_{{\B[\A]}_{i,A}}$ is obtained by restricting all axioms to $\mathcal{L}_{{\B[\A]}_{i,A}}$.
    }\label{pf-syst}
    \vspace{-7mm}
\end{table}

\subsection{The Logic of Group Knowledge and Group Belief}

To reason about knowledge and belief without explicitly mentioning notions of evidence, we introduce languages in which $K_I$ and $B_I$ are primitive operators.\footnote{As already noted, $B_I$ is definable in terms of $K_I$, so the belief operator is redundant. But our axioms are clearer when stated in terms of both modalities.}

\begin{definition}[Syntax and Semantics of $\mathcal{L}_{KB_I}$ and $\mathcal{L}_{KB_{i,A}}$]\label{def:KB-our-full-language}
    The language $\mathcal{L}_{KB_I}$ of group knowledge and belief is defined recursively as
    \vspace{-1mm}    
    \begin{align*}
        \phi::=p\mid\neg \phi\mid \phi \wedge \phi \mid B_I \phi \mid K_I\phi
    \end{align*}
    where $p \in \prop$ and $I$ is any group. As before, the fragment $\mathcal{L}_{KB_{i,A}}$ of $\mathcal{L}_{{KB_I}}$ is obtained by restricting the evidence modalities to $B_\alpha$ and $K_\alpha$, for all $\alpha \in \{A\}\cup A$.
    
    Given a topo-e-model $\mathfrak{M} = {(X, \Pi_i, \tau_i, \llbracket \cdot \rrbracket)}_{i\in A}$, the \emph{interpretation} map $\llbracket \cdot \rrbracket^\mathfrak{M}$ is as before for atoms and Boolean connectives, while for $B_I$ and $K_I$ we use the corresponding semantic operators (with $B_\alpha$ and $K_\alpha$ as special cases):
    \[
    \llbracket B_I \phi\rrbracket := B_I (\llbracket \phi \rrbracket)
    \hspace{2cm}
    \llbracket K_I \phi \rrbracket := K_I (\llbracket \phi \rrbracket)\]
\end{definition}

\begin{theorem}\label{corr:compness-kb}
The proof system $\bm{KB_{i,A}}$ listed in \Cref{pf-systKB} is sound and complete for $\mathcal{L}_{KB_{i,A}}$ w.r.t.\ multi-agent topo-e-models. Moreover this logic is decidable.
\end{theorem}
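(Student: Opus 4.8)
The plan is first to check that every axiom of $\bm{KB_{i,A}}$ (\Cref{pf-systKB}) is valid on all multi-agent topo-e-models and that every rule preserves validity. For the $\mathsf{S4}$-type principles for $K_\alpha$, the $\mathsf{KD45}$-type principles for $B_\alpha$, and the interaction principles such as $K_\alpha\phi\to B_\alpha\phi$ and the ``belief as soft possibility of knowledge'' law $B_\alpha\phi\leftrightarrow\langle K_\alpha\rangle K_\alpha\phi$, this is immediate from the semantic characterizations $K_\alpha=Int_{\tau_\alpha^{dense}}$ and $B_\alpha={[\forall]}_\alpha\Diamond_\alpha\Box_\alpha$, together with the identities relating $K$, $B$, $\Box$ and ${[\forall]}$ recorded in \Cref{sec:background}. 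The only genuinely non-routine part is the group-interaction axioms — the weakened forms of Group Monotonicity coupling the individual operators $K_i,B_i$ with the group operators $K_A,B_A$ — which must be argued directly from the facts $\tau_A=\bigvee_{i\in A}\tau_i$ and $\Pi_A=\bigvee_{i\in A}\Pi_i$, keeping in mind that $K_A$ is interior in the dense-open topology of the \emph{join} $(\tau_A,\Pi_A)$ and \emph{not} in the join $\bigvee_{i\in A}\tau_i^{dense}$ of the individual dense-open topologies.

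\textbf{Completeness.} This I would do in two stages. In the first, relational stage, I run the standard Lindenbaum and canonical-model construction for $\bm{KB_{i,A}}$ and then filtrate through the subformulas of a given consistent $\phi$, obtaining a \emph{finite} Kripke model $\mathfrak{N}$ satisfying $\phi$ whose accessibility structure consists of a preorder $R^K_\alpha$ for each $K_\alpha$ and a relation $R^B_\alpha$ for each $B_\alpha$ (the latter tied to $R^K_\alpha$ through the $\langle K_\alpha\rangle K_\alpha$ pattern), and which satisfies exactly the frame conditions corresponding to the axioms — in particular the conditions linking the $i$-relations to the $A$-relations. The second, and genuinely new, stage is a \emph{representation result}: every such finite abstract $KB_{i,A}$-frame is modally equivalent, at the point of interest, to the frame underlying a bona fide multi-agent topo-e-model. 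Concretely, from $\mathfrak{N}$ one must manufacture partitions $\Pi_i$ and topologies $\tau_i$ (most naturally via suitably chosen designated subbases $\mathcal{E}_i^0$) on a possibly enlarged state space, so that (i) each $K_i$ is recovered as the interior in $\tau_i^{dense(\Pi_i)}$ and each $B_i$ as ${[\forall]}_i\Diamond_i\Box_i$, and (ii) — the crux — the \emph{join} data $\bigvee_i\Pi_i$ and $\bigvee_i\tau_i$ produce, through the dense-open construction, exactly the prescribed $K_A$ and $B_A$. Since in any topo-e-model the group structure is \emph{forced} by the individual structure, this amounts to reverse-engineering individual evidence whose pooling realizes the abstract group relations, including all the permitted ways in which group knowledge fails monotonicity; this is also why one restricts to the fragment with individuals and the full group $A$ only, rather than all $2^{|A|}-1$ subgroups, for which the analogous problem is left open.

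\textbf{Decidability.} This is then immediate from the two stages: the construction yields the finite model property with a recursive bound on the size of the required topo-e-model (the filtration gives an exponential bound in $|\phi|$, and the representation step enlarges this by a further computable amount), while $\bm{KB_{i,A}}$ is a finite axiom schema, hence recursively enumerable. So a formula is a theorem of $\bm{KB_{i,A}}$ iff it is valid on all topo-e-models up to the bounded size, which is effectively checkable.

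\textbf{Main obstacle.} The hard part is precisely the topological representation in stage two of completeness. One must simultaneously (a) get each individual $K_i$ and $B_i$ right as dense-open interior / ${[\forall]}_i\Diamond_i\Box_i$ in $\tau_i$; (b) get $K_A$ and $B_A$ right as the dense-open operators of the \emph{join} $(\bigvee_i\tau_i,\bigvee_i\Pi_i)$ — note this is \emph{not} the join of the individual dense-open topologies, so the naive ``take the join'' intuition breaks; and (c) make the $B$–$K$ link hold exactly at the topological level. Coordinating these three requirements — in particular, engineering individual subbases whose pooled closure exhibits precisely the abstract model's non-monotonic group behaviour — is where the new representation result does its work and where essentially all the technical difficulty lies.
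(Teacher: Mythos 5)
Your high-level architecture (routine soundness; relational completeness followed by a topological representation step; decidability as a corollary) matches the paper's, and you correctly locate the crux in the representation step. But there are two concrete gaps in how you propose to execute it.

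First, your Stage 1 asserts that filtrating the canonical model yields a \emph{finite} Kripke model ``which satisfies exactly the frame conditions corresponding to the axioms.'' This is not automatic and the paper does not do it. The frame conditions for the $\bm{KB_{i,A}}$ pseudo-models include Strong Transitivity, Strong Euclideanity, Full Belief, the Super-Introspection condition (a biconditional coupling $\kn_A$ with every $\rightarrow_i$), and the CBD-condition (an existential confluence property $\E w\, s\,(\rightarrow_A\cap\,\bigcap_i\kn_i)\,w$); filtration is not guaranteed to preserve such interaction and confluence conditions, and you give no argument that a suitable filtration exists. The paper sidesteps this entirely: it keeps the \emph{infinite} canonical pseudo-model and isolates the property actually needed downstream, \emph{max-density} of the $\leq_\alpha$ relations (every point sees a $\leq_\alpha$-maximal point), which holds in the canonical model and serves as the generalization of finiteness that the rest of the argument requires. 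Without either a verified filtration or the max-density observation, your Stage 2 has no input.

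Second, your Stage 2 proposes to build partitions and topologies directly from the abstract $KB$-frame via engineered subbases. The paper instead factors the representation through the already-solved evidence logic: from a max-dense $KB_{i,A}$-pseudo-model it \emph{recovers} relations $\leq_\alpha$ and $\sim_\alpha$ (with a delicate, non-canonical definition of $\leq_A$ involving $\kn_A\cap\bigcap_i\kn_i$ plus a special clause at $\kn_A$-maximal points) so as to obtain a pseudo-model for $\mathcal{L}_{{\B[\A]}_{i,A}}$ agreeing on all $\mathcal{L}_{KB_{i,A}}$-formulas, and then reuses the unraveling-to-tree and Alexandroff correspondences from the proof for the evidence logic. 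Your direct construction might be made to work, but as stated it is a restatement of the goal rather than a method; in particular you do not say how to define the individual evidence so that the \emph{join} produces the prescribed $K_A,B_A$, which is exactly the content of the paper's representation theorem. Relatedly, your decidability argument rests on a finite model property for $\bm{KB_{i,A}}$ that you have not established (and the paper's construction does not yield, since the canonical pseudo-model is infinite and unraveling produces infinite trees); the paper instead derives decidability from the faithful translation of $\mathcal{L}_{KB_{i,A}}$ into the decidable logic $\mathcal{L}_{{\B[\A]}_{i,A}}$.
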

\begin{table}[ht]
\centering
\begin{tabular}{ll}
  \toprule
  \multicolumn{2}{l}{(KB) \hfill \textbf{Axioms \& rules of normal modal logic for $K$ \& $B$}} \\
  \midrule
  \multicolumn{2}{l}{\textbf{Stalnaker’s Epistemic-Doxastic Axioms:}} \\
Truthfulness of knowledge (T)   &\quad\quad  $K_\alpha\phi\rightarrow\phi$ \\
Pos. Intro.\ of knowledge (KK)  &\quad\quad  $K_\alpha\phi\rightarrow K_\alpha K_\alpha\phi$ \\
Consistency of Beliefs (CB)  &\quad\quad  $B_\alpha\phi\rightarrow \neg B_\alpha\neg \phi$ \\
Strong Pos. Intro.\ of beliefs (SPI)  &\quad\quad  $B_\alpha\phi\rightarrow K_\alpha B_\alpha\phi$ \\
Strong Neg. Intro.\ of beliefs (SNI)  &\quad\quad  $\neg B_\alpha\phi \rightarrow K_\alpha\neg B_\alpha\phi$ \\
Knowledge implies Belief (KB)  &\quad\quad  $K_\alpha\phi\rightarrow B_\alpha\phi$ \\
Full Belief (FB)  &\quad\quad  $B_\alpha\phi\rightarrow B_\alpha K_\alpha\phi$ \\
 \midrule
\multicolumn{2}{l}{\textbf{Group Knowledge Axioms:}} \\
Super-Introspection (SI) &\quad\quad $B_i\phi \rightarrow K_A B_i\phi$ \\
Weak Monotonicity (WM) &\quad\quad $(K_i\phi \wedge B_A\phi)\rightarrow K_A\phi$
\vspace{1mm}
\\ 
Consistency of group Belief with &\quad\quad $(\bigwedge_{i\in A} K_i\phi_i) \, \rightarrow \, \langle B_A\rangle (\bigwedge_{i\in A} \phi_i)$ \\
Distributed knowledge (CBD)  &\quad\quad\quad\quad (where $\{\phi_i\mid i\in A\}$ are arb.\ formulas) \\
\bottomrule
\vspace{0.01mm}
\end{tabular}
\caption{The proof system $\bm{KB_{i,A}}$, where $A$ is the group of all agents, $i\in A$ ranges over agents, and $\alpha\in \{A\}\cup A$ denotes either individual agents or the full group $A$.}
\label{pf-systKB}
\vspace{-7mm}
\end{table}
We briefly discuss the axioms. The first two groups contain generalizations (to multiple agents and groups)  of Stalnaker's axioms and rules for (individual) knowledge and belief~\cite{stalnaker}. These axioms were shown in~\cite{stalnakersknowledge} to be complete for the topological interpretation, and their completeness for multiple agents was shown in~\cite{saulmaster,saulpaper}. 
All these axioms and rules are standard in epistemic-doxastic logic, except for the Full Belief axiom (FB), which is specific to Stalnaker's conception of \emph{belief as the ``subjective feeling'' of knowledge}. Stalnaker calls this ``strong belief'', but we follow the terminology in~\cite{stalnakersknowledge}, referring to it as ``full belief''. The intuition is that an agent ``fully believes'' $\phi$ when she \emph{believes that she knows it}:
from a \emph{first-person} perspective, full belief and fallible knowledge are indistinguishable. 

Moving on to the Group Knowledge axioms, Super-Introspection (SI) is a strengthening of ordinary (strong) introspection of beliefs, stating that \emph{a group virtually knows the beliefs of its members}. Weak Monotonicity (WM) is a (valid) weakening of the (invalid) Group Monotonicity: individual knowledge of $\phi$ does imply virtual group knowledge of $\phi$
\emph{provided that the group virtually believes $\phi$}. 

Finally, Consistency of group Belief with Distributed knowledge (CBD) says that \emph{a group's virtual belief is
consistent with its distributed knowledge}.
In terms of Fernández' $D$-operator~\cite{saulmaster}, this could be stated as $D_A \phi \to \langle B_A \rangle \phi$.
Our language does not include a distributed knowledge modality, but (CBD) gives an equivalent statement in terms of conjunctions of individual pieces of knowledge.

\paragraph{Translation into the languages of evidence}
Every formula $\phi$ of $\mathcal{L}_{KB_I}$ and $\mathcal{L}_{KB_{i,A}}$ can be translated into a formula $tr(\phi)$ of the corresponding evidence languages $\mathcal{L}_{{\B[\A]}_I}$ and $\mathcal{L}_{{\B[\A]}_{i,A}}$: $tr(p)=p$, $tr(\neg \phi)=\neg tr(\phi)$, $tr(\phi \wedge \psi)= tr(\phi)\wedge tr(\psi)$, $tr(B_I \phi)= \forall_I \Diamond_I \Box_I tr(\phi)$,  $tr(K_I \phi)=\Box_I tr(\phi) \wedge \forall_I \Diamond_I \Box_I tr(\phi)$ (with $B_i, K_i, B_A, K_A$ as special cases). This translation is \emph{faithful}, i.e., $ \llbracket tr(\phi)\rrbracket=  \llbracket \phi\rrbracket$.
\vspace{-2mm}
\subsection{The Dynamic Logics of Evidence-Sharing}

We now extend our languages with \emph{dynamic modalities} $[\share_I]$ for evidence-sharing. Given the above completeness results, we only axiomatize two such logics: the extension of $\mathcal{L}_{{\B[\A]}_I}$ with $[\share_I]$ for arbitrary groups $I\subseteq A$; and the extension of $\mathcal{L}_{KB_{i,A}}$ with $[\share_A]$ for the full group $A$ of all agents.

\begin{definition}[Syntax and Semantics with {$[\share_I]$}]\label{def:syntax-dynamic}
The dynamic language $\mathcal{L}_{{\B[\A]}_I [\share_I]}$ is defined recursively as
\vspace{-1mm}   
    \begin{align*}
        \phi::=p\mid\neg \phi\mid \phi \wedge \phi \mid \B_I \phi \mid {[\A]}_I\phi \mid [\share_I]\phi 
       \vspace{-2mm}
    \end{align*}
(where $p\in \prop$ and $I\subseteq A$ is any group);  while $\mathcal{L}_{KB_{i,A} [\share_A]}$ is given by
\vspace{-1mm}   
    \begin{align*}
        \phi::=p\mid\neg \phi\mid \phi \wedge \phi \mid K_i \phi \mid K_A\phi \mid [\share_A]\phi 
      \vspace{-2mm}
    \end{align*}
(where $p \in \prop$, and $i\in A$ is any agent).

Given a topo-e-model $\mfM$, the \emph{interpretation} map $\llbracket\cdot\rrbracket^\mfM$ uses the clauses from \Cref{def:our-full-language} for the static connectives, while for the dynamic modalities, we put
    \[
    \llbracket [\share_I]\phi\rrbracket^{\mfM} = \llbracket \phi \rrbracket^{\mfM({\share_I})}
    \]
and apply the special case $I=A$ of this clause to interpret $[\share_A]\phi$.
\end{definition} 

\begin{theorem}\label{compness-boxall-dyn}
    The proof systems listed in Table~\ref{pf-syst-boxall-dyn} and Table~\ref{KB-dyn} are sound and complete for the corresponding logics $\mathcal{L}_{{\B[\A]}_I [\share_I]}$ and $\mathcal{L}_{KB_{i,A} [\share_A]}$ w.r.t.\ multi-agent topo-e-models.
    Moreover, these logics are provably co-expressive with their static bases $\mathcal{L}_{{\B[\A]}_I}$ and respectively $\mathcal{L}_{KB_{i,A}}$, and thus they are decidable.
\vspace{-4mm}
\end{theorem}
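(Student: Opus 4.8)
\textbf{Proof strategy for \Cref{compness-boxall-dyn}.}
The plan is to follow the standard Dynamic Epistemic Logic recipe: establish \emph{reduction axioms} that push the dynamic modality $[\share_I]$ inward past every static connective, so that any formula containing $[\share_I]$ is provably equivalent to one with strictly fewer (or simpler) dynamic modalities, and iterating this yields a translation $t$ from $\mathcal{L}_{{\B[\A]}_I[\share_I]}$ to $\mathcal{L}_{{\B[\A]}_I}$ (resp.\ from $\mathcal{L}_{KB_{i,A}[\share_A]}$ to $\mathcal{L}_{KB_{i,A}}$) with $\vdash \phi \leftrightarrow t(\phi)$. Given such a faithful, provability-preserving translation and the completeness of the static bases (\Cref{corr:compness-boxall-frag} and \Cref{corr:compness-kb}), completeness of the dynamic systems follows: if $\phi$ is valid then $t(\phi)$ is valid (since $\llbracket t(\phi)\rrbracket = \llbracket\phi\rrbracket$ by soundness of the reduction axioms), hence $\vdash t(\phi)$ by static completeness, hence $\vdash\phi$. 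Co-expressivity with the static base is immediate from the translation (one direction) together with the trivial embedding of the static language (the other). Decidability then follows because the static logics are decidable and $t$ is computable. Soundness of the dynamic systems is a routine check that each reduction axiom and rule is validity-preserving, using \Cref{def:share} and, crucially, \Cref{Virtual-K}.

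The heart of the argument is identifying the right reduction axioms. For the Boolean and atomic cases these are the usual ones, exploiting that $\share_I$ changes neither $X$ nor the valuation: $[\share_I]p \leftrightarrow p$, $[\share_I]\neg\phi \leftrightarrow \neg[\share_I]\phi$, $[\share_I](\phi\wedge\psi)\leftrightarrow([\share_I]\phi\wedge[\share_I]\psi)$. The modal cases are where \Cref{Virtual-K} does the work. For an \emph{insider} modality, \Cref{Virtual-K} gives ${[\A]}_i^{\mfM(\share_I)} = {[\A]}_I^\mfM$ and $\B_i^{\mfM(\share_I)} = \B_I^\mfM$ for $i\in I$, which, combined with the commutation of $[\share_I]$ with interpretation, yields reduction laws of the shape $[\share_I]\B_i\phi \leftrightarrow \B_I[\share_I]\phi$ and $[\share_I]{[\A]}_i\phi \leftrightarrow {[\A]}_I[\share_I]\phi$ for $i\in I$; for the group modality $\B_I,{[\A]}_I$ themselves, and more generally for any $\B_J,{[\A]}_J$ with $J\subseteq I$, the same holds since $\tau_J^{\mfM(\share_I)}\subseteq\tau_I = \tau_i^{\mfM(\share_I)}$ collapses appropriately — here one must be slightly careful and derive the correct law for arbitrary sub- and super-groups $J$, not just singletons. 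For \emph{outsider} modalities $\B_j,{[\A]}_j$ with $j\notin I$ (and, in the language $\mathcal{L}_{{\B[\A]}_I[\share_I]}$ which retains \emph{all} group modalities, for groups $J$ that are neither $\subseteq I$ nor contain vertices only outside $I$), the situation is subtler: the updated topology for a mixed group $J\not\subseteq I$, $J\cap I\neq\emptyset$, is the join of the outsiders' unchanged topologies with the shared topology $\tau_I$, i.e.\ $\tau_J^{\mfM(\share_I)} = \tau_{(J\setminus I)\cup I}^\mfM = \tau_{J\cup I}^\mfM$, and similarly for partitions. This gives the reduction law $[\share_I]\B_J\phi \leftrightarrow \B_{J\cup I}[\share_I]\phi$ (and likewise for ${[\A]}$), which uniformly subsumes the insider and outsider cases. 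For the $KB$-language with $\share_A$, because $A$ is the full group every agent is an insider, so the relevant laws are $[\share_A]K_i\phi \leftrightarrow K_A[\share_A]\phi$ and $[\share_A]K_A\phi\leftrightarrow K_A[\share_A]\phi$, justified by the $K$-clause of \Cref{Virtual-K}; one also needs the compositional rule allowing substitution of provable equivalents under $[\share_A]$ and a necessitation-style rule for $[\share_A]$.

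The main obstacle I expect is \emph{termination of the translation}, i.e.\ showing the rewriting induced by the reduction axioms actually halts. Pushing $[\share_I]$ inward past a group modality $\B_J$ replaces $J$ by $J\cup I$, which only \emph{grows} the group, so there is no danger there (and once $J\supseteq I$ the group stabilizes); the genuine worry is the Boolean-negation and conjunction cases, which do not decrease modal depth naively, and the interaction when several distinct dynamic modalities $[\share_{I_1}],\dots,[\share_{I_k}]$ are nested. The standard fix is to define a well-founded complexity measure on formulas — e.g.\ a lexicographically ordered pair (number of $[\share]$-operators, ordinary formula length), or Kleene/van Benthem-style measure in which each reduction axiom strictly decreases the measure of the outermost redex — and verify by inspection that every reduction axiom, read left-to-right, strictly lowers it; this is where one must be careful that $[\share_I]\neg\phi\leftrightarrow\neg[\share_I]\phi$ and the $\wedge$-law do not reintroduce complexity, which works because the $[\share_I]$ on the right is applied to a strictly smaller subformula. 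The remaining steps — soundness of each axiom against \Cref{def:share}/\Cref{Virtual-K}, faithfulness $\llbracket t(\phi)\rrbracket=\llbracket\phi\rrbracket$, and the transfer of completeness and decidability — are then routine.
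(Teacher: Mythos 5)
Your overall strategy is exactly the paper's: use the reduction axioms to build a provable translation into the static base, then transfer completeness, co-expressivity and decidability from \Cref{corr:compness-boxall-frag} and \Cref{corr:compness-kb}. Two remarks.

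First, there is a concrete error in the reduction axiom you propose for the general group modalities. You correctly compute that for a \emph{mixed} group ($J\not\subseteq I$, $J\cap I\neq\emptyset$) the updated topology is $\tau_{J\cup I}^{\mfM}$, but you then assert that $[\share_I]\B_J\phi\leftrightarrow\B_{J\cup I}[\share_I]\phi$ ``uniformly subsumes the insider and outsider cases.'' It does not subsume the pure-outsider case: if $J\cap I=\emptyset$ (in particular for a singleton outsider $j\notin I$), then by \Cref{def:share} every agent in $J$ keeps her old topology and partition, so $\tau_J^{\mfM(\share_I)}=\tau_J^{\mfM}$ and the correct law is $[\share_I]\B_J\phi\leftrightarrow\B_{J}[\share_I]\phi$, with the group index \emph{unchanged}. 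The uniform $J\cup I$ version is unsound there: it would let outsiders inherit the insiders' shared evidence. The paper's axioms use the notation $J/\!+I$, defined as $J\cup I$ when $J\cap I\neq\emptyset$ and as $J$ when $J\cap I=\emptyset$, precisely to encode this case split; your system needs the same split to be sound.

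Second, on termination: the complexity-measure route you sketch would work, but the paper organizes the translation so that no such measure is needed. It first proves a one-step elimination lemma by induction on the complexity of a \emph{static} formula $\phi$ (producing a static $\phi_I$ with $\vdash[\share_I]\phi\leftrightarrow\phi_I$), and then runs an outer induction on arbitrary dynamic formulas in which the case $[\share_I]\psi$ first replaces $\psi$ by a static equivalent via the induction hypothesis and only then applies the one-step lemma. This inside-out, two-stage induction disposes of nested and distinct dynamic modalities for free; it is a cleaner way to discharge the worry you raise, though your approach is also viable once the axioms are corrected.
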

\vspace{-3mm}
\begin{table}[ht]
    \centering
    \begin{tabular}{ll}
    \toprule
    ($\bm{{\B[\A]}_I}$) &  \textbf{Axioms and rules of $\bm{{\B[\A]}_I}$} \\
    \midrule
    ($[\share_I]$) & \textbf{Axioms and rules of normal modal logic for $[\share_I]$}  \\
    \midrule
        &  \textbf{Reduction Axioms for $[\share_I]$}:\\
        (Atomic Reduction) &$[\share_I]p \leftrightarrow p$ \qquad\qquad\qquad\qquad (for atomic propositions $p$) \\
        (Negation Reduction) &$[\share_I]\neg \phi \leftrightarrow \neg [\share_I]\phi$ \\
        ($\B$-Reduction) &$[\share_I]\B_J\phi \leftrightarrow \B_{J/\!+ I}[\share_I]\phi$  \\
        ($\A$-Reduction) &$[\share_I]{[\A]}_J\phi \leftrightarrow {[\A]}_{J/\!+ I}[\share_I]\phi$  \\
    \bottomrule
   \vspace{0.001mm}
    \end{tabular}
    \caption{The proof system $\bm{{\B[\A]}_I[\share_I]}$, where $I,J\subseteq A$ are groups, and we 
    use the notation
    $J/\!\!+ I:=J\cup I$ when $I\cap J\neq\emptyset$, and $J/\!\!+ I:=J$ when $I\cap J=\emptyset$.}
    \label{pf-syst-boxall-dyn}
\vspace{-2mm}\end{table}
\vspace{-6mm}
\begin{table}[ht]
    \centering
    \begin{tabular}{ll}
    \toprule
    ($\bm{{\B[\A]}_I}$) &  \textbf{Axioms and rules of $\bm{KB_{i,A}}$} \\
    \midrule
    ($[\share_A]$) & \textbf{Axioms and rules of normal modal logic for $[\share_A]$}  \\
    \midrule
        &  \textbf{Reduction Axioms for $[\share_A]$}:\\
        (Atomic Reduction) &$[\share_A]p \leftrightarrow p$ \qquad\qquad\qquad\qquad (for atomic propositions $p$) \\
        (Negation Reduction) &$[\share_A]\neg \phi \leftrightarrow \neg [\share_A]\phi$ \\
        ($K$-Reduction) &$[\share_A]K_\alpha\phi \leftrightarrow K_{A}[\share_A]\phi$  \\
        ($B$-Reduction) &$[\share_A] B_\alpha\phi \leftrightarrow B_{A}[\share_A]\phi$  \\
    \bottomrule
    \vspace{0.001mm}
    \end{tabular}
    \caption{System $\bm{KB_{i,A}[\share_A]}$, where $\alpha\in A\cup \{A\}$ is an individual or the full group.}
    \label{KB-dyn}
    \vspace{-2mm}
\end{table}
\vspace{-2mm}

As usual in DEL, there is also a Conjunction Reduction: $[\share_I](\phi\wedge\psi) \leftrightarrow ([\share_I]\phi \wedge [\share_I]\psi)$. But this is provable from the axioms and rules of normal modal logic for $[\share_I]$ together with the Negation Reduction axiom for $[share_I]$. Its specical case $[\share_A](\phi\wedge\psi) \leftrightarrow ([\share_A]\phi \wedge [\share_A]\psi)$ is similarly 
provable from the normality of $[\share_A]$ and the Negation Reduction axiom for $[share_A]$. 

\vspace{-2mm}

\subsection{Proofs of Completeness and Decidability}

\begin{short}
The full proofs are relegated to the \href{https://malv.in/2025/VirtualGroupKnowledge-extended.pdf}{extended online version of this paper}.\footnote{See \url{https://malv.in/2025/VirtualGroupK-extended.pdf}.}
\end{short}
\begin{extended}
The full proofs can be found in the appendix.
\end{extended}
Here we sketch a brief summary of the proof plan and the main ideas of the proofs. 

For each of the proof systems $\bm{{\B[\A]}_I}$, $\bm{{\B[\A]}_{i,A}}$, and $\bm{KB_{i,A}}$, we first
show completeness w.r.t.\ \emph{non-standard relational structures} (pseudo-models), which are tailored to the respective languages. For $\bm{{\B[\A]}_I}$ and $\bm{{\B[\A]}_{i,A}}$, this is done using appropriate versions of the standard modal technique of \emph{filtration}, which gives us \emph{finite pseudo-models} for $\mathcal{L}_{{\B[\A]}_I}$ and $\mathcal{L}_{{\B[\A]}_{i,A}}$. In the case of $\bm{KB_{i,A}}$, we use the classical method of \emph{canonical structures}, obtaining an \emph{(infinite) canonical pseudo-model for $\mathcal{L}_{KB_{i,A}}$}, having an additional special property (\emph{max-density}).

The next step is to go back to the (intended) topo-e-models. For $\bm{{\B[\A]}_I}$ and $\bm{{\B[\A]}_{i,A}}$, we use a version of the well-known technique of \emph{unraveling}, showing that every pseudo-model for these logics is modally equivalent to its unraveled ``associated model'': this is a tree-like relational model, which is itself equivalent to a multi-agent topo-e-model. This finishes the proof of \Cref{corr:compness-boxall-frag}.  

For $\bm{KB_{i,A}}$, things are more complex: we have to first prove a representation theorem, showing that every pseudo-model for $\mathcal{L}_{KB_{i,A}}$ having the additional max-density property can be represented as a (p-morphic image of) a pseudo-model for $\mathcal{L}_{{\B[\A]}_{i,A}}$, in a way that preserves the truth of all formulas in $\mathcal{L}_{KB_{i,A}}$. This representation theorem is the key step, and its proof is non-trivial and uses an innovative technique.\footnote{Moreover, in contrast to the unraveling technique used for $\bm{{\B[\A]}_I}$, it is not clear how to generalize this step to arbitrary subgroups, i.e., to the logic $\bm{KB_I}$.} Given this and the above unraveling result, we obtain completeness of $\bm{KB_{i,A}}$  w.r.t.\ topo-e-models. This concludes \Cref{corr:compness-kb}. 

Finally, the completeness proof for the dynamic extensions (\Cref{compness-boxall-dyn}) follows a standard approach in Dynamic Epistemic Logic: we use the reduction axioms to show that these extensions are provably co-expressive with their static bases. Putting this together with Theorems~\ref{corr:compness-boxall-frag} and~\ref{corr:compness-kb}, we obtain \Cref{compness-boxall-dyn}.

\vspace{-2mm}

\section{Conclusion}\label{sec:Conclusion}

\vspace{-2mm}

The key theoretical contribution of this paper is the \emph{complete axiomatization} of non-monotonic, evidence-based notions of \emph{(virtual) group knowledge and group belief}, in the shape of the logic $\bm{KB_{i,A}}$. Compared to previous attempts at topological accounts of group knowledge (corresponding to a \emph{traditional} interpretation in terms of \emph{distributed} knowledge), \emph{the notion studied here is better suited to match the epistemic dynamics of knowledge induced by evidence-sharing}. This is a small step towards applying topological semantics to realistic, practical settings, such as distributed computing and the epistemology of social networks. 

As an auxiliary tool, we also studied\emph{ the logic of group evidence} over the larger language $\mathcal{L}_{{\B[\A]}_I}$, and showed that it is sound and complete, as well as decidable. In its turn, this result was an important step in showing the completeness and decidability of the above-mentioned logic $\bm{KB_{i,A}}$. 

\smallskip 

Unfortunately, we do not have a completeness result for the full logic $\bm{KB_I}$ of group knowledge and belief for \emph{arbitrary subgroups} $I\subseteq A$. All the above axioms have sound analogues for the general operators $K_I$ and $B_I$. E.g., the following generalizations of Super-Introspection and Weak Monotonicity hold:
\[
  B_J \phi \to K_I B_J \phi,
  \,\,\,\,\,\,\,\,\,\,\,\,\,\,
  (K_J \phi \wedge B_I \phi) \to K_I \phi,
  \,\,\,\,\,\,\,\,\,\,\,\,\,\,
  (\mbox{for groups $J\subseteq I \subseteq A$})
  .
  \vspace{-1mm}
\]
Similarly, axiom (CBD) can be generalized to
$(\textstyle\bigwedge_{J\subseteq I} K_J\phi_J) \, \rightarrow \, \langle B_I\rangle (\textstyle\bigwedge_{J\subseteq I} \phi_J)$.

But it is not at all clear that the resulting axiomatization is complete! Our proof methods do not seem to work for this extension. On the other hand, we know that \emph{the logic $\bm{KB_I}$ is decidable} (since it can be translated into a fragment of the decidable logic $\bm{{\B[\A]}_{i,A}}$), so there must exist a recursive axiomatization!

\medskip

This leads to our oustanding unsolved problem:

\paragraph{Open Question}
Find a complete proof system for the logic $\bm{KB_I}$.
\smallskip 

The investigation of this intriguing question is left for future work.

\paragraph{Acknowledgements.}
We thank the anonymous referees for their helpful comments.
D.~Gomes is supported by the Swiss National Science Foundation (SNSF) under grant No. 10000440 (Epistemic Group Attitudes).

\bibliographystyle{splncs04}
\bibliography{references}

\providecommand{\noopsort}[1]{}
\begin{thebibliography}{10}
\providecommand{\url}[1]{\texttt{#1}}
\providecommand{\urlprefix}{URL }
\providecommand{\doi}[1]{https://doi.org/#1}

\bibitem{resolution}
{\AA}gotnes, T., W{\'a}ng, Y.N.: Resolving distributed knowledge. Artificial
  Intelligence  \textbf{252},  1--21 (2017). \doi{10.1016/j.artint.2017.07.002}

\bibitem{AGM1985}
Alchourrón, C.E., Gärdenfors, P., Makinson, D.: On the logic of theory
  change: Partial meet contraction and revision functions. Journal of Symbolic
  Logic  \textbf{50}(2),  510--530 (1985). \doi{10.2307/2274239}

\bibitem{BaltagEtAl18:GroupKnowledgeInterrogative}
Baltag, A., Boddy, R., Smets, S.: Group {{Knowledge}} in {{Interrogative
  Epistemology}}. In: {van Ditmarsch}, H., Sandu, G. (eds.) Jaakko {{Hintikka}}
  on {{Knowledge}} and {{Game-Theoretical Semantics}}, pp. 131--164. Springer,
  Cham (2018). \doi{10.1007/978-3-319-62864-6_5}

\bibitem{BDM2008:PhilInfoChapter}
Baltag, A., van Ditmarsch, H.P., Moss, L.S.: Epistemic logic and information
  update. In: Adriaans, P., van Benthem, J. (eds.) Philosophy of Information.
  MIT Press (2008)

\bibitem{saulpaper}
Baltag, A., Bezhanishvili, N., Fern{\'a}ndez~Gonz{\'a}lez, S.: Topological
  evidence logics: Multi-agent setting. In: Özgün, A., Zinova, Y. (eds.)
  Language, Logic, and Computation. pp. 237--257 (2022).
  \doi{10.1007/978-3-030-98479-3_12}

\bibitem{stalnakersknowledge}
Baltag, A., Bezhanishvili, N., Özgün, A., Smets, S.: The topology of belief,
  belief revision and defeasible knowledge. In: Lecture notes in computer
  science, pp. 27--40. Springer (2013). \doi{10.1007/978-3-642-40948-6_3}

\bibitem{baltagevidence}
Baltag, A., Bezhanishvili, N., Özgün, A., Smets, S.: Justified belief and the
  topology of evidence. In: Logic, Language, Information, and Computation: 23rd
  International Workshop, WoLLIC 2016, pp. 83--103. Springer (2016).
  \doi{10.1007/978-3-662-52921-8_6}

\bibitem{baltagbeliefknowledgeevidence}
Baltag, A., Bezhanishvili, N., Özgün, A., Smets, S.: Justified belief,
  knowledge, and the topology of evidence. Synthese  \textbf{200} (2022).
  \doi{10.1007/s11229-022-03967-6}

\bibitem{justifmodels}
Baltag, A., Fiutek, V., Smets, S.: Beliefs and evidence in justification
  models. In: Beklemishev, L., Demri, S., Máté, A. (eds.) Advances in Modal
  Logic. pp. 156--176 (2016),
  \url{http://www.aiml.net/volumes/volume11/Baltag-Fiutek-Smets.pdf}

\bibitem{SEP}
Baltag, A., Renne, B.: {Dynamic Epistemic Logic}. In: Zalta, E.N. (ed.) The
  {Stanford} Encyclopedia of Philosophy. Metaphysics Research Lab, Stanford
  University, {W}inter 2016 edn. (2016),
  \url{https://plato.stanford.edu/archives/win2016/entries/dynamic-epistemic}

\bibitem{defknowledgebaltag}
Baltag, A., Renne, B., Smets, S.: The logic of justified belief change, soft
  evidence and defeasible knowledge. In: Ong, L., de~Queiroz, R. (eds.) Logic,
  Language, Information and Computation. vol.~7456, pp. 168--190 (2012).
  \doi{10.1007/978-3-642-32621-9_13}

\bibitem{subgroups}
Baltag, A., Smets, S.: Learning what others know. In: Kovacs, L., E., A. (eds.)
  LPAR23 proceedings of the International Conference on Logic for Programming
  AI and Reasoning. vol.~73, pp. 90--110 (2020).
  \doi{10.48550/arXiv.2109.07255}

\bibitem{handbook5}
van Benthem, J., Bezhanishvili, G.: Modal Logics of Space, chap.~5, pp.
  217--298. Springer-Verlag (2007). \doi{10.1007/978-1-4020-5587-4}

\bibitem{benthempacuit}
van Benthem, J., Pacuit, E.: Dynamic logics of evidence-based beliefs. Studia
  Logica  \textbf{99}(1-3),  61--92 (2011). \doi{10.1007/s11225-011-9347-x}

\bibitem{blackbrijkeven}
Blackburn, P., de~Rijke, M., Venema, Y.: Modal Logic. No.~53 in Cambridge
  Tracts in Theoretical Computer Science, Cambridge University Press (2001).
  \doi{10.1017/CBO9781107050884}

\bibitem{Hans-DELbook}
van Ditmarsch, H., van~der Hoek, W., Kooi, B.: Dynamic Epistemic Logic.
  Springer-Verlag (2007)

\bibitem{saulmaster}
Fernández~González, S.: Generic Models for Topological Evidence Logics.
  Master's thesis, University of Amsterdam (2018),
  \url{https://eprints.illc.uva.nl/id/eprint/1641/}

\bibitem{Goldbach15:ModellingDemocraticDeliberation}
Goldbach, R.: Modelling {{Democratic Deliberation}}. Master thesis, University
  of Amsterdam (2015), \url{https://eprints.illc.uva.nl/id/eprint/946/}

\bibitem{dosSantosGomes2025:thesisVGK}
{\noopsort{Gomes}dos Santos Gomes}, D.: Virtual group knowledge on topological
  evidence models. Master thesis, University of Amsterdam (2025),
  \url{https://eprints.illc.uva.nl/id/eprint/2356/}

\bibitem{ogdistrkn}
Halpern, J.Y., Moses, Y.: Knowledge and common knowledge in a distributed
  environment. J. ACM  \textbf{37}(3),  549–--587 (1990).
  \doi{10.1145/79147.79161}

\bibitem{halpernmoses}
Halpern, J.Y., Moses, Y.: A guide to completeness and complexity for modal
  logics of knowledge and belief. Artificial Intelligence  \textbf{54}(3),
  319--379 (1992). \doi{10.1016/0004-3702(92)90049-4}

\bibitem{hintikka}
Hintikka, J.: Knowledge and Belief. Cornell University Press, Ithaca, N.Y.
  (1962)

\bibitem{mckinseytarski}
McKinsey, J.C.C., Tarski, A.: The algebra of topology. The Annals of
  Mathematics  \textbf{45}(1),  141--191 (1944). \doi{10.2307/1969080}

\bibitem{aldomaster}
Ramírez~Abarca, A.I.: Topological Models for Group Knowledge and Belief.
  Master's thesis, University of Amsterdam (2015),
  \url{https://eprints.illc.uva.nl/id/eprint/2250/}

\bibitem{argbelief}
Shi, C., Smets, S., Vel{\'a}zquez-Quesada, F.R.: Argument-based belief in
  topological structures. Electronic Proceedings in Theoretical Computer
  Science  \textbf{251},  489--503 (2017). \doi{10.4204/EPTCS.251.36}

\bibitem{stalnaker}
Stalnaker, R.: On logics of knowledge and belief. Philosophical Studies: An
  International Journal for Philosophy in the Analytic Tradition
  \textbf{128}(1),  169--199 (2006). \doi{10.1007/s11098-005-4062-y}

\bibitem{aybukephd}
Özgün, A.: Evidence in Epistemic Logic: a Topological Perspective. Ph.D.
  thesis, University of Amsterdam (2017),
  \url{https://eprints.illc.uva.nl/id/eprint/2147/}

\end{thebibliography}

\extonly{
\clearpage

\appendix

\section{APPENDIX\@: Proofs of completeness and decidability}

This section contains the proofs of Theorems~\ref{corr:compness-boxall-frag},~\ref{corr:compness-kb}, and~\ref{compness-boxall-dyn}. For Theorems~\ref{corr:compness-boxall-frag} and~\ref{corr:compness-kb} (concerning the static languages), we prove completeness via pseudo-models and relational evidence models, instead of directly for topo-e-models. That is, for each proof system, we first prove the claim for a class of structures that is tailored to the respective language. Next, we define correspondences between these structures and relational evidence models. A complete overview of correspondences used in these proofs is depicted in \Cref{compness-flowchart}. 
\begin{figure}[H]

\begin{center}
\begin{tikzpicture}[
    node distance=1.7cm and 3.8cm,
    every node/.style={align=center},
    every path/.style={draw, -latex},
    >=latex,
]

% Nodes
\node (TopoE) {(Alexandroff) \\ Topo-E-Model \\ (Def. \ref{multiag-topoemodel})};
\node (RelEv) [right=of TopoE] {Relational Evidence Model \\ (Def. \ref{def:rela-ev-model})};
\node (StdPM) [below=of RelEv] {Standard Pseudo-Model for $\mathcal{L}_{\B[\A]_I}$ \\ (Def. \ref{standard-ps})};
\node (AssocM) [below=of StdPM] {Associated Model \\ (Def. \ref{def:assoc-model})};
\node (PseudoI) [left=of AssocM] {Pseudo-Model for $\mathcal{L}_{\B[\A]_I}$ \\ (Def. \ref{pseudo-model})};
\node (PseudoIA) [below=of AssocM] {Pseudo-Model for $\mathcal{L}_{\B[\A]_{i,A}}$ \\ (Def. \ref{pseudo-model})};
\node (PseudoKB) [below=of PseudoI] {Pseudo-Model for $\mathcal{L}_{KB_{i,A}}$ \\ (Def. \ref{kb-pseudo})};

% Arrows with Labels
\path (TopoE) edge[<->, below]
    node[midway, below] {Equiv. w.r.t. $\mathcal{L}_{\B[\A]_I}$ \\ (Prop. \ref{cor:rel-sem-equiv-concl_OLD})}
    (RelEv);
\path (RelEv) edge[<->, below] node[right] {(\Cref{fact-rel-standard_OLD})} (StdPM);
\path (StdPM) edge[<-, below] node[right] {(\Cref{assoc-as-standardps_OLD})} (AssocM);
\path (PseudoI) edge[above] 
    node[right, below] {Equiv. w.r.t. $\mathcal{L}_{\B[\A]_I}$ \\ (Cor. \ref{coroll-bisim_OLD})} 
    (AssocM);
\path (PseudoIA) edge[->] 
    node[right] {Bisim. w.r.t. $\mathcal{L}_{\B[\A]_{i,A}}$ \\ (Cor. \ref{coroll-bisim-frag_OLD})} 
    (AssocM);
\path (PseudoIA) edge[<->, below] 
    node[midway, below] {Equiv. w.r.t. $\mathcal{L}_{KB_{i,A}}$ \\ (Cor. \ref{comp-sim_OLD})}
    (PseudoKB);
\end{tikzpicture}
\caption{Flowchart of the correspondences we prove. An arrow from $X$ to $Y$ signifies a map from models of type $X$ to models of type $Y$.
% The absence of arrow heads between relational evidence models and standard pseudo-models denotes that these terms refer to the same class of models.
Associated models are standard pseudo-models; however, not every standard pseudo-model is an associated model.}
\label{compness-flowchart}
\end{center}
\end{figure}
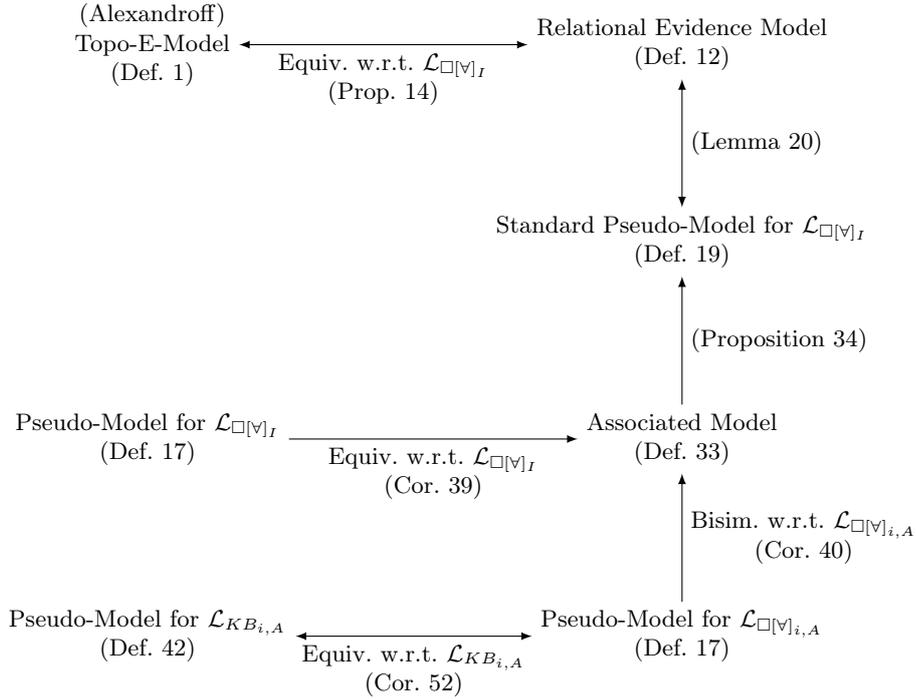

\subsection{Relational Semantics for Alexandroff topo-e-models}\label{pf-alexandroff-evidence-mod}

We first focus at an important special case of topo-e-models: the ones whose underlying topologies are \emph{Alexandroff}:

\begin{definition}[Alexandroff Topo-E-Model]\label{def:alexandroff-multi}
    A multi-agent topo-e-model $\mathfrak{M}={(X, \Pi_i, \tau_i,\llbracket\cdot\rrbracket)}_{i\in A}$ is \emph{Alexandroff} if for all $i\in A$, $\tau_i$ is closed under arbitrary intersections, i.e., $\bigcap \mathcal{C} \in \tau_i$ for any $\mathcal{C} \subseteq \tau_i$. 
\end{definition}

It turns out that  Alexandroff topo-e-models can be given an alternative \textit{relational} representation:

\begin{definition}[Relational Evidence Model]\label{def:rela-ev-model}
    A \emph{relational evidence model} is a structure  $\mathbf{X}={(X,\leq_i,\sim_i,\llbracket\cdot\rrbracket)}_{i\in A}$, where: $X$ is a set of states; for each agent $i\in A$, the relation $\leq_i\ \subseteq X\times X$ is a preorder and $\sim_i\ \subseteq X\times X$ is an equivalence relation, satisfying $\leq_i\ \subseteq\ \sim_i$; and $\llbracket\cdot\rrbracket: \prop\to \mathcal{P}(X)$ is a valuation map.

    In a relational evidence model, we \emph{define the group relations as abbreviations}: we put $ \leq_I := \bigcap_{i\in I}\leq_i$ and $\sim_I := \bigcap_{i\in I}\sim_i$, for all groups $I\subseteq A$.
\end{definition}

The semantics of $\mathcal{L}_{{\B[\A]}_I}$ on relational evidence models is as follows. 

\begin{definition}[Relational Semantics of $\mathcal{L}_{{\B[\A]}_I}$]\label{sem-relev} Given a relational evidence model $\mathbf{X}={(X,\leq_i,\sim_i,\llbracket\cdot\rrbracket)}_{i\in A}$ (over a countable vocabulary $\prop$) and a state $x\in X$, we recursively define the satisfaction relation by:
    \[
    \begin{array}{lll}
        (\mathbf{X}, x) \vDash p & \text{ iff } & x \in \llbracket p\rrbracket \\
        (\mathbf{X}, x) \vDash \neg \phi & \text{ iff } &  (\mathbf{X}, x) \not\vDash \phi \\
        (\mathbf{X}, x) \vDash \phi \wedge \psi & \text{ iff } & (\mathbf{X}, x) \vDash \phi \text{ and } (\mathbf{X}, x) \vDash \psi \\
        (\mathbf{X}, x) \vDash \B_I \phi & \text{ iff } & \text{ for all } y\in X \text{ s.t. } x\leq_I y: (\mathbf{X}, y) \vDash \phi \\
        (\mathbf{X}, x) \vDash {[\A]}_I \phi & \text{ iff } & \text{ for all } y\in X \text{ s.t. } x\sim_I y: (\mathbf{X}, y) \vDash \phi
    \end{array}
    \]
    where $p\in \prop$ is any proposition, $I\subseteq A$ is any group, and $\leq_I$ and $\sim_I$ are the abbreviations from \Cref{def:rela-ev-model}. The interpretation map is given by putting $\llbracket \phi\rrbracket =\{x\in X\mid x\models \phi \}$.
\end{definition}

We conclude by stating the correspondence, which we will use in the proofs of Theorems~\ref{corr:compness-boxall-frag} and~\ref{corr:compness-kb}. 

\begin{proposition}\label{cor:rel-sem-equiv-concl_OLD}
    For every Alexandroff multi-agent topo-e-models there exists a  $\mathcal{L}_{{\B[\A]}_I}$-equivalent relational evidence model, and vice versa. Hence, the $\mathcal{L}_{{\B[\A]}_I}$-logic of Alexandroff topo-e-models is the same as the  $\mathcal{L}_{{\B[\A]}_I}$-logic of relational evidence models.
\end{proposition}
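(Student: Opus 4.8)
\textbf{Proof plan for Proposition~\ref{cor:rel-sem-equiv-concl_OLD}.}
The plan is to exhibit a back-and-forth translation between Alexandroff topo-e-models and relational evidence models, verify that each translation preserves the satisfaction of all $\mathcal{L}_{{\B[\A]}_I}$-formulas, and then conclude the equality of the two logics. The key point is the classical correspondence between Alexandroff topologies and preorders: a topology $\tau$ on $X$ is Alexandroff iff it arises as the family of up-sets (or down-sets, depending on convention) of a preorder $\leq$, where $x\leq y$ iff every open set containing $x$ also contains $y$; under this correspondence the interior operator $Int_\tau$ becomes the relational box for $\leq$. Since each $\tau_i$ in an Alexandroff topo-e-model is Alexandroff, we get a preorder $\leq_i$; the partition $\Pi_i$ is turned into the equivalence relation $\sim_i$ with $x\sim_i y$ iff $\Pi_i(x)=\Pi_i(y)$; and the constraint $\Pi_i\subseteq\tau_i$ translates precisely into $\leq_i\,\subseteq\,\sim_i$ (since each partition cell, being open, is up-closed under $\leq_i$). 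Conversely, given a relational evidence model, one defines $\tau_i$ to be the Alexandroff topology of $\leq_i$-up-sets and $\Pi_i$ to be the partition into $\sim_i$-equivalence classes; the inclusion $\leq_i\,\subseteq\,\sim_i$ guarantees $\Pi_i\subseteq\tau_i$, so the result is a genuine Alexandroff topo-e-model. Keep the same carrier set $X$ and valuation $\llbracket\cdot\rrbracket$ throughout, so ``equivalent'' can be taken in the strong sense of pointwise agreement of truth sets.

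The next step is the semantic preservation argument, carried out by induction on the structure of $\phi\in\mathcal{L}_{{\B[\A]}_I}$. Atomic and Boolean cases are immediate since the valuation is untouched. For the modal cases, the crucial observation is that the group operations commute with the two translations: the join topology $\tau_I=\bigvee_{i\in I}\tau_i$ of Alexandroff topologies is again Alexandroff and its associated preorder is exactly $\bigcap_{i\in I}\leq_i\,=\,\leq_I$ (the Alexandroff-specific fact that a join of Alexandroff topologies has the intersection of the corresponding preorders as its specialization preorder), and likewise the group partition $\Pi_I=\bigvee_{i\in I}\Pi_i$ corresponds to $\bigcap_{i\in I}\sim_i\,=\,\sim_I$. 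Granting this, $\llbracket\B_I\phi\rrbracket=Int_{\tau_I}(\llbracket\phi\rrbracket)$ matches the relational clause ``for all $y$ with $x\leq_I y$, $y\models\phi$'' by the interior/box correspondence applied to $\leq_I$, and $\llbracket{[\A]}_I\phi\rrbracket=\{x\mid\Pi_I(x)\subseteq\llbracket\phi\rrbracket\}$ matches ``for all $y$ with $x\sim_I y$, $y\models\phi$'' directly; the induction hypothesis closes both cases. The same verification runs in the opposite direction with no new ideas.

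Finally, the equality of logics follows formally: a formula is valid on all Alexandroff topo-e-models iff (by the first translation and preservation) it is valid on all relational evidence models that arise as translations, and since every relational evidence model arises this way (apply the second translation, which lands in Alexandroff topo-e-models, then translate back and check one recovers the same model up to the identity on $X$ — indeed the two constructions are mutually inverse on the nose, not just up to equivalence), this is the same as validity on all relational evidence models; symmetrically for the converse. I expect the main obstacle to be the lemma that the specialization preorder of the join topology $\bigvee_{i\in I}\tau_i$ equals the intersection $\bigcap_{i\in I}\leq_i$ of the component preorders: while true for Alexandroff topologies, it needs the Alexandroff hypothesis (arbitrary intersections of opens are open) to ensure that the join topology's opens are exactly the sets up-closed in all the $\leq_i$ simultaneously, i.e.\ the $\leq_I$-up-sets. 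Care is also needed that finite intersections and arbitrary unions of $\leq_I$-up-sets stay up-closed (routine) and that no agent's partition is disturbed when forming $\Pi_I$. Once this lemma is in place the rest is bookkeeping.
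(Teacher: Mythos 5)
Your proposal is correct and follows essentially the same route as the paper: both directions are handled by the maps $\Rel(\cdot)$ (specialization preorder plus partition-induced equivalence) and $\Top(\cdot)$ (up-set topology plus quotient partition), with truth preservation by induction where the Alexandroff hypothesis supplies minimal open neighbourhoods so that the join topology's specialization preorder is $\bigcap_{i\in I}\leq_i$. The only cosmetic difference is that you define $\leq_i$ as the bare specialization preorder, whereas the paper additionally conjoins $\Pi_i(x)=\Pi_i(y)$ — a redundant conjunct, since partition cells are open and hence up-closed, exactly as you observe.
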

\begin{proof}
We will define maps $\Rel(\cdot)$ (\Cref{thm:top-to-rel-pres-truth_OLD}) and $\Top(\cdot)$ (\Cref{thm:rel-to-top-pres-truth_OLD}) between relational evidence models and Alexandroff topo-e-models, which preserve truth w.r.t.\ formulas over the language $\mathcal{L}_{{\B[\A]}_I}$. The claim then follows immediately from Lemmas~\ref{thm:top-to-rel-pres-truth_OLD} and~\ref{thm:rel-to-top-pres-truth_OLD} below.
\end{proof}

\begin{lemma}\label{thm:top-to-rel-pres-truth_OLD}
    Every Alexandroff multi-agent topo-e-model $\mathfrak{M}={(X, \Pi_i, \tau_i,\llbracket\cdot\rrbracket)}_{i\in A}$ is $\mathcal{L}_{{\B[\A]}_I}$-equivalent to a relational evidence model $\Rel(\mfM)={(X,\leq_i,\sim_i,\llbracket\cdot\rrbracket)}_{i\in A}$ with the same set of states $X$ and same valuation; i.e.,
   for every formula $\phi\in \mathcal{L}_{{\B[\A]}_I}(\prop)$, we have  
    \[
    \begin{array}{lll}
        (\mathfrak{M}, x)\vDash \phi \quad \text{ iff } \quad (\Rel(\mfM), x) \vDash \phi.
    \end{array}
    \]
\end{lemma}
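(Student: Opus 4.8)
The plan is to read the relational structure directly off the Alexandroff topologies in the usual way, and then verify preservation of truth by an induction on formulas whose only non-routine cases are the two modalities. First I would define $\Rel(\mfM)$. Since each $\tau_i$ is Alexandroff, every state $x$ has a \emph{least} $\tau_i$-open neighborhood $U^i_x := \bigcap\{U\in\tau_i \mid x\in U\}$, which lies in $\tau_i$. Put $x\leq_i y$ iff $y\in U^i_x$, put $x\sim_i y$ iff $\Pi_i(x)=\Pi_i(y)$ (equivalently, iff $y\in\Pi_i(x)$), and keep the same carrier $X$ and valuation $\llbracket\cdot\rrbracket$. Reflexivity of $\leq_i$ is immediate ($x\in U^i_x$); transitivity uses Alexandroffness, since if $y\in U^i_x$ then $U^i_x$ is itself a $\tau_i$-open set containing $y$, so minimality gives $U^i_y\subseteq U^i_x$, whence $z\in U^i_y$ yields $z\in U^i_x$. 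Each $\sim_i$ is an equivalence relation because $\Pi_i$ is a partition. Finally $\leq_i\,\subseteq\,\sim_i$ is exactly the model constraint $\Pi_i\subseteq\tau_i$: from $\Pi_i(x)\in\tau_i$ and $x\in\Pi_i(x)$, minimality gives $U^i_x\subseteq\Pi_i(x)$, so $x\leq_i y$ implies $y\in\Pi_i(x)$, i.e.\ $x\sim_i y$. Thus $\Rel(\mfM)$ is a relational evidence model in the sense of \Cref{def:rela-ev-model}.

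The main work is then the induction on $\phi\in\mathcal{L}_{{\B[\A]}_I}$ showing $(\mfM,x)\vDash\phi$ iff $(\Rel(\mfM),x)\vDash\phi$. Atoms and Boolean connectives are immediate, as the valuation is shared and the clauses for $\neg,\wedge$ coincide. For the modalities I would use two auxiliary facts: (i) the join topology $\tau_I=\bigvee_{i\in I}\tau_i$ is again Alexandroff, with least neighborhood $U^I_x=\bigcap_{i\in I}U^i_x$ at every $x$; and (ii) $\Pi_I(x)=\bigcap_{i\in I}\Pi_i(x)$, which is merely the definition of the join partition. Granting (i): $x\vDash\B_I\phi$ iff $x\in Int_I(\llbracket\phi\rrbracket^{\mfM})$ iff $U^I_x\subseteq\llbracket\phi\rrbracket^{\mfM}$, and since $U^I_x=\{y\mid x\leq_I y\}$ with $\leq_I=\bigcap_{i\in I}\leq_i$ as in \Cref{def:rela-ev-model}, applying the inductive hypothesis at each $y$ turns this into ``every $y$ with $x\leq_I y$ satisfies $\phi$ in $\Rel(\mfM)$'', i.e.\ $(\Rel(\mfM),x)\vDash\B_I\phi$. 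Granting (ii): $x\vDash{[\A]}_I\phi$ iff $\Pi_I(x)\subseteq\llbracket\phi\rrbracket^{\mfM}$, and since $\Pi_I(x)=\{y\mid x\sim_I y\}$ with $\sim_I=\bigcap_{i\in I}\sim_i$, the inductive hypothesis again turns this into $(\Rel(\mfM),x)\vDash{[\A]}_I\phi$. This closes the induction.

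I expect the only genuine obstacle to be fact (i): that $\bigcap_{i\in I}U^i_x$ really is the least $\tau_I$-neighborhood of $x$, equivalently that the relational group relation $\bigcap_{i\in I}\leq_i$ faithfully encodes the group interior $Int_I$. For this I would exploit that $I$ is finite (as $A$ is): then $\bigcap_{i\in I}U^i_x$ is a finite intersection of opens, hence $\tau_I$-open and containing $x$; conversely, any $\tau_I$-open set around $x$ contains a basic open $V_1\cap\dots\cap V_k$ with each $V_j$ open in some $\tau_{i_j}$ and $x\in V_j$, hence contains $U^{i_1}_x\cap\dots\cap U^{i_k}_x\supseteq\bigcap_{i\in I}U^i_x$. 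Closure of $\tau_I$ under arbitrary intersections follows in the same way, so $\tau_I$ is Alexandroff; everything else is bookkeeping. (The reverse construction, building an Alexandroff topo-e-model from a relational evidence model, is treated separately in \Cref{thm:rel-to-top-pres-truth_OLD}.)
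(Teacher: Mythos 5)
Your proof is correct and follows essentially the same route as the paper's: the relation $\leq_i$ you define via least open neighborhoods is exactly the specialization pre-order used in the paper (your observation that $\Pi_i(x)\in\tau_i$ forces $U^i_x\subseteq\Pi_i(x)$ shows the paper's extra conjunct $\Pi_i(x)=\Pi_i(y)$ is automatic), and the induction hinges on the same key fact that $\bigcap_{i\in I}U^i_x$ is the least $\tau_I$-open neighborhood of $x$, which you verify correctly using finiteness of $I$ and Alexandroffness of each $\tau_i$.
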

\begin{proof}
    We construct a truth-preserving map $\mfM \mapsto \Rel(\mfM)$. Given $\mfM$, we define for each $i\in A$ and any $x,y\in X$:
    \begin{enumerate}
        \item $x\leq_i y$ if and only if $\Pi_i(x)=\Pi_i(y)$ and $x\sqsubseteq_{\tau_i}y$, where $\sqsubseteq_{\tau_i}$ is the specialization pre-order\footnotemark{} for $\tau_i$; 
        \item $x\sim_i y$ if and only if $\Pi_i(x)=\Pi_i(y)$. 
    \end{enumerate}\footnotetext{The \emph{specialization pre-order} $\sqsubseteq_\tau$ on a topological space $(X,\tau)$ is defined as $x \sqsubseteq_\tau y$ iff $x \in Cl_\tau(\{y\})$ iff $(\A U \in \tau)(x \in U$ implies $y \in U)$~\cite{handbook5}.}
    To show that $\Rel(\mfM)={(X,\leq_i,\sim_i,\llbracket\cdot\rrbracket)}_{i\in A}$ is a relational evidence model, we check the conditions from \Cref{def:rela-ev-model}. Let $i\in A$. 
    
    First, the relation $\leq_i$ is a pre-order. For reflexivity, we have $x\leq_i x$ if and only if $\Pi_i(x)=\Pi_i(x)$; $x \sqsubseteq_{\tau_i} x$ follows from the definition of the specialization pre-order. To see that $\leq_i$ is transitive, let $x\leq_i y\leq_i z$, i.e., let $\Pi_i(x)=\Pi_i(y)=\Pi_i(z)$ and $x \sqsubseteq_{\tau_i} y \sqsubseteq_{\tau_i} z$. Then $\Pi_i(x)=\Pi_i(z)$ and $x \sqsubseteq_{\tau_i} z$, so $x\leq_i z$.

    Second, the relation $\sim_i$ is an equivalence relation. This follows directly from the definition of $\Rel(\cdot)$ and the properties of a partition.

    Finally, inclusion is satisfied, i.e., $\leq_i\ \subseteq\ \sim_i$: suppose $x\leq_i y$. Then by definition of $\Rel(\cdot)$, $\Pi_i(x)=\Pi_i(y)$ and therefore, $x\sim_i y$, as required. Thus, $\Rel(\mfM)$ is indeed a relational evidence model.

    We now prove the modal equivalence claim by induction on the complexity of $\phi$. Let $\mathfrak{M}={(X, \Pi_i, \tau_i,\llbracket\cdot\rrbracket)}_{i\in A}$ be a topo-e-model and consider $\Rel(\mfM)={(X,\leq_i,\sim_i,\llbracket\cdot\rrbracket)}_{i\in A}$. The base case of atomic propositions, and the boolean cases of the induction step, are standard. We only show the proof of the modality $\B_I$; the proof of ${[\A]}_I$ is similar and less complicated, as the definition of the $\sim_I$ relations only concerns the partition, whereas the $\leq_I$ relations additionally involve the specialization pre-order.

    For the case where $\phi=\B_I\psi$, suppose for the left-to-right direction that $(\mathfrak{M}, x) \vDash\B_I\psi$. Then by \Cref{def:our-full-language}, $x\in Int_I(\llbracket\psi\rrbracket)$, i.e., there is $U=\bigcap_{i\in I}U_i$ with $U_i\in \tau_i$ for all $i\in I$, such that $x\in U\subseteq \llbracket\psi\rrbracket$. Now suppose for contradiction that $(\Rel(\mfM), x)\nvDash \B_I\psi$, i.e., suppose there is $y\in X$ such that $x\leq_I y$ but $(\Rel(\mfM), y)\nvDash \psi$ (by \Cref{sem-relev}). Then by the induction hypothesis, $(\mathfrak{M}, y)\nvDash\psi$. By definition of $\leq_I$, we have for all $i\in I$ that $x\leq_i y$ and thus, $x\sqsubseteq_{\tau_i}y$. So by definition of the specialization pre-order we have for all $i\in I$, for all $V\in\tau_i$, that $x\in V$ implies $y\in V$. In particular, this means that $y\in \bigcap_{i\in I}U_i=U\subseteq \llbracket\psi\rrbracket$ and therefore $(\mathfrak{M}, y)\vDash\psi$, giving us the desired contradiction. We conclude that $(\mathfrak{M}, x)\vDash \B_I\psi$.

    For the converse direction, suppose that $(\Rel(\mfM), x)\vDash \B_I\psi$. Then we have for all $y\in X$ such that $x\leq_I y$, $(\Rel(\mfM), y)\vDash\psi$ (\Cref{sem-relev}). Let such $y$ be arbitrary. By the induction hypothesis, $(\mathfrak{M}, y) \vDash \psi$. Furthermore, by definition of $\leq_I$, we have that $x\leq_i y$ for all $i\in I$ and so, by definition of $\leq_i$, that $x\sqsubseteq_{\tau_i} y$ for all $i\in I$. Let such $i\in I$ be arbitrary. By definition of the specialization pre-order, for all $U\in \tau_i$ and for all $y'\in X$ such that $x\leq_i y'$, $x\in U$ implies $y'\in U$. So the intersection of the set $\tau_i^*(x)$ of all open neighbourhoods of $x$ in $\Pi_i(x)$ must be a subset of $\llbracket\psi\rrbracket$:
    \begin{align*}
        \tau_i^*(x) &= \bigcap\{U\in\tau_i \mid x\in U\}\cap \Pi_i(x) \\
        &= \{y\in \Pi_i(x)\mid \A U\in \tau_i(x\in U\Rightarrow y\in U)\} & \\
        &= \{y\in X\mid x\leq_i y\} & \text{ (Def. $\Rel(\cdot)$)}\\
        &\subseteq \llbracket\psi\rrbracket. &
    \end{align*}
    Furthermore, because $\tau_i$ is, by assumption, Alexandroff, $\bigcap\{U\in\tau_i \mid x\in U\}\in \tau_i$. For $i\in I$, let $U_i:=\bigcap\{U\in\tau_i \mid x\in U\}$. Then the set $\bigcap_{i\in I}U_i$ is open in the join topology $\tau_I$. Furthermore, $(\bigcap_{i\in I}U_i)\subseteq \llbracket\psi\rrbracket$. Because $x\in U_i$ for all $i\in I$, we have $x\in (\bigcap_{i\in I}U_i)$. But this gives us that $(\Rel(\mfM), x) \vDash\B_I\psi$, as required.
\end{proof}

\begin{lemma}\label{thm:rel-to-top-pres-truth_OLD}
    Every relational evidence model
    $\mathbf{X}={(X,\leq_i,\sim_i,\llbracket\cdot\rrbracket)}_{i\in A}$
    is $\mathcal{L}_{{\B[\A]}_I}$-equivalent to an Alexandroff multi-agent topo-e-model $\Top(\mathbf{X})={(X, \Pi_i, \tau_i, \llbracket\cdot\rrbracket)}_{i\in A}$ with the same set of states $X$ and same valuation; i.e.
    for every formula $\phi\in \mathcal{L}_{{\B[\A]}_I}(\prop)$ and for every state $x$ of $\mathbf{X}$, we have  
    \[
        (\mathbf{X}, x)\vDash \phi \quad \text{ iff } \quad (\Top(\mathbf{X}), x) \vDash \phi.
    \] 
\end{lemma}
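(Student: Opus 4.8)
The plan is to reverse the construction of \Cref{thm:top-to-rel-pres-truth_OLD}. Given a relational evidence model $\mathbf{X}=(X,\leq_i,\sim_i,\llbracket\cdot\rrbracket)_{i\in A}$, I define $\Top(\mathbf{X})=(X,\Pi_i,\tau_i,\llbracket\cdot\rrbracket)_{i\in A}$ on the same carrier and valuation by taking $\Pi_i$ to be the partition of $X$ into $\sim_i$-equivalence classes (so $\Pi_i(x)=\{y\mid x\sim_i y\}$), and $\tau_i$ to be the \emph{Alexandroff topology of the preorder $\leq_i$}, i.e.\ $\tau_i:=\{U\subseteq X\mid U\text{ is }\leq_i\text{-upward closed}\}$. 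First I would verify that $\Top(\mathbf{X})$ is a legitimate Alexandroff multi-agent topo-e-model in the sense of \Cref{def:alexandroff-multi} and \Cref{multiag-topoemodel}: $\Pi_i$ is visibly a partition; $\tau_i$ contains $\emptyset$ and $X$ and is closed under arbitrary unions \emph{and} arbitrary intersections (unions and intersections of upward-closed sets are upward closed), so it is an Alexandroff topology; and $\Pi_i\subseteq\tau_i$ holds because, by the relational-evidence-model axiom $\leq_i\ \subseteq\ \sim_i$, every $\sim_i$-class is $\leq_i$-upward closed (if $x\sim_i z$ and $z\leq_i w$ then $z\sim_i w$, hence $x\sim_i w$), hence open.

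The two semantic facts that carry the argument are: \textbf{(a)} the group topology $\tau_I=\bigvee_{i\in I}\tau_i$ coincides with the Alexandroff topology of $\leq_I=\bigcap_{i\in I}\leq_i$, so that $Int_I(P)=\{x\mid \forall y\,(x\leq_I y\Rightarrow y\in P)\}$; and \textbf{(b)} $\Pi_I(x)=\bigcap_{i\in I}\Pi_i(x)$ is exactly the $\sim_I$-equivalence class of $x$, where $\sim_I=\bigcap_{i\in I}\sim_i$. Fact (b) is immediate from unfolding definitions. For (a), the inclusion $\tau_I\subseteq\{\leq_I\text{-up-sets}\}$ holds since every set in $\bigcup_{i\in I}\tau_i$ is $\leq_i$-upward closed for some $i$, hence $\leq_I$-upward closed (as $\leq_I\ \subseteq\ \leq_i$), and this property is preserved under the finite intersections and arbitrary unions that generate the join. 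For the converse, given a $\leq_I$-up-set $U$ and $x\in U$, the principal up-set $U_x:=\{y\mid x\leq_I y\}=\bigcap_{i\in I}\{y\mid x\leq_i y\}$ is a finite intersection of $\tau_i$-open sets (principal up-sets are open in an Alexandroff space), hence $U_x\in\tau_I$; moreover $x\in U_x\subseteq U$, so $U=\bigcup_{x\in U}U_x\in\tau_I$. Computing the interior pointwise then yields the displayed formula for $Int_I$.

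With (a) and (b) in hand, the equivalence $(\mathbf{X},x)\vDash\phi\iff(\Top(\mathbf{X}),x)\vDash\phi$ follows by a routine induction on $\phi\in\mathcal{L}_{{\B[\A]}_I}$. The atomic and Boolean cases are standard, since the carrier and valuation are unchanged. For $\phi=\B_I\psi$: by \Cref{def:our-full-language}, $\llbracket\B_I\psi\rrbracket^{\Top(\mathbf{X})}=Int_I(\llbracket\psi\rrbracket^{\Top(\mathbf{X})})$, which by the induction hypothesis equals $Int_I(\llbracket\psi\rrbracket^{\mathbf{X}})$, which by (a) equals $\{x\mid\forall y\,(x\leq_I y\Rightarrow(\mathbf{X},y)\vDash\psi)\}$, i.e.\ $\llbracket\B_I\psi\rrbracket^{\mathbf{X}}$ by \Cref{sem-relev}. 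The case $\phi={[\A]}_I\psi$ is analogous, using (b) together with the clause $\llbracket{[\A]}_I\psi\rrbracket^{\Top(\mathbf{X})}=\{x\mid\Pi_I(x)\subseteq\llbracket\psi\rrbracket^{\Top(\mathbf{X})}\}$ and the relational clause for ${[\A]}_I$.

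I expect the only nontrivial point to be fact (a): reconciling the topological join $\bigvee_{i\in I}\tau_i$ (closure of $\bigcup_{i\in I}\tau_i$ under finite intersections and arbitrary unions) with the relational meet $\bigcap_{i\in I}\leq_i$ of the preorders — essentially the statement that the join of Alexandroff topologies is the Alexandroff topology of the intersection of their specialization preorders. Once this is isolated as a small self-contained lemma about Alexandroff spaces, the remainder is bookkeeping; in particular no delicate topological reasoning is needed, because in the Alexandroff setting interiors are computed pointwise via principal up-sets. (As a side remark, the specialization preorder of the $\tau_i$ just defined is exactly $\leq_i$, so in fact $\Rel(\Top(\mathbf{X}))=\mathbf{X}$ on the nose, though this stronger fact is not needed for the lemma.)
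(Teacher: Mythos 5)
Your proposal is correct and follows essentially the same route as the paper: the construction of $\Top(\mathbf{X})$ is identical (the topology generated by the principal up-sets of $\leq_i$ is exactly your topology of all $\leq_i$-up-sets, and $\Pi_i$ is the set of $\sim_i$-classes), and the semantic induction rests on the same observation that $\uparrow_{\leq_I}x=\bigcap_{i\in I}\uparrow_{\leq_i}x$ is the least $\tau_I$-open neighbourhood of $x$. The only difference is presentational: you isolate the identification of $\tau_I$ with the Alexandroff topology of $\leq_I$ as a standalone fact and then compute interiors pointwise, whereas the paper argues the two directions of the $\B_I$ case directly, but the content is the same.
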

\begin{proof}
    We construct a truth-preserving map $\mathbf{X}\mapsto \Top(\mathbf{X})$. Given $\mathbf{X}$, we define for each $i\in A$:
    \begin{enumerate}
        \item $\tau_i$ as the topology generated by $\mE_i^0:=\{\uparrow_{\leq_i}\!x\mid x\in X\}$, where $\uparrow_{\leq_i}\!x$ is the up-set of the singleton set $\{x\}$ with respect to the relation $\leq_i$;
        \item $\Pi_i:=S/\sim_i$, i.e., let $\Pi_i$ be given by the quotient space of $S$ by $\sim_i$. 
    \end{enumerate}
    To show that $\Top(\mathbf{X})$ is an Alexandroff topo-e-model, we check that each $\tau_i$ and $\Pi_i$ satisfy the conditions from \Cref{multiag-topoemodel}; and that each $\tau_i$ is Alexandroff. Let $i\in A$. It is clear that by construction, $\Pi_i$ is a partition of $X$ and $\tau_i$ is a topology on $X$; furthermore, we have that $\Pi_i\subseteq \tau_i$, due to the property of $\mathbf{X}$ that $\leq_i\ \subseteq\ \sim_i$ (\Cref{def:rela-ev-model}). To see that $\tau_i$ is Alexandroff, it suffices to show that every element of the space has a least open neighbourhood~\cite{handbook5}. In this case, the least open neighbourhood of every $x\in X$ is given by $\uparrow_{\leq_i}\!x$.

    We now prove the modal equivalence claim by induction on the complexity of $\phi$. Let $\mathbf{X}={(X,\leq_i,\sim_i,\llbracket\cdot\rrbracket)}_{i\in A}$ be a relational evidence model and consider $\Top(\mathbf{X})={(X, \Pi_i, \tau_i,\llbracket\cdot\rrbracket)}_{i\in A}$. The base case of atomic propositions and the boolean cases of the induction step are standard. So we focus on the cases involving modalities $\B_I$ and ${[\A]}_I$.

    For the case where $\phi=\B_I\psi$, suppose for the left-to-right direction that $(\mathbf{X}, x) \vDash\B_I\psi$. Then, for all $y\in X$ such that $x\leq_I y$, we have $(\mathbf{X}, y) \vDash\psi$ (\Cref{sem-relev}). By the induction hypothesis, $(\Top(\mathbf{X}), y) \vDash\psi$. So $\uparrow_{\leq_I}\!x =\{y\in X\mid x\leq_I y\}\subseteq \llbracket\psi\rrbracket$. The following equivalences show that $\uparrow_{\leq_I}\!x$ is equivalent to $\bigcap_{i\in I}\uparrow_{\leq_i}\!x$:
    \begin{align*}
        \uparrow_{\leq_I}\!x &= \{y\in X\mid x\leq_i y \text{ for all } i\in I\} \\
        &= \bigcap_{i\in I}\{y\in X\mid x\leq_i y\} \\
        &= \bigcap_{i\in I}\uparrow_{\leq_i}\!x.
    \end{align*}
    But $\bigcap_{i\in I}\uparrow_{\leq_i}\!x$ is open in the join topology $\tau_I$. Thus, with $x\in \left (\bigcap_{i\in I}\uparrow_{\leq_i}\!x\right )\subseteq\llbracket\psi\rrbracket$, we can conclude that $(\Top(\mathbf{X}), x) \vDash\B_I\psi$ (\Cref{def:our-full-language}).

    For the converse direction, suppose that $(\Top(\mathbf{X}), x) \vDash\B_I\psi$. Then for each $i\in I$ there is $U_i\in \tau_i$ such that $\bigcap_{i\in I}U_i = U\in \tau_I$ and $x\in U\subseteq \llbracket\psi\rrbracket$ (\Cref{def:our-full-language}). Now let $y\in X$ be arbitrary and suppose $x\leq_I y$. It remains to show that $(\mathbf{X}, y) \vDash\psi$. By $x\leq_I y$, we have for all $i\in I$ that $x\leq_i y$. So let $i\in I$ be arbitrary. Since $U_i$ is an up-set, we know that $y\in U_i$. Since $i$ was arbitrary, we have $y\in \bigcap_{i\in I}U_i = U$. But then it follows from the fact that $U\subseteq \llbracket\psi\rrbracket$, that $(\Top(\mathbf{X}), y) \vDash\psi$. By the induction hypothesis, $(\mathbf{X}, y) \vDash\psi$. Therefore, $(\mathbf{X}, x) \vDash\B_I\psi$ (\Cref{sem-relev}).

    For the case where $\phi = {[\A]}_I\psi$, suppose for the left-to-right direction that $(\mathbf{X}, x) \vDash {[\A]}_I\psi$. Then for all $y\sim_I x$ we have $(\mathbf{X}, y) \vDash\psi$ (\Cref{sem-relev}). Now let $y\in \Pi_I(x)$ be arbitrary. By definition of $\Top(\cdot)$ we have $y\sim_I x$, so automatically, by $(\mathbf{X}, y) \vDash\psi$ and the induction hypothesis, $(\Top(\mathbf{X}), y) \vDash\psi$. But then $\Pi_I(x)\subseteq \llbracket\psi\rrbracket$, which gives us $(\Top(\mathbf{X}), x)\vDash {[\A]}_I\psi$ (\Cref{def:our-full-language}).

    For the converse direction, suppose that $(\Top(\mathbf{X}), x)\vDash {[\A]}_I\psi$. Then $\Pi_I(x)\subseteq \llbracket\psi\rrbracket$ (\Cref{def:our-full-language}). So let $y\in X$ be arbitrary and suppose $x\sim_I y$. It suffices to show that $(\mathbf{X}, y) \vDash\psi$. But this follows directly from $y$ being in $\Pi_I(x)$, by definition of $\Top(\cdot)$, which gives us that $(\Top(\mathbf{X}), y)\vDash \psi$. By the induction hypothesis, $(\mathbf{X}, y) \vDash\psi$ and therefore, $(\mathbf{X}, x) \vDash {[\A]}_I\psi$.
\end{proof}

%%%%%%%%%%%%%%%%%%%%%%%%%%%%%%%%%%%%%%%%%%%%%%%%%%%%%%%%%%%%%%%%%%%%%%

\subsection{Proof of Completeness and Decidability for the Logic of Group Evidence (\texorpdfstring{\Cref{corr:compness-boxall-frag}}{Theorem 6})}

Completeness for $\bm{{\B[\A]}_I}$ (resp. $\bm{{\B[\A]}_{i,A}}$) is proved by the chain of correspondences from pseudo-models for $\mathcal{L}_{{\B[\A]}_I}$ (resp. $\mathcal{L}_{{\B[\A]}_{i,A}}$), to associated models, to topo-e-models. The completeness proof for $\bm{{\B[\A]}_I}$ is similar to existing completeness proofs for logics that incorporate distributed knowledge. In particular, our proof closely resembles the proof in Appendix A of~\cite{subgroups}, which proves completeness of a logic incorporating, among other notions, distributed knowledge for all subgroups. Throughout the proof for $\bm{{\B[\A]}_I}$, which we discuss in detail, we explain how it can be adapted to $\bm{{\B[\A]}_{i,A}}$. 

Throughout the proof, fix a finite set of agents $A$ and a finite vocabulary $\prop$. 

\subsubsection{Soundness and Completeness of \texorpdfstring{$\mathcal{L}_{{\B[\A]}_I}$}{BoxForallI} w.r.t. Pseudo-Models.}\label{sec:comp-pseudo}
We first prove completeness with respect to non-standard models, which we call pseudo-models, for $\mathcal{L}_{{\B[\A]}_I}$ and for $\mathcal{L}_{{\B[\A]}_{i,A}}$. 

\begin{definition}[Pseudo-Model for $\mathcal{L}_{{\B[\A]}_I}$ and for $\mathcal{L}_{{\B[\A]}_{i,A}}$]\label{pseudo-model}
    A \emph{pseudo-model for $\mathcal{L}_{{\B[\A]}_I}$} is a structure $\mathbf{S}={(S,\leq_I,\sim_I,\llbracket\cdot\rrbracket)}_{ I\subseteq A}$,  where $S$ is a set of states; for each group $I\subseteq A$, the relation $\leq_I\ \subseteq S\times S$ is a preorder and $\sim_I\ \subseteq S\times S$ is an equivalence relation; $\llbracket\cdot\rrbracket: \prop \to \mathcal{P}(X)$ is a valuation map; and relations are required to satisfy the following two conditions:
    \begin{enumerate}
        \item \textbf{Anti-Monotonicity}. For all groups $I\subseteq A$, and $s,t\in X$: 
        \begin{itemize}
            \item If $s\leq_I t$ and $I \supseteq J\neq \emptyset$, then $s\leq_J t$;
            \item If $s\sim_I t$ and $I \supseteq J\neq \emptyset$, then $s\sim_J t$.
        \end{itemize}
        \item \textbf{Inclusion}. For all groups $I\subseteq A$: $\leq_I\ \subseteq\ \sim_I$.
    \end{enumerate}
    Pseudo-models for the fragment $\mathcal{L}_{{\B[\A]}_{i,A}}$ are obtained by imposing the same conditions, with the relations restricted to $\leq_\alpha$ and $\sim_\alpha$, with $\alpha \in \{A\}\cup A$. 
\end{definition}

We define the following semantics.

\begin{definition}[Pseudo-Model Semantics of $\mathcal{L}_{{\B[\A]}_I}$ and $\mathcal{L}_{{\B[\A]}_{i,A}}$]\label{sem-boxall-pseudo}
    Given a pseudo-model $\mathbf{S}$ for $\mathcal{L}_{{\B[\A]}_I}$ and a state $s$ of $\mathbf{S}$, we recursively define
    \[
    \begin{array}{lll}
        (\mathbf{S}, s) \vDash p & \text{ iff } & s \in \llbracket p\rrbracket\\
        (\mathbf{S}, s) \vDash \neg \phi & \text{ iff } &  (\mathbf{S}, s) \not\vDash \phi \\
        (\mathbf{S}, s) \vDash \phi \wedge \psi & \text{ iff } & (\mathbf{S}, s) \vDash \phi \text{ and } (\mathbf{S}, s) \vDash \psi \\
        (\mathbf{S}, s) \vDash \B_I \phi & \text{ iff } & \text{ for all } t\in S \text{ s.t. } s\leq_I t: (\mathbf{S}, t) \vDash \phi \\
        (\mathbf{S}, s) \vDash {[\A]}_I \phi & \text{ iff } & \text{ for all } t\in S\text{ s.t. } s\sim_I t: (\mathbf{S}, t) \vDash \phi
    \end{array}
    \]
    where $p\in \prop$ is any proposition and $I\subseteq A$ is any group. The semantics for $\mathcal{L}_{{\B[\A]}_{i,A}}$ is obtained by restricting the above definition to this language. 
\end{definition}

In fact, \emph{standard} pseudo-models can be represented as relational evidence models (and vice versa): 

\begin{definition}[Standard Pseudo-Model for $\mathcal{L}_{{\B[\A]}_I}$ and $\mathcal{L}_{{\B[\A]}_{i,A}}$]\label{standard-ps}
    A pseudo-model for $\mathcal{L}_{{\B[\A]}_I}$  is \emph{standard} if it also satisfies the following condition:\footnote{For one direction, the intersection condition reduces to anti-monotonicity: let $s,t\in S$ and let $I,J\subseteq A$ be nonempty. Then, if $s\sim_{I\cup J} t$, we have by $I\subseteq I\cup J$ that $s\sim_{I} t$; analogously, with $J\subseteq I\cup J$, we have $s\sim_{J} t$.}
    \begin{enumerate}
        \item[3.] \textbf{Intersection}. For all groups $I,J\subseteq A$:
        \begin{itemize}
            \item $\leq_{I\cup J}$ is the intersection of $\leq_I$ and $\leq_J$; 
            \item $\sim_{I\cup J}$ is the intersection of $\sim_I$ and $\sim_J$. 
        \end{itemize}
    \end{enumerate}
    Restricting this definition, a pseudo-model for $\mathcal{L}_{{\B[\A]}_{i,A}}$ is standard if $\leq_A=\bigcap_{i\in A}\leq_i$ and $\sim_A=\bigcap_{i\in A}\sim_i$.
\end{definition}

\begin{lemma}\label{fact-rel-standard_OLD}
    For every relational evidence model there exists a modally equivalent standard pseudo-model, and vice versa.
\end{lemma}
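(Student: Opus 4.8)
The plan is to exhibit translations in both directions, each acting as the identity on the set of states and on the valuation, so that the modal-equivalence claim reduces to checking that the relations indexed by each group agree on the two sides (the satisfaction clauses of \Cref{sem-relev} and \Cref{sem-boxall-pseudo} being literally identical).

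For the first direction, given a relational evidence model $\mathbf{X} = (X, \leq_i, \sim_i, \llbracket\cdot\rrbracket)_{i\in A}$ I would \emph{materialize} its abbreviated group relations: build a structure $\mathbf{S}$ on the same carrier and valuation, with $\leq_I := \bigcap_{i\in I}\leq_i$ and $\sim_I := \bigcap_{i\in I}\sim_i$ for every nonempty $I\subseteq A$ (so the singleton relations are unchanged). Verifying that $\mathbf{S}$ is a pseudo-model is routine: an intersection of preorders is a preorder and an intersection of equivalence relations is an equivalence relation; Anti-Monotonicity holds because intersecting over a larger index set produces a smaller relation, i.e.\ $J\subseteq I$ yields $\bigcap_{i\in I}\leq_i \subseteq \bigcap_{j\in J}\leq_j$ (and likewise for $\sim$); and Inclusion holds pointwise, $\bigcap_{i\in I}\leq_i \subseteq \bigcap_{i\in I}\sim_i$, since $\leq_i \subseteq \sim_i$ for each $i$. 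Standardness is then a one-line computation: $\leq_{I\cup J} = \bigcap_{k\in I\cup J}\leq_k = (\bigcap_{i\in I}\leq_i)\cap(\bigcap_{j\in J}\leq_j) = \leq_I \cap \leq_J$, and the same for $\sim$. Since the group relations of $\mathbf{X}$ are, by \Cref{def:rela-ev-model}, exactly the intersections placed in $\mathbf{S}$, a trivial induction on formulas gives $(\mathbf{X},x) \vDash \phi$ iff $(\mathbf{S},x)\vDash\phi$ for all $\phi\in\mathcal{L}_{{\B[\A]}_I}$, so $\mathbf{S}$ is the desired modally equivalent standard pseudo-model.

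For the converse, given a standard pseudo-model $\mathbf{S} = (S, \leq_I, \sim_I, \llbracket\cdot\rrbracket)_{I\subseteq A}$ I would retain only its singleton relations, setting $\mathbf{X} = (S, \leq_{\{i\}}, \sim_{\{i\}}, \llbracket\cdot\rrbracket)_{i\in A}$; this is a relational evidence model, since each $\leq_{\{i\}}$ is a preorder, each $\sim_{\{i\}}$ an equivalence relation, and $\leq_{\{i\}}\subseteq\sim_{\{i\}}$ by Inclusion. The only point where standardness is genuinely used is to show that the group relations $\mathbf{X}$ defines as abbreviations, $\bigcap_{i\in I}\leq_{\{i\}}$ and $\bigcap_{i\in I}\sim_{\{i\}}$, coincide with the relations $\leq_I, \sim_I$ of $\mathbf{S}$; I would prove this by induction on $|I|$, the case $|I| = 1$ being trivial and, for $|I|\geq 2$, writing $I = I'\cup\{i\}$ with $I'$ nonempty so that the Intersection condition and the induction hypothesis give $\leq_I = \leq_{I'}\cap\leq_{\{i\}} = (\bigcap_{j\in I'}\leq_{\{j\}})\cap\leq_{\{i\}} = \bigcap_{j\in I}\leq_{\{j\}}$, and similarly for $\sim$. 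With all group relations in agreement, the identity map is again a modal equivalence. The fragment $\mathcal{L}_{{\B[\A]}_{i,A}}$ is handled by exactly the same two constructions after restricting attention to the relations indexed by $\alpha\in\{A\}\cup A$; there the induction on group size is unnecessary, and standardness for the fragment ($\leq_A = \bigcap_{i\in A}\leq_i$, $\sim_A = \bigcap_{i\in A}\sim_i$) is precisely what recovers the $A$-relations.

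I do not anticipate any real obstacle: the lemma merely expresses that materializing the abbreviated group relations of a relational evidence model yields exactly the standard pseudo-models, and vice versa. The only mild care required is in the backward direction, where one must peel agents off a group while keeping the remainder nonempty in order to apply the Intersection condition, together with the analogous point for the fragment at the full group $A$.
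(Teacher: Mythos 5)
Your proposal is correct and follows essentially the same route as the paper: materialize the group relations as intersections in one direction, restrict to singletons in the other, and observe that the satisfaction clauses then agree verbatim. The only difference is that you spell out the induction on $|I|$ needed to derive $\leq_I=\bigcap_{i\in I}\leq_{\{i\}}$ from the binary Intersection condition, a step the paper's proof asserts without detail.
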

\begin{proof}
    We can represent a relational evidence model $\mathbf{X}$ as a standard pseudo-model $\mathbf{S}$ for ${\mathcal{L}_{{\B[\A]}_I}}$ by setting $\leq_I\ := \bigcap_{i\in I}\leq_i$ and setting $\sim_I\ := \bigcap_{i\in I}\sim_i$. Conversely, we represent a standard pseudo-model $\mathbf{S}$ as a relational evidence model $\mathbf{X}$ by setting $\leq_i\ :=\ \leq_{\{i\}}$ and $\sim_i\ :=\ \sim_{\{i\}}$.

    The interpretation of any formula $\phi\in {\mathcal{L}_{{\B[\A]}_I}}$ on the relational evidence model $\mathbf{X}$ (according to \Cref{sem-relev}) agrees with the interpretation of $\phi$ on the standard pseudo-model $\mathbf{S}$ (according to \Cref{sem-boxall-pseudo}), because the abbreviations $\leq_I := \bigcap_{i\in I}\leq_i$ and $\sim_I := \bigcap_{i\in I}\sim_i$ on $\mathbf{X}$ coincide with the directly defined group relations $\leq_I$ and $\sim_I$ on $\mathbf{S}$.
\end{proof}

Thus, in order to prove completeness with respect to relational evidence models, it suffices to prove the claim with respect to standard pseudo-models for $\mathcal{L}_{{\B[\A]}_I}$. Before showing this, we first prove soundness and completeness with respect to \emph{general} pseudo-models for $\mathcal{L}_{{\B[\A]}_I}$. The structure of this proof follows the structure of the proof in Appendix A.1 of~\cite{subgroups}. \Cref{sound_OLD} takes care of soundness. 

\begin{proposition}\label{sound_OLD}
    The proof system $\bm{{\B[\A]}_I}$ is sound with respect to pseudo-models for $\mathcal{L}_{{\B[\A]}_I}$, and the proof system $\bm{{\B[\A]}_{i,A}}$ is sound with respect to pseudo-models for $\mathcal{L}_{{\B[\A]}_{i,A}}$. 
\end{proposition}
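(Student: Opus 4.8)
The plan is to prove soundness in the usual way: show that every axiom of $\bm{{\B[\A]}_I}$ is valid on all pseudo-models for $\mathcal{L}_{{\B[\A]}_I}$, and that every inference rule preserves validity; soundness for $\bm{{\B[\A]}_{i,A}}$ then follows by noting that its axioms and rules are just the restrictions of those of $\bm{{\B[\A]}_I}$ to the sublanguage, and that pseudo-models for $\mathcal{L}_{{\B[\A]}_{i,A}}$ are (reducts of) pseudo-models for $\mathcal{L}_{{\B[\A]}_I}$. Fix an arbitrary pseudo-model $\mathbf{S}={(S,\leq_I,\sim_I,\llbracket\cdot\rrbracket)}_{I\subseteq A}$.

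For the group $(\mathsf{S4}_\B)$ axioms and rules, I would observe that $\B_I$ is interpreted (\Cref{sem-boxall-pseudo}) as the box modality for the relation $\leq_I$, which is a preorder by the definition of pseudo-model (\Cref{pseudo-model}); reflexivity validates the $\mathsf{T}$-axiom $\B_I\phi\to\phi$ and transitivity validates the $4$-axiom $\B_I\phi\to\B_I\B_I\phi$, while the $\mathsf{K}$-axiom and the necessitation rule hold because $\B_I$ is a standard Kripke box. Entirely analogously, $(\mathsf{S5}_{[\A]})$ holds because $\sim_I$ is an equivalence relation, so ${[\A]}_I$ is an $\mathsf{S5}$-modality. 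For the Inclusion axiom ${[\A]}_I\phi\to\B_I\phi$, I would use condition (2) of \Cref{pseudo-model}: since $\leq_I\,\subseteq\,\sim_I$, every $\leq_I$-successor of a state is also a $\sim_I$-successor, so if all $\sim_I$-successors satisfy $\phi$ then so do all $\leq_I$-successors.

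The Monotonicity axioms $\B_J\phi\to\B_I\phi$ and ${[\A]}_J\phi\to{[\A]}_I\phi$ for $J\subseteq I$ are exactly what the Anti-Monotonicity condition (1) of \Cref{pseudo-model} delivers: if $s\leq_I t$ then $s\leq_J t$ (taking the nonempty $J\subseteq I$), so the set of $\leq_I$-successors of $s$ is contained in the set of its $\leq_J$-successors; hence $\B_J\phi$ at $s$ (all $\leq_J$-successors satisfy $\phi$) implies $\B_I\phi$ at $s$. The case of ${[\A]}$ is identical using the $\sim$-clause of Anti-Monotonicity. Finally, the propositional tautologies and modus ponens are sound for the trivial reason that the Boolean clauses of \Cref{sem-boxall-pseudo} are the classical ones.

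I do not expect any genuine obstacle here: this is a routine verification, and the only thing to be careful about is bookkeeping — checking that each of the two structural conditions in the pseudo-model definition is used for exactly the axiom it was designed for (Anti-Monotonicity $\leftrightarrow$ Monotonicity, Inclusion $\leftrightarrow$ Inclusion), and that the restriction to the $\mathcal{L}_{{\B[\A]}_{i,A}}$-fragment is harmless because every subgroup relation invoked by an axiom of $\bm{{\B[\A]}_{i,A}}$ (only $\alpha\in\{A\}\cup A$, with the single nesting $\{i\}\subseteq A$) is available in a pseudo-model for that fragment.
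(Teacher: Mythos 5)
Your proposal is correct and is exactly the routine axiom-by-axiom verification that the paper intends here (the paper simply omits the proof as "a routine check"). Each structural condition of \Cref{pseudo-model} is matched to the right axiom, and the remark about the $\mathcal{L}_{{\B[\A]}_{i,A}}$ fragment is also how the paper handles it.
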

\begin{proof}
    We omit the proof, as it is a routine check.
\end{proof}

For completeness, fix a consistent formula $\phi_0\in \mathcal{L}_{{\B[\A]}_I}(\prop)$. We show that $\phi_0$ is satisfiable in a finite pseudo-model (namely the \emph{filtrated pseudo-model for $\mathcal{L}_{{\B[\A]}_I}$}), which additionally gives us the finite model property for the logic of $\mathcal{L}_{{\B[\A]}_I}$. The \emph{filtrated pseudo-model for $\mathcal{L}_{{\B[\A]}_I}$} can be thought of as a finite filtration of the usual notion of a canonical model, with respect to $\Phi$ (see~\cite{blackbrijkeven} for details): we identify each set of states in the canonical model that agrees on a finite set of formulas (the \emph{closure} of $\phi_0$). 

\begin{definition}[Closure (${{\B[\A]}_I}$)]\label{def:closure}
    Given a formula $\phi_0\in \mathcal{L}_{{\B[\A]}_I}(\prop)$, the closure $\Phi=\Phi(\phi_0)$ of $\phi$ is the smallest set of formulas over $\mathcal{L}_{{\B[\A]}_I}(\prop)$ satisfying, for all nonempty $J\subsetneq I\subseteq A$, and for all formulas $\psi,\theta\in \mathcal{L}_{{\B[\A]}_I}(\prop)$:
    \begin{enumerate}
        \item $\phi_0\in\Phi$;
        \item If $\psi \in \Phi$ and $\theta$ is a subformula of $\psi$, then $\theta \in \Phi$;
        \item $\Phi$ is closed under single negations\footnote{The single negation $\sim \phi$ is defined as: $\sim \phi:=\theta$ if $\phi$ is of the form $\neg \theta$; and $\sim \phi:=\neg \varphi$ if $\varphi$ is not of the form $\neg \theta$.} 
        $\sim$: if $\psi \in \Phi$, then $(\sim \psi) \in \Phi$;
        \item If ${[\A]}_J\psi\in\Phi$, then ${[\A]}_I\psi\in\Phi$;
         \item If ${[\A]}_I\psi\in\Phi$, then $\B_I{[\A]}_I\psi\in\Phi$;
        \item If $\neg{[\A]}_I\psi\in\Phi$, then $\B_I\neg{[\A]}_I\psi\in\Phi$;
        \item If $\B_J\psi\in\Phi$, then $\B_I\psi\in\Phi$;
        \item If ${[\A]}_I\psi\in\Phi$, then $\B_I\psi\in\Phi$.
    \end{enumerate}
\end{definition}

Now let $\Phi=\Phi(\phi_0)$ be the closure of $\phi_0$. The closure of $\phi_0$ is finite, which will ensure a finite filtrated pseudo-model.

\begin{lemma}\label{fin-closure_OLD}
    Every formula $\phi_0\in \mathcal{L}_{{\B[\A]}_I}(\prop)$ has a finite closure $\Phi(\phi_0)$. 
\end{lemma}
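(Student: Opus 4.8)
The plan is to exhibit a fixed \emph{finite} set of formulas containing $\phi_0$ and closed under all eight clauses of \Cref{def:closure}; since $\Phi(\phi_0)$ is by definition the smallest such set, this gives finiteness. First I would record the two routine facts: the set $\mathrm{Sub}(\phi_0)$ of subformulas of $\phi_0$ is finite (at most $|\phi_0|$ many), and closing a finite set of formulas under single negation at most doubles its size. Thus the set $\Psi_0$ obtained from $\mathrm{Sub}(\phi_0)$ by closing only under subformulas and single negation (clauses 2 and 3) is finite, and it already handles clauses 1--3.

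The key observation — and the reason everything stays finite — is that the set $A$ of agents, hence the set of groups $I\subseteq A$, is finite, and that clauses 4--8 produce formulas of a very restricted shape. Clauses 4, 7 and 8 never change the formula sitting under the outermost modality: they only enlarge the group index or replace ${[\A]}_I$ by $\B_I$. Clauses 5 and 6 prepend a single $\B_I$ in front of a formula of the form ${[\A]}_I\psi$ (resp.\ $\neg{[\A]}_I\psi$). I would then prove, by induction on the derivation of membership in $\Phi(\phi_0)$, that every $\B$- or ${[\A]}$-formula in $\Phi(\phi_0)$ has its immediate subformula lying in $\mathrm{Sub}(\phi_0)$, \emph{or} is itself of the form ${[\A]}_I\psi$ or $\neg{[\A]}_I\psi$ with ${[\A]}_J\psi\in\mathrm{Sub}(\phi_0)$ for some group $J$. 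Crucially, the outputs of clauses 5 and 6 are $\B$-formulas, so they cannot re-trigger the only depth-increasing clauses (5 and 6 require a ${[\A]}$ or $\neg{[\A]}$ at the top level), and clause 2 applied to such an output merely returns the already-present subformula ${[\A]}_I\psi$. Consequently the modal depth of every formula in $\Phi(\phi_0)$ is bounded by $\mathrm{md}(\phi_0)+1$.

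Combining these, $\Phi(\phi_0)$ is contained in the finite union of: $\Psi_0$; all ${[\A]}_I\psi$ and $\B_I\psi$ with $I\subseteq A$ and $\psi$ the immediate subformula of some subformula of $\phi_0$; all $\B_{I}{[\A]}_{J}\psi$ and $\B_I\neg{[\A]}_J\psi$ with $I,J\subseteq A$ and ${[\A]}_J\psi\in\mathrm{Sub}(\phi_0)$; together with all single negations of the above. Since there are at most $2^{|A|}$ groups and $\mathrm{Sub}(\phi_0)$ is finite, this union is finite, and one checks directly that it satisfies clauses 1--8; hence $\Phi(\phi_0)$ is finite.

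The only part that needs genuine care is ruling out a feedback loop across several clauses: e.g.\ applying clause 5 to get $\B_I{[\A]}_I\psi$, then clause 7 to widen the outer group, then clauses 2 and 3, and so on, must not generate ever-deeper formulas or unboundedly many shapes. The bookkeeping is finite but slightly fiddly, because clause 7 applied to the output of clause 5 yields $\B_{I'}{[\A]}_I\psi$ with a mismatch between the inner and outer group indices; one should explicitly include this shape (and its negation) in the finite bounding set and confirm it is closed under all further applications of clauses 2--8. Once this case analysis is settled, the finiteness claim follows.
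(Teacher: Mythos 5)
The paper simply omits this proof as routine, so there is nothing to compare against; your argument is the natural way to fill it in, and it is essentially correct. The key points you identify are exactly the right ones: clauses 2--3 of \Cref{def:closure} at most double a finite set; clauses 4, 7, 8 only re-index groups or turn ${[\A]}$ into $\B$, and since $A$ is finite (fixed at the start of the appendix) there are only finitely many group indices, with the side condition $J\subsetneq I$ additionally forcing indices to strictly grow; and clauses 5--6 are the only depth-increasing clauses, but their outputs are $\B$-formulas and hence cannot re-trigger clauses 5--6, so the modal depth is bounded by $\mathrm{md}(\phi_0)+1$. Two small inaccuracies in your write-up, both of which you partially flag yourself: (i) the inductive invariant as you state it fails for the outputs of clauses 5--6, since the immediate subformula of $\B_I{[\A]}_I\psi$ is ${[\A]}_I\psi$, which need not lie in $\mathrm{Sub}(\phi_0)$ --- only some re-indexing ${[\A]}_{J'}\psi$ with $J'\subseteq I$ need be a subformula of $\phi_0$; and (ii) correspondingly, your bounding set should admit all $\B_{I}{[\A]}_{J}\psi$ and $\B_I\neg{[\A]}_J\psi$ with $J\subseteq I$ and ${[\A]}_{J'}\psi\in\mathrm{Sub}(\phi_0)$ for \emph{some} group $J'\subseteq J$, rather than requiring ${[\A]}_J\psi\in\mathrm{Sub}(\phi_0)$ literally, since clause 5 also applies to the re-indexed ${[\A]}_I\psi$ produced by clause 4. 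With that correction the bounding set is genuinely closed under all eight clauses and the finiteness claim follows.
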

\begin{proof}
    We omit the proof, as it is straightforward.
\end{proof}

We use the closure of $\phi_0$ to define the filtrated pseudo-model $\mathbf{S}^C$, on which $\phi_0$ will be satisfied.

\begin{definition}\label{def:filtr-can-pseudo}
    Fix a maximally consistent theory\footnote{This theory exists by the Lindenbaum Lemma (see e.g.~\cite{blackbrijkeven}) and consistency of $\phi_0$.} $T_0\subseteq \Phi$ with our fixed formula $\phi_0\in T_0$. The \emph{filtrated pseudo-model for $\mathcal{L}_{{\B[\A]}_I}$} for $\phi_0$ is the finite structure $\mathbf{S}^C = {(S^C, \leq_I, \sim_I,\llbracket\cdot\rrbracket)}_{I\subseteq A}$, where $S^C$ is defined as 
    \begin{align*}
        S^C:=\{T\subseteq \Phi \mid T \subseteq \mathcal{L}_{{\B[\A]}_I}(\prop) \text{ and } T \text{ is a maximally consistent subset of }\Phi \}
    \end{align*}
    and for all groups $I\subseteq A$, the relations $\leq_I$ and $\sim_I$ on $S^C$ are given by putting
    \begin{align*}
        &T \sim_I W \quad \text { iff} \quad 
        {[\A]}_J\phi \in T \Leftrightarrow {[\A]}_J\phi \in W \text{ holds for all groups $J\subseteq I$;} \\
        &T \leq_I W \quad \text { iff} \quad  \B_J \phi \in T \Rightarrow \B_J\phi \in W \text{ holds for all groups $J\subseteq I$.}
    \end{align*}
    Finally, we define for all $p\in \prop$:
    \[\llbracket p \rrbracket := \{T \in S^C \mid p \in T\}.
    \]
\end{definition}

Since we ensured that $\Phi$ is finite, the model $\mathbf{S}^C$ is finite: its size is $|S^C|\leq |2^\Phi|$, as the collection of maximally consistent subsets of $\Phi$ is a subset of the powerset of $\Phi$. Furthermore, it can be checked that $\mathbf{S}^C$ is indeed a pseudo-model.

We need the Truth Lemma to prove our claim that $\phi_0$ is satisfied in $\mathbf{S}^C$. 

\begin{lemma}[Truth Lemma]\label{truth}
    Given the filtrated pseudo-model $\mathbf{S}^C$ for $\mathcal{L}_{{\B[\A]}_I}$ over a closure $\Phi$, we have for all $\phi \in \Phi$:
    \begin{align*}
        T \vDash_{\mathbf{S}^C} \phi \text { iff } \phi \in T, \text { for every } T \in S^C.
    \end{align*}
\end{lemma}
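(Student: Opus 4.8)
The plan is to prove the Truth Lemma by a straightforward induction on the structure of $\phi\in\Phi$. The atomic case is immediate from the definition of the valuation on $\mathbf{S}^C$, and the Boolean cases are routine, using that every $T\in S^C$ is a maximal consistent subset of $\Phi$ and that $\Phi$ is closed under subformulas and single negations (\Cref{def:closure}(2)--(3)). All the work is in the two modal cases $\phi=\B_I\psi$ and $\phi={[\A]}_I\psi$; I would treat $\B_I$ in detail and indicate how ${[\A]}_I$ differs.

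For the easy direction of the $\B_I$-case, assume $\B_I\psi\in T$ and let $W\in S^C$ with $T\leq_I W$. Instantiating the defining clause of $\leq_I$ at $J=I$ gives $\B_I\psi\in W$; the reflexivity axiom $\B_I\psi\to\psi$ of $(\mathsf{S4}_{\B})$ and maximality of $W$ (together with $\psi\in\Phi$, a subformula) give $\psi\in W$; and the inductive hypothesis yields $W\vDash_{\mathbf{S}^C}\psi$. Hence $T\vDash_{\mathbf{S}^C}\B_I\psi$. The corresponding direction for ${[\A]}_I$ is identical, via the reflexivity axiom ${[\A]}_I\psi\to\psi$ of $(\mathsf{S5}_{[\A]})$.

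The converse direction is the crux, and amounts to an \emph{Existence Lemma}: if $\B_I\psi\notin T$, then there is $W\in S^C$ with $T\leq_I W$ and $\psi\notin W$ (and likewise with $\sim_I$ and ${[\A]}_I$). I would argue through characteristic formulas. Writing $\widehat{T}$ for the conjunction of all formulas in $T$, the finiteness of $\Phi$ gives that $S^C$ is finite and $\vdash\bigvee_{W\in S^C}\widehat{W}$. Since $\B_I\psi\notin T$ and $\B_I\psi\in\Phi$, we have $\vdash\widehat{T}\to\Diamond_I\neg\psi$, so distributing $\Diamond_I$ over $\bigvee_W\widehat{W}$ by normality of $\B_I$, the formula $\widehat{T}\wedge\Diamond_I(\neg\psi\wedge\widehat{W})$ is consistent for at least one $W\in S^C$; fix such a $W$. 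Consistency of $\neg\psi\wedge\widehat{W}$ forces $\psi\notin W$. To check $T\leq_I W$, take $J\subseteq I$ and $\B_J\chi\in T$: from $\widehat{T}\vdash\B_J\chi$ one obtains $\widehat{T}\vdash\B_I\B_J\chi$ via the $4$-axiom $\B_J\chi\to\B_J\B_J\chi$ of $(\mathsf{S4}_{\B})$ together with the Monotonicity axiom $\B_J\theta\to\B_I\theta$ instantiated at $\theta:=\B_J\chi$; combined with the consistency of $\widehat{T}\wedge\Diamond_I(\neg\psi\wedge\widehat{W})$ and normality of $\B_I$, this makes $\B_J\chi\wedge\widehat{W}$ consistent, hence $\B_J\chi\in W$ by maximality. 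The ${[\A]}_I$-case runs in parallel, but since $\sim_I$ is defined by a \emph{biconditional} on ${[\A]}_J$-formulas, one must also transport \emph{non}-membership: for ${[\A]}_J\chi\in\Phi\setminus T$, the $5$-axiom $\neg{[\A]}_J\chi\to{[\A]}_J\neg{[\A]}_J\chi$ of $(\mathsf{S5}_{[\A]})$ together with Monotonicity gives $\widehat{T}\vdash{[\A]}_I\neg{[\A]}_J\chi$, from which ${[\A]}_J\chi\notin W$ follows as before. The closure conditions \Cref{def:closure}(4)--(8) are precisely what make $\mathbf{S}^C$ a genuine pseudo-model (in particular (5)--(6) yield $\leq_I\subseteq\sim_I$) and keep every formula appealed to above inside $\Phi$.

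The hard part is this Existence Lemma, and within it the mixed-group bookkeeping: each $\B_J$- (resp.\ ${[\A]}_J$-) formula of a proper subgroup $J\subseteq I$ has to be pushed across the $\leq_I$- (resp.\ $\sim_I$-) step, which forces the combined use of introspection and Monotonicity — and, for ${[\A]}_I$, of both positive ($4$) and negative ($5$) introspection. Routing the argument through the characteristic formulas $\widehat{W}$ is what lets it be done purely at the level of provability, so that auxiliary formulas such as $\B_I\B_J\chi$ and ${[\A]}_I\neg{[\A]}_J\chi$ need not themselves belong to the finite closure $\Phi$.
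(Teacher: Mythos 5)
Your proof is correct and is exactly the standard filtration-style canonical-model argument that the paper itself only gestures at (it defers the Truth Lemma to the literature rather than spelling out the details). The existence-lemma step via the characteristic formulas $\widehat{W}$ --- using the $4$- and $5$-axioms together with Monotonicity to push $\B_J$- and ${[\A]}_J$-formulas (and, for $\sim_I$, their negations) across the step, so that the auxiliary formulas never need to lie in $\Phi$ --- is precisely the intended argument, and no step is missing.
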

\begin{proof}
    The Truth Lemma is a standard lemma in canonical-model constructions (see e.g.~\cite{blackbrijkeven}) and its proof is straightforward. The cases for soft and hard evidence are similar to the case for distributed knowledge in the proof of Lemma 1.2 in Appendix A.1 in~\cite{subgroups}.  
\end{proof}

\begin{corollary}\label{comp-pseudo_OLD}
    The proof system $\bm{{\B[\A]}_I}$ (displayed in \Cref{pf-syst}) is sound and weakly complete with respect to pseudo-models for $\mathcal{L}_{{\B[\A]}_I}$, and the logic of $\mathcal{L}_{{\B[\A]}_I}$ is decidable. All properties are inherited by the proof system $\bm{{\B[\A]}_{i,A}}$ and the logic $\mathcal{L}_{{\B[\A]}_{i,A}}$.
\end{corollary}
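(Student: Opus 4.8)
The plan is to assemble the components already established. Soundness is precisely \Cref{sound_OLD}. For weak completeness, fix a $\bm{{\B[\A]}_I}$-consistent formula $\phi_0$ and form its closure $\Phi=\Phi(\phi_0)$ from \Cref{def:closure}, which is finite by \Cref{fin-closure_OLD}. Build the filtrated pseudo-model $\mathbf{S}^C$ of \Cref{def:filtr-can-pseudo}; by the Lindenbaum Lemma there is a maximally consistent $T_0\subseteq\Phi$ with $\phi_0\in T_0$, so $T_0\in S^C$, and the Truth Lemma (\Cref{truth}) yields $T_0\vDash_{\mathbf{S}^C}\phi_0$. Thus every consistent formula is satisfiable in a pseudo-model, i.e.\ the system is weakly complete; and since $|S^C|\le 2^{|\Phi|}<\omega$, the formula is in fact satisfiable in a \emph{finite} pseudo-model, so the logic has the finite model property.

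The one load-bearing check hidden in this argument is that $\mathbf{S}^C$ is genuinely a pseudo-model in the sense of \Cref{pseudo-model}: each $\leq_I$ must be a preorder (using the $\mathsf{S4}$ axioms for $\B_I$) and each $\sim_I$ an equivalence relation (using the $\mathsf{S5}$ axioms for $[\A]_I$), while Anti-Monotonicity and Inclusion follow from the fact that in \Cref{def:filtr-can-pseudo} the relations $T\leq_I W$ and $T\sim_I W$ quantify over all groups $J\subseteq I$, together with the Monotonicity and Inclusion axioms of \Cref{pf-syst}. This verification is routine and parallels Lemma~1.1 of Appendix~A.1 in~\cite{subgroups}. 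The genuinely delicate point — the existence of witnesses for $\B_I$ and $[\A]_I$ in the proof of the Truth Lemma, which is what forces the mixed closure conditions (items~5, 6, 8 of \Cref{def:closure}) — is already packaged inside \Cref{truth}, so I do not need to revisit it here.

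For decidability I run the standard argument combining the finite model property with the recursive axiomatization: the set of theorems of $\bm{{\B[\A]}_I}$ is recursively enumerable (enumerate finite proofs), and its complement is also recursively enumerable, since by weak completeness and the finite model property $\phi$ is a non-theorem iff $\neg\phi$ is satisfiable in some finite pseudo-model over the finite vocabulary $\prop$, and one can effectively enumerate all finite such pseudo-models and decide satisfaction on each. A set that is recursively enumerable with recursively enumerable complement is decidable, so $\mathcal{L}_{{\B[\A]}_I}$ is decidable. Finally, every step above restricts verbatim to the fragment — the closure, the model $\mathbf{S}^C$, the Truth Lemma, the size bound, and the decidability argument all go through with all indices ranging over $\alpha\in\{A\}\cup A$ and using $\bm{{\B[\A]}_{i,A}}$ — so the same conclusions hold for $\mathcal{L}_{{\B[\A]}_{i,A}}$, with no new obstacle. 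I expect the pseudo-model check of the second paragraph to be the only place demanding care, everything else being bookkeeping.
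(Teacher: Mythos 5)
Your proposal is correct and follows essentially the same route as the paper: soundness via \Cref{sound_OLD}, weak completeness via the finite filtrated pseudo-model $\mathbf{S}^C$ together with the Lindenbaum and Truth Lemmas, and decidability from the resulting finite model property, with the fragment case obtained by restriction. The only cosmetic difference is that the paper phrases decidability as an explicit bounded search over pseudo-models of size at most $2^{|\Phi|}$ followed by model checking, whereas you invoke the equivalent r.e.-plus-co-r.e. argument; both rest on the same finite model property.
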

\begin{proof}
    Soundness of $\bm{{\B[\A]}_I}$ was established in \Cref{sound_OLD}. For completeness, let $\phi_0\in \mathcal{L}_{{\B[\A]}_I}(\prop)$ be any consistent formula and construct the filtrated pseudo-model $\mathbf{S}^C$ for $\mathcal{L}_{{\B[\A]}_I}(\prop)$, for $\phi_0$. By the Lindenbaum Lemma, there exists some maximally consistent theory $T_0$ in $\mathbf{S}^C$ with $\phi_0 \in T_0$. By the Truth Lemma (\Cref{truth}), $T_0$ satisfies $\phi_0$ in $\mathbf{S}^C$. Since $\mathbf{S}^C$ is finite, this gives us weak completeness with respect to finite pseudo-models for $\bm{{\B[\A]}_I}$ (and hence also with respect to all pseudo-models).

    Since $\bm{{\B[\A]}_I}$ is weakly complete with respect to finite pseudo-models for the language, the logic ${\mathcal{L}_{{\B[\A]}_I}}$ has the finite pseudo-model property. Therefore, it is decidable: to decide $\phi_0\in \mathcal{L}_{{\B[\A]}_I}(\prop)$, let $\Phi:=\Phi(\phi_0)$ be its closure and generate all pseudo-models (up to isomorphism) that are at most of the size $2^{|\Phi|}$. Then model-check $\phi_0$ on these models: if $\phi_0$ is satisfied at any state in any of the models, then it is satisfiable (on pseudo-models for $\mathcal{L}_{{\B[\A]}_I}$); otherwise, it is unsatisfiable.

    For $\bm{{\B[\A]}_{i,A}}$, the proof is obtained simply by restricting to $\mathcal{L}_{{\B[\A]}_{i,A}}$ all the constructions in the proof for $\mathcal{L}_{{\B[\A]}_I}$: to construct the filtrated pseudo-model $\mathbf{S}^C$ for $\mathcal{L}_{{\B[\A]}_{i,A}}$, for any consistent formula $\phi_0\in \mathcal{L}_{{\B[\A]}_{i,A}}(\prop)$, restrict the formulas in the closure $\Phi(\phi_0)$ to $\mathcal{L}_{{\B[\A]}_{i,A}}$; and define the filtrated pseudo-model $\mathbf{S}^C$ as a pseudo-model for $\mathcal{L}_{{\B[\A]}_{i,A}}(\prop)$ (that is, restrict the relations from the definition of the canonical pseudo-model for $\mathcal{L}_{{\B[\A]}_I}$ to those labeled by $A$ and $\{i\}$ for all $i\in A$). The rest of the proof goes through exactly as in the proof for $\bm{{\B[\A]}_I}$.
\end{proof}

\subsubsection{From Pseudo-Models to Models.}\label{sec:pseudo-to-model-boxall} To prove completeness with respect to \emph{standard} pseudo-models, we show how to go from a general pseudo-model to a standard pseudo-model satisfying the same formulas: given a pseudo-model $\mathbf{S}$ for $\mathcal{L}_{{\B[\A]}_I}$, we use \emph{model unraveling} to construct an \emph{associated} model $\mathbf{X}$. This will be a relational evidence model, structured as a \emph{tree}, on which we impose the desired properties. The challenge of this proof is to ensure that the relations on the unraveled tree satisfy the intersection condition of a pseudo-model for $\mathcal{L}_{{\B[\A]}_I}$ (\Cref{standard-ps}), such that it is indeed standard.

We define the correspondence with respect to pseudo-models for $\mathcal{L}_{{\B[\A]}_I}$, after which we show how to adapt the proof for $\mathcal{L}_{{\B[\A]}_{i,A}}$. The structure of this proof closely follows the structure of the proof in Appendix A.2 of~\cite{subgroups}. For an introduction into model unraveling for completeness proofs, we refer to~\cite{blackbrijkeven}. 

Throughout this proof, \textit{we fix a pseudo-model} $\mathbf{S}={(S,\leq_I,\sim_I,\llbracket\cdot\rrbracket_{\mathbf{S}})}_{I\subseteq A}$ for $\mathcal{L}_{{\B[\A]}_I}$, and \textit{a designated state} $s_0 \in S$. The state space of the associated model will consist of all $s_0$-originated \emph{histories}:
\begin{definition}[Histories]
    The set $H$ of all \emph{($s_0$-generated) histories} over the pseudo-model $\mathbf{S}$ consists of all finite sequences $h=\left(s_0, R_{{G_1}}, \ldots, R_{{G_n}}, s_n\right)$ satisfying the following conditions:
    \begin{enumerate}
        \item The sequence $h$ has length $n\geq 0$ and we have $s_i \in S$ for all $i\leq n$ (with $s_0$ being the fixed state in the model);
        \item The subgroups ${G_1}, \ldots, {G_n} \subseteq A$ are nonempty;
        \item For each $k\in\{1, \ldots, n\}$, we have one of the following two cases:
        \begin{enumerate}
            \item $R_{I^k}$ refers to $\leq_{I^k}$, and we have $s_{k-1} \leq_{I^k} s_k$
            \item $R_{I^k}$ refers to $\sim_{I^k}$, and we have $s_{k-1} \sim_{I^k} s_k$.
        \end{enumerate}
    \end{enumerate}
\end{definition}
Given a history $h=\left(s_0, R_{{G_1}}, \ldots, R_{{G_n}}, s_n\right)\in H$, we denote by $last(h):=s_n$ the last state in the history.

Next, we construct the relations $\leq_I$ and $\sim_I$ for all groups $I\in A$ in intermediate steps (with the resulting relations being defined in \Cref{def:final-rels}), ensuring in particular that $\leq_I=\bigcap_{i\in I}\leq_i$ and $\sim_I=\bigcap_{i\in I}\sim_i$ (we will show this in \Cref{assoc-as-standardps_OLD}).

\begin{definition}[One-step relations, immediate successor]\label{def:intermediate-rels}
    We first define \emph{one-step relations} $\xrightarrow{\operatorname{P}}_I$ and $\xrightarrow{\operatorname{E}}_I$ on histories in $H$ (labeled by P  for `pre-order' or E for `equivalence' relation, and by groups $I\subseteq A$), by putting:
    \[
    \begin{array}{lll}
        h \xrightarrow{\operatorname{P}}_I h' &\text{ iff}\quad h'=(h, \leq_I, s') &\text{ with }\quad last(h) \leq_I s'=last(h') \\
        h \xrightarrow{\operatorname{E}}_I h' &\text{ iff}\quad h'=(h, \sim_I, s') &\text{ with }\quad last(h) \sim_I s'=last(h').
    \end{array}
    \]
    We also define the \emph{immediate successor relation} $\rightarrow$ on histories as the union of all one-step relations:
    \[
    h \rightarrow h' \quad \text{ iff}\quad h\ (\xrightarrow{\operatorname{P}}_I\cup \xrightarrow{\operatorname{E}}_I)\ h' \text{ for some } I \subseteq A.
    \]
    We close these relations under monotonicity by defining, for all groups $J\subseteq A$, 
    \begin{align*}
        h \xrightarrow{\leq}_J h' \quad &\text{ iff } \quad
        h \xrightarrow{\operatorname{P}}_{I} h' \text{ for some } I\supseteq J \\
        h \xrightarrow{\sim}_J h' \quad &\text{ iff } \quad
        h \xrightarrow{\operatorname{E}}_{I} h' \text{ for some } I \supseteq J.
    \end{align*}
\end{definition}

Note that $H$ has the structure of a tree rooted at $s_0$ (that is, the history given by the sequence ($s_0$)): the immediate successor relation on $H$ has the tree property, i.e., it connects every two nodes $h, h'$ of the tree by a unique non-redundant path~\cite{blackbrijkeven}.

We now define the final relations $\leq_I$ and $\sim_I$, which satisfy the conditions of a relational evidence model. In particular, we obtain individual relations $\leq_i\ :=\ \leq_{\{i\}}$ and $\sim_i\ :=\ \sim_{\{i\}}$.

\begin{definition}[Relations on the Associated Model]\label{def:final-rels}
    Let $I\subseteq A$ be a group and let $\xrightarrow{\leq}_I$ and $\xrightarrow{\sim}_I$ be as in \Cref{def:intermediate-rels}. We define
    \begin{align*}
        \leq_I &:= {\left(\xrightarrow{\leq}_I\right)}^* \\
        \sim_I &:= {\left(\xrightarrow{\leq}_I\cup \xleftarrow{\leq}_I\cup\xrightarrow{\sim}_I\cup \xleftarrow{\sim}_I\right)}^*
    \end{align*}
    where $R^*$ denotes the reflexive-transitive closure of $R$, and $\xleftarrow{\leq}_I$ and $\xleftarrow{\sim}_I$ denote the converses of $\xrightarrow{\leq}_I$ and $\xrightarrow{\sim}_I$, respectively.
\end{definition}

The following lemmas state a number of properties of the relations from \Cref{def:final-rels}, which we will use to show in \Cref{lem:standard-ps} that the relations satisfy the conditions of a standard pseudo-model for $\mathcal{L}_{{\B[\A]}_I}$ and, subsequently, in \Cref{bisim_OLD} when we prove a bisimulation between the associated model and the original pseudo-model for ${\mathcal{L}_{{\B[\A]}_I}}$.

\begin{lemma}\label{leq-equiv-set_OLD}
    For all groups $I\subseteq A$, and histories $h, h' \in H$, the following are equivalent:
    \begin{enumerate}
        \item  $h \leq_I h'$;
        \item the non-redundant path from $h$ to $h'$ consists only of steps of the form $h_{n-1} \xrightarrow{\operatorname{P}}_{{G_n}} h_{n}$, with $I\subseteq {G_n}$.
    \end{enumerate}
\end{lemma}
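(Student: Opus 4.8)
The plan is to exploit the fact that $H$ is a tree under the immediate-successor relation $\rightarrow$ (recorded just after \Cref{def:intermediate-rels}), so that between any two histories there is a \emph{unique} non-redundant path; once that is in hand, the lemma reduces to two observations about $\xrightarrow{\leq}_I$: that it is a sub-relation of $\rightarrow$, and that each of its steps strictly extends a history by exactly one state. In particular, a $\xrightarrow{\leq}_I$-step $h \xrightarrow{\leq}_I h'$ unfolds, via the definitions of $\xrightarrow{\leq}_I$ and $\xrightarrow{\operatorname{P}}$, into $h' = (h, \leq_{G}, s')$ for some group $G \supseteq I$ with $last(h) \leq_{G} s' = last(h')$, i.e.\ $h \xrightarrow{\operatorname{P}}_{G} h'$ with $I \subseteq G$, and $h'$ is an immediate successor of $h$.

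For $(1)\Rightarrow(2)$, I would unfold $\leq_I = (\xrightarrow{\leq}_I)^{*}$ into a finite chain $h = h_0 \xrightarrow{\leq}_I \cdots \xrightarrow{\leq}_I h_m = h'$ (the case $m = 0$ being vacuous), and then apply the observation above to each step: every $h_k$ is an immediate successor of $h_{k-1}$, obtained through a $\xrightarrow{\operatorname{P}}_{G_k}$-step with $I \subseteq G_k$. Since each step strictly lengthens the history, the chain is strictly descending in the tree and hence never backtracks; so it is a non-redundant path from $h$ to $h'$, and by uniqueness of non-redundant paths in $H$ it must coincide with \emph{the} non-redundant path from $h$ to $h'$. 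That path therefore has exactly the shape claimed in (2).

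For $(2)\Rightarrow(1)$, I would argue conversely: if the (unique) non-redundant path $h = h_0, \dots, h_m = h'$ consists only of steps $h_{k-1} \xrightarrow{\operatorname{P}}_{G_k} h_k$ with $I \subseteq G_k$, then by the very definition of $\xrightarrow{\leq}_I$ each such step is a $\xrightarrow{\leq}_I$-step, whence $h \,(\xrightarrow{\leq}_I)^{*}\, h'$, i.e.\ $h \leq_I h'$ by \Cref{def:final-rels}. This direction is essentially immediate once the definitions are spelled out.

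The only genuinely subtle point—the step I expect to need the most care—is the interaction between ``non-redundant path in a tree'' and the one-directional nature of $\xrightarrow{\leq}_I$: a priori, the non-redundant path between two nodes of a tree may ascend to their least common ancestor and then descend, so one has to verify that a $\xrightarrow{\leq}_I$-witness for $h \leq_I h'$ rules out every ascending step, forcing the connecting path to be purely descending. This is precisely where the tree property of $H$ is invoked; the remainder of the argument is routine unwinding of the definitions in \Cref{def:intermediate-rels} and \Cref{def:final-rels}.
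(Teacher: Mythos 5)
Your proof is correct and follows essentially the same route as the paper's: unfold $\leq_I={\left(\xrightarrow{\leq}_I\right)}^{*}$ into a chain of $\xrightarrow{\operatorname{P}}_{G}$-steps with $G\supseteq I$, identify that chain with the unique non-redundant tree path, and read off the converse direction directly from \Cref{def:intermediate-rels} and \Cref{def:final-rels}. If anything, you are slightly more careful than the paper in justifying why the witnessing chain is itself non-redundant (each step strictly extends the history, so the chain only descends in the tree) before invoking uniqueness of non-redundant paths.
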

\begin{proof}    
    Let $I\subseteq A$ be a group, and let $h, h' \in H$. For the left-to-right direction, suppose $h \leq_I h'$. Then, by definition of $\leq_I$ (\Cref{def:final-rels}), we have $h {\left(\xrightarrow{\leq}_I\right)}^* h'$, that is, from $h$ we can reach $h'$ via a finite non-redundant path under the relation $\xrightarrow{\leq}_I$. More importantly, by the properties of a tree-like model, this non-redundant path is unique. The claim now follows immediately from the definition of $\xrightarrow{\leq}_I$ (\Cref{def:intermediate-rels}): each step $h_{n-1} \xrightarrow{\leq}_I h_n$ on the path implies that for some ${G_n}\supseteq I$ we have $h_{n-1} \xrightarrow{\operatorname{P}}_{{G_n}} h_{n}$.

    For the converse direction, the claim is immediate: assuming that the non-redundant path from $h$ to $h'$ consists only of steps of the form $h_{n-1} \xrightarrow{\operatorname{P}}_{{G_n}} h_{n}$, with $I\subseteq {G_n}$, we have for every step $h_{n-1} \xrightarrow{\operatorname{P}}_{{G_n}} h_{n}$ on the path that $h_{n-1} \xrightarrow{\leq}_{I} h_{n}$ (\Cref{def:intermediate-rels}), and thereby, $h {\left(\xrightarrow{\leq}_I\right)}^* h'$ (\Cref{def:final-rels}).
\end{proof}

\Cref{sim-equiv-set_OLD} is the analogue of the previous lemma, for the equivalence relations $\sim_I$.

\begin{lemma}\label{sim-equiv-set_OLD}
    The following are equivalent, for all groups $I \subseteq A$ and histories $h, h' \in H$:
    \begin{enumerate}
        \item $h \sim_I h'$;
        \item each of the steps on the non-redundant path from $h$ to $h'$ is of one of the following forms:
        \begin{multicols}{2}
       \begin{enumerate}
            \item $h_{n-1} \xrightarrow{\operatorname{P}}_{{G_n}} h_{n}$
            \item $h_{n-1} \xleftarrow{\operatorname{P}}_{{G_n}} h_{n}$
            \item $h_{n-1} \xrightarrow{\operatorname{E}}_{{G_n}} h_{n}$
            \item $h_{n-1} \xleftarrow{\operatorname{E}}_{{G_n}} h_{n}$
        \end{enumerate}
    \end{multicols}
    with $I \subseteq {G_n}$.
    \end{enumerate}
\end{lemma}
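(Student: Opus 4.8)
The plan is to mimic the proof of \Cref{leq-equiv-set_OLD}, only now the relevant relation is symmetric, so it carries four kinds of one-step edges instead of one. Abbreviate $R_I := {\xrightarrow{\leq}_I} \cup {\xleftarrow{\leq}_I} \cup {\xrightarrow{\sim}_I} \cup {\xleftarrow{\sim}_I}$, so that by \Cref{def:final-rels} we have $\sim_I\, =\, R_I^{\,*}$; since $R_I$ is symmetric, $R_I^{\,*}$ is exactly the equivalence relation ``lying in the same connected component of the graph $(H,R_I)$''. The first thing I would do is extract from \Cref{def:intermediate-rels} a clean description of the edges of $R_I$: for histories $h,h'$ we have $h\,R_I\,h'$ iff $h$ and $h'$ are joined by an immediate-successor (tree) edge of $H$ — i.e.\ one of them was obtained from the other by appending a single relational step — whose appended group $G$ satisfies $I\subseteq G$ (the step may be of type $\mathrm{P}$ or $\mathrm{E}$, traversed in either direction). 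With this reading, condition (2) of the lemma says precisely that \emph{every edge on the unique non-redundant $h$-to-$h'$ path in the tree $H$ is an $R_I$-edge}, and the four displayed forms are just the four ways such an edge can occur (type $\mathrm{P}$ or $\mathrm{E}$, forwards or backwards) together with the side-condition $I\subseteq G_n$.

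Granting this reformulation, the direction $(2)\Rightarrow(1)$ is immediate: if each step $h_{n-1}\to h_n$ (in the appropriate direction) on the non-redundant path is of one of the four forms with $I\subseteq G_n$, then by \Cref{def:intermediate-rels} each such step lies in $R_I$ — e.g.\ $h_{n-1}\xrightarrow{\operatorname{P}}_{G_n}h_n$ with $I\subseteq G_n$ gives $h_{n-1}\xrightarrow{\leq}_I h_n$, and the other three cases are symmetric — so concatenating the steps along the path yields $h\,R_I^{\,*}\,h'$, i.e.\ $h\sim_I h'$. The case $h=h'$ is vacuous.

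For $(1)\Rightarrow(2)$, suppose $h\sim_I h'$, so there is a finite walk $h=k_0\,R_I\,k_1\,R_I\,\cdots\,R_I\,k_m=h'$; by the description above, each step traverses a tree edge whose appended group contains $I$. I would then invoke the standard fact about trees (the tree structure of $H$ was noted right after \Cref{def:intermediate-rels}): any walk between two nodes of a tree can be reduced to the unique non-redundant path between them by repeatedly deleting ``detours'', i.e.\ pairs of consecutive steps $k_{j-1}\to k_j\to k_{j+1}$ with $k_{j+1}=k_{j-1}$ that traverse one and the same edge back and forth. Such deletions never introduce an edge not already used, so every edge on the non-redundant path was already traversed by the original walk and is therefore an $R_I$-edge; reading it off in the direction the path traverses it gives one of the four forms with $I\subseteq G_n$, which is (2). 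The one genuinely non-bookkeeping step here is this walk-reduction argument — making precise that pruning an arbitrary $R_I$-walk down to the non-redundant path never forces a detour through an edge outside $R_I$; everything else is a direct unfolding of \Cref{def:intermediate-rels} and \Cref{def:final-rels}, exactly parallel to \Cref{leq-equiv-set_OLD}.
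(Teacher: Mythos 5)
Your proposal is correct and follows essentially the same route as the paper's proof: unfold \Cref{def:final-rels} and \Cref{def:intermediate-rels}, identify the steps of the (unique, tree-guaranteed) non-redundant path with the four edge types labelled by supergroups of $I$, and read the equivalence off in both directions. The only difference is that you spell out the walk-to-path reduction (pruning detours never introduces new edges, so every edge of the non-redundant path is already an $R_I$-edge) where the paper simply appeals to ``the properties of a tree-like model''; this is a welcome bit of extra care rather than a divergence.
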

\begin{proof}
    Let $I\subseteq A$ be a group, and let $h, h' \in H$. For the left-to-right direction, suppose $h \sim_I h'$. Then, by definition of $\sim_I$ (\Cref{def:final-rels}), we have $h {\left(\xrightarrow{\leq}_I\cup \xleftarrow{\leq}_I\cup\xrightarrow{\sim}_I\cup \xleftarrow{\sim}_I\right)}^* h'$, that is, from $h$ we can reach $h'$ via a finite non-redundant path under the relation $\left(\xrightarrow{\leq}_I\cup \xleftarrow{\leq}_I\cup\xrightarrow{\sim}_I\cup \xleftarrow{\sim}_I\right)$. More importantly, by the properties of a tree-like model, this non-redundant path is unique. Consider an arbitrary step $h_{n-1} \left(\xrightarrow{\leq}_I\cup \xleftarrow{\leq}_I\cup\xrightarrow{\sim}_I\cup \xleftarrow{\sim}_I\right) h_n$ on this path. We have one of the following four cases:
    \begin{multicols}{2}
    \begin{enumerate}[(a)]
        \item $h_{n-1} \xrightarrow{\leq}_I h_n$;
        \item $h_{n-1} \xleftarrow{\leq}_I h_n$;
        \item $h_{n-1} \xrightarrow{\sim}_I h_n$;
        \item $h_{n-1} \xleftarrow{\sim}_I h_n$.
    \end{enumerate}
\end{multicols}
    The claim then follows from unfolding the respective definitions of these relations (\Cref{def:intermediate-rels}).

    For the converse direction, the claim is immediate: assuming that the non-redundant path from $h$ to $h'$ consists only of steps of the form (a)-{(d)} as listed in \Cref{sim-equiv-set_OLD}, with $I\subseteq {G_n}$ for each step from $h_{n-1}$ to $h_n$, we can apply the corresponding definitions from \Cref{def:intermediate-rels} to each step, to obtain that 
    \[ 
        h {\left(\xrightarrow{\leq}_I\cup \xleftarrow{\leq}_I\cup\xrightarrow{\sim}_I\cup \xleftarrow{\sim}_I\right)}^* h'
    \]
    i.e., $h \sim_I h'$ (\Cref{def:final-rels}).
\end{proof}

We can now show that the relations from \Cref{def:final-rels} satisfy the requirements of a \emph{standard} pseudo-model for $\mathcal{L}_{{\B[\A]}_I}$. 

\begin{lemma}\label{lem:standard-ps}
    Let $I\subseteq A$ be a group. The relations $\leq_I$ and $\sim_I$ from \Cref{def:final-rels} satisfy the relational conditions of a pseudo-model for $\mathcal{L}_{{\B[\A]}_I}$ (\Cref{pseudo-model}): $\leq_I\ \subseteq\ \sim_I$ (the inclusion condition); $\leq_I$ is a pre-order; and $\sim_I$ is an equivalence relation. Furthermore, for all groups $J\subseteq A$, $\leq_I$ and $\leq_J$ satisfy the anti-monotonicity and intersection conditions, as well as $\sim_I$ and $\sim_J$. 
\end{lemma}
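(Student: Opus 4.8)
The plan is to check each of the relational conditions of \Cref{pseudo-model} and \Cref{standard-ps} in turn, using the path-characterizations from \Cref{leq-equiv-set_OLD} and \Cref{sim-equiv-set_OLD} together with the tree structure of $H$ (so that any two histories are joined by a \emph{unique} non-redundant path). The structural properties are essentially immediate: $\leq_I$ and $\sim_I$ are reflexive and transitive because \Cref{def:final-rels} defines them as reflexive--transitive closures $R^*$; and $\sim_I$ is symmetric because its generating relation $\xrightarrow{\leq}_I\cup\xleftarrow{\leq}_I\cup\xrightarrow{\sim}_I\cup\xleftarrow{\sim}_I$ is closed under converse, and the reflexive--transitive closure of a symmetric relation is symmetric; hence $\sim_I$ is an equivalence relation. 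The inclusion $\leq_I \subseteq \sim_I$ holds because $\xrightarrow{\leq}_I$ is one of the disjuncts generating $\sim_I$, so $(\xrightarrow{\leq}_I)^* \subseteq (\xrightarrow{\leq}_I\cup\xleftarrow{\leq}_I\cup\xrightarrow{\sim}_I\cup\xleftarrow{\sim}_I)^*$.

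For anti-monotonicity, suppose $h \leq_I h'$ and $\emptyset \neq J \subseteq I$. By \Cref{leq-equiv-set_OLD} the non-redundant path from $h$ to $h'$ consists only of steps $h_{k-1} \xrightarrow{\operatorname{P}}_{G_k} h_k$ with $I \subseteq G_k$; since $J \subseteq I \subseteq G_k$ for each such step, the converse direction of \Cref{leq-equiv-set_OLD} gives $h \leq_J h'$. The argument for $\sim_I$ versus $\sim_J$ is identical, replacing \Cref{leq-equiv-set_OLD} by \Cref{sim-equiv-set_OLD}. For the intersection condition, one inclusion, $\leq_{I \cup J} \subseteq \leq_I \cap \leq_J$ (and similarly $\sim_{I \cup J} \subseteq \sim_I \cap \sim_J$), already follows from anti-monotonicity applied with $I \subseteq I \cup J$ and $J \subseteq I \cup J$. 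For the converse, assume $h \leq_I h'$ and $h \leq_J h'$; since $H$ is a tree, both facts concern the \emph{same} non-redundant path, so applying \Cref{leq-equiv-set_OLD} to each shows that every step on that path has the form $h_{k-1} \xrightarrow{\operatorname{P}}_{G_k} h_k$ with both $I \subseteq G_k$ and $J \subseteq G_k$, hence $I \cup J \subseteq G_k$; the converse direction of \Cref{leq-equiv-set_OLD} applied to the group $I \cup J$ then yields $h \leq_{I \cup J} h'$. The same reasoning with \Cref{sim-equiv-set_OLD} handles $\sim$: from $h \sim_I h'$ and $h \sim_J h'$ the common non-redundant path has every step of one of the four admissible forms with $I \cup J \subseteq G_k$, which is exactly a witness for $h \sim_{I \cup J} h'$.

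I do not anticipate a real difficulty here, since all the combinatorial content has been pushed into \Cref{leq-equiv-set_OLD} and \Cref{sim-equiv-set_OLD}. The single point that needs care is the nontrivial direction of the intersection condition: one must explicitly invoke that the non-redundant path between two histories is unique (the tree property of $H$), so that the group constraints extracted separately from $h \leq_I h'$ and from $h \leq_J h'$ refer to the same sequence of steps and can be combined stepwise into $I \cup J \subseteq G_k$.
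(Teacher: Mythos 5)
Your proof is correct and follows essentially the same route as the paper: the structural properties and inclusion come directly from the closure definitions, and both anti-monotonicity and the nontrivial direction of the intersection condition are obtained by combining the step-characterizations of \Cref{leq-equiv-set_OLD} and \Cref{sim-equiv-set_OLD} along the \emph{unique} non-redundant path in the tree $H$. The only cosmetic difference is that the paper routes the intersection condition through the singleton identity $\leq_I=\bigcap_{i\in I}\leq_i$ (and likewise for $\sim$), whereas you argue directly with $I$ and $J$; the underlying path-uniqueness argument is the same.
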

\begin{proof}
    The inclusion condition is satisfied by construction of $\sim_I$: let $h,h'\in H$ and suppose $h\leq_I h'$. Then from $h$, we can reach $h'$ via a unique non-redundant path under the relation $\xrightarrow{\leq}_I$. Since the relation $\xrightarrow{\leq}_I$ is a subset of the relation $\left(\xrightarrow{\leq}_I\cup \xleftarrow{\leq}_I\cup\xrightarrow{\sim}_I\cup \xleftarrow{\sim}_I\right)$, $h$ and $h'$ are automatically connected by the same path, under the relation $\left(\xrightarrow{\leq}_I\cup \xleftarrow{\leq}_I\cup\xrightarrow{\sim}_I\cup \xleftarrow{\sim}_I\right)$. By definition of $\sim_I$, we have $h\sim_I h'$.

    The relation $\leq_I$ is a pre-order by construction: it is the reflexive-transitive closure of $\xrightarrow{\leq}_I$.

    Reflexivity and transitivity of $\sim_I$ are immediate by \Cref{def:final-rels}, since $\sim_I$ is the reflexive-transitive closure of a union of relations. For symmetry, let $h,h'\in H$ and suppose $h\sim_I h'$. Then each of the steps on the non-redundant path form $h$ to $h'$ is of one of the forms listed in \Cref{sim-equiv-set_OLD}. Observe that the converse of each of these steps is also listed, which means that each of the steps on the non-redundant path from $h'$ to $h$ is also of one of the listed forms, i.e., we have $h'\sim_I h$.

    We prove the anti-monotonicity claim only for the $\sim$ relations, since the proof for $\leq$ is similar and less complicated. To see that the $\sim$ relations satisfy the anti-monotonicity condition, let $I,J\subseteq A$ be two groups and let $h,h'\in H$. Suppose that $J\subseteq I$ and $h\sim_I h'$. We claim that $h\sim_J h'$. By $h\sim_I h'$, we know that each of the steps on the non-redundant path form $h$ to $h'$ is of one of the forms listed in \Cref{sim-equiv-set_OLD}. Consider an arbitrary step on this path, from a history $h_{n-1}$ to another history $h_n$. We distinguish the four\footnote{For the proof of anti-monotonicity for $\leq$, case (a) is the only possible case for any step on the path (\Cref{leq-equiv-set_OLD}).} cases from \Cref{sim-equiv-set_OLD}, with ${G_n}$ being an arbitrary superset of $I$:
    \begin{enumerate}
            \item $h_{n-1} \xrightarrow{\operatorname{P}}_{{G_n}} h_{n}$. With $J\subseteq I$, clearly, $J\subseteq {G_n}$. By construction of $\xrightarrow{\leq}_J$ (\Cref{def:intermediate-rels}), we get $h_{n-1} \xrightarrow{\leq}_J h_{n}$.
            \item $h_{n-1} \xleftarrow{\operatorname{P}}_{{G_n}} h_{n}$. This is equivalent to having $h_{n} \xrightarrow{\operatorname{P}}_{{G_n}} h_{n-1}$ and thus, by item (a), we have $h_{n} \xrightarrow{\leq}_J h_{n-1}$, i.e., $h_{n-1} \xleftarrow{\leq}_J h_{n}$.
            \item $h_{n-1} \xrightarrow{\operatorname{E}}_{{G_n}} h_{n}$. Similar to case (a): clearly, $J\subseteq {G_n}$. By construction of $\xrightarrow{\sim}_J$, we get $h_{n-1} \xrightarrow{\sim}_J h_{n}$.
            \item $h_{n-1} \xleftarrow{\operatorname{E}}_{{G_n}} h_{n}$. Similar to (b). We get $h_{n-1} \xleftarrow{\sim}_J h_{n}$.
    \end{enumerate}
    Thus, each step on the path is of the form $h_{n-1} \left(\xrightarrow{\leq}_J\cup \xleftarrow{\leq}_J\cup\xrightarrow{\sim}_J\cup \xleftarrow{\sim}_J\right) h_{n}$. By definition of $\leq_J$ (\Cref{def:final-rels}), we get that $h\leq_J h'$, as required.

    Similarly, we prove the intersection condition only for $\sim$: the proof for $\leq_I$ is similar and less complicated. Let $I,J\subseteq A$ be groups. We show that for any $h,h'\in H$, we have $h\sim_{I\cup J} h'$ if and only if $h\sim_I h'$ and $h\sim_J h'$. Observe that if we assume $\sim_I=\bigcap_{i\in I}\sim_i$, then the result follows directly: by $\sim_{I\cup J} = \bigcap_{i\in I\cup J}\sim_i$, and $\sim_I = \bigcap_{i\in I}\sim_i$ and $\sim_J = \bigcap_{j\in J}\sim_j$, we get that
        \[
        \begin{array}{llll}
            \sim_{I\cup J} &= \bigcap_{i\in I\cup J}\sim_i
            &= \ \bigl (\bigcap_{i\in I}\sim_i\bigr ) \cap \bigl (\bigcap_{j\in J}\sim_j\bigr )
            &= \ \sim_I \cap \sim_J.
        \end{array}
        \]
        It remains to prove the claim. We state it for both $\sim_I$ and $\leq_I$.
        \begin{claim}\label{claim:is-intersect_OLD}
            For all groups $I\subseteq A$, we have that $\sim_I=\bigcap_{i\in I}\sim_i$ and $\leq_I=\bigcap_{i\in I}\leq_i$.
        \end{claim}
        \begin{proof}[Proof of claim.]
    We prove the claim only for $\sim_I$: the proof for $\leq_I$ is similar and less complicated.

    For the left-to-right direction, the claim reduces to anti-monotonicity, which we already proved. For the converse direction, let $h,h'\in H$ and suppose that $h\sim_i h'$ for all $i\in I$. Let $i\in I$ be arbitrary. By definition of $\sim_i$, each of the steps on the non-redundant path form $h$ to $h'$ is of one of the forms listed in \Cref{sim-equiv-set_OLD}. Consider an arbitrary step on this path, from a history $h_{n-1}$ to a history $h_n$. Since the proofs for the different cases from \Cref{sim-equiv-set_OLD} are symmetrical, we only show the proof for case (a)\footnote{For the proof of intersection for $\leq_I$, (a) is the only possible case for any step on the path (\Cref{leq-equiv-set_OLD}).}.

    Suppose that (a) the step is of the form $h_{n-1} \xrightarrow{\operatorname{P}}_{{G_n}} h_{n}$ for some ${G_n}\supseteq \{i\}$. Recall that $i$ was arbitrary, and that this path is unique. It follows that ${G_n}\supseteq \{i'\}$ for all $i'\in I$. But then ${G_n}\supseteq I$. Thus, by definition of $\xrightarrow{\leq}_I$ (\Cref{def:intermediate-rels}), we have that $h_{n-1} \xrightarrow{\leq}_I h_{n}$.

    Combining this with the proofs of the other cases, we get that $h_{n-1}$ and $h_{n}$ must be related by one of the one-step relations $\xrightarrow{\leq}_I, \xleftarrow{\leq}_I,\xrightarrow{\sim}_I$, or $\xleftarrow{\sim}_I$ for $I$. In other words, $h_{n-1} \left(\xrightarrow{\leq}_I\cup \xleftarrow{\leq}_I\cup\xrightarrow{\sim}_I\cup \xleftarrow{\sim}_I\right) h_{n}$. Since this was an arbitrary step on the unique non-redundant path from $h$ to $h'$, we can conclude that $h {\left(\xrightarrow{\leq}_I\cup \xleftarrow{\leq}_I\cup\xrightarrow{\sim}_I\cup \xleftarrow{\sim}_I\right)}^* h'$, i.e., $h \sim_I h'$, as required.
    \end{proof}
    In conclusion, all relational conditions of a pseudo-model for $\mathcal{L}_{{\B[\A]}_I}$ are satisfied by the defined relations.
\end{proof}

We can now define our associated model $\mathbf{X}$ for $\mathbf{S}$. We represent it as a relational evidence model (as opposed to a standard pseudo-model): we explicitly define only the individual relations.  

\begin{definition}[Associated Model]\label{def:assoc-model}
    The \emph{associated model} for $\mathbf{S}$ is a structure $\mathbf{X} = {(H,\leq_i,\sim_i,\llbracket\cdot\rrbracket_{\mathbf{X}})}_{i\in A}$, where 
    \begin{enumerate}
        \item $H$ is the set of all histories on $S$;
        \item For all $i\in A$, $\leq_i\ =\ \leq_{\{i\}}$ and $\sim_i\ =\ \sim_{\{i\}}$, with $\leq_{\{i\}}$ and $\sim_{\{i\}}$ as defined in \Cref{def:final-rels};
        \item The valuation $\llbracket\cdot\rrbracket_{\mathbf{X}}: \prop\to \mathcal{P}(H)$ on histories is defined as $\llbracket p\rrbracket_{\mathbf{X}} =\{h\in H \mid last(h)\in \llbracket p\rrbracket_{\mathbf{S}}\}$.
    \end{enumerate}
\end{definition}
From \Cref{lem:standard-ps}, we conclude that $\mathbf{X}$ is a relational evidence model (as defined in \Cref{def:rela-ev-model}).

\begin{proposition}\label{assoc-as-standardps_OLD}
    We can consider the associated model $\mathbf{X}$ as a standard pseudo-model $\mathbf{X}={(H,\leq_I,\sim_I,\llbracket\cdot\rrbracket_{\mathbf{X}})}_{I\subseteq A}$ for the language $\mathcal{L}_{{\B[\A]}_I}$, by explicitly representing the group relations $\leq_I$ and $\sim_I$ for all groups $I\subseteq A$, as defined in \Cref{def:final-rels}.
\end{proposition}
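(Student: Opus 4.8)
The plan is to check directly that the structure $\mathbf{X}={(H,\leq_I,\sim_I,\llbracket\cdot\rrbracket_{\mathbf{X}})}_{I\subseteq A}$, with the group relations taken from \Cref{def:final-rels} and the individual relations their singleton instances (as in \Cref{def:assoc-model}), satisfies the defining conditions of a standard pseudo-model for $\mathcal{L}_{{\B[\A]}_I}$: the relational conditions of \Cref{pseudo-model} together with the Intersection clause of \Cref{standard-ps}. Concretely, one must verify: (i) each $\leq_I$ is a preorder and each $\sim_I$ is an equivalence relation; (ii) Inclusion, $\leq_I\subseteq\sim_I$, for every group $I$; (iii) Anti-Monotonicity of the families $\{\leq_I\}_I$ and $\{\sim_I\}_I$; (iv) Intersection, i.e.\ $\leq_{I\cup J}=\leq_I\cap\leq_J$ and $\sim_{I\cup J}=\sim_I\cap\sim_J$ for all groups $I,J\subseteq A$; and finally that $\llbracket\cdot\rrbracket_{\mathbf{X}}$ is a valuation map $\prop\to\mathcal{P}(H)$.

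All of (i)--(iv) are immediate from \Cref{lem:standard-ps}, which was established precisely for the relations of \Cref{def:final-rels}: it gives the preorder and equivalence properties, Inclusion, Anti-Monotonicity, and the Intersection condition for arbitrary pairs of groups. The valuation clause of \Cref{def:assoc-model} plainly assigns to each atom a subset of $H$, so $\llbracket\cdot\rrbracket_{\mathbf{X}}$ is a valuation map. Hence no new work beyond \Cref{lem:standard-ps} is needed to conclude that $\mathbf{X}$, equipped with the explicit group relations $\leq_I,\sim_I$, is a standard pseudo-model for $\mathcal{L}_{{\B[\A]}_I}$.

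The one point deserving explicit mention is the coherence between the two readings of $\mathbf{X}$. Read as a relational evidence model (\Cref{def:assoc-model}), $\mathbf{X}$ carries only the individual relations $\leq_i=\leq_{\{i\}}$ and $\sim_i=\sim_{\{i\}}$, and \Cref{def:rela-ev-model} would then define group relations as the abbreviations $\bigcap_{i\in I}\leq_i$ and $\bigcap_{i\in I}\sim_i$; read as a standard pseudo-model it carries the group relations of \Cref{def:final-rels} directly. These two descriptions coincide by \Cref{claim:is-intersect_OLD} (proved inside \Cref{lem:standard-ps}), so the structure is unambiguous, and in particular the pseudo-model semantics (\Cref{sem-boxall-pseudo}) on $\mathbf{X}$ agrees with its relational-evidence semantics (\Cref{sem-relev}), in line with \Cref{fact-rel-standard_OLD}.

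I do not expect a genuine obstacle here: the combinatorial heart of the matter --- the non-redundant-path characterizations of $\leq_I$ and $\sim_I$ in \Cref{leq-equiv-set_OLD} and \Cref{sim-equiv-set_OLD}, and the deductions drawn from them --- has already been carried out in the lemmas preceding this proposition, so the proof is essentially an assembly of those results. The only thing to be careful about is to invoke the Intersection clause of \Cref{lem:standard-ps} in its full generality, for arbitrary pairs $I,J\subseteq A$, rather than merely its singleton instances, since it is exactly the general form that \Cref{standard-ps} requires.
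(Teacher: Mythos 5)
Your proof is correct and matches the paper's: the paper's own argument for this proposition is likewise just an appeal to \Cref{lem:standard-ps}, which establishes the preorder/equivalence, Inclusion, Anti-Monotonicity, and Intersection conditions for the relations of \Cref{def:final-rels}. Your additional remark on the coherence of the two readings of $\mathbf{X}$ (via \Cref{claim:is-intersect_OLD}) is a point the paper also makes, immediately after the proposition, when it notes that the pseudo-model semantics and the relational-evidence semantics agree on $\mathbf{X}$.
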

\begin{proof}
    We proved in \Cref{lem:standard-ps} that the group relations $\leq_I$ and $\sim_I$ for nonempty groups $I\subseteq A$ on $\mathbf{X}$ satisfy all conditions of a pseudo-model for $\mathcal{L}_{{\B[\A]}_I}$, and in particular, that the intersection condition for standard pseudo-models for $\mathcal{L}_{{\B[\A]}_I}$ (\Cref{standard-ps}) is satisfied.
\end{proof}

Given \Cref{assoc-as-standardps_OLD}, and since the pseudo-model-based semantics from \Cref{sem-boxall-pseudo} agrees with the model-based semantics from \Cref{sem-relev} for $\mathbf{X}$, we can compare $\mathbf{S}$ and $\mathbf{X}$ directly as pseudo-models, that is, by explicitly representing the \emph{group} relations.

For pseudo-models for the fragment $\mathcal{L}_{{\B[\A]}_{i,A}}$, the associated model will also be restricted to $\mathcal{L}_{{\B[\A]}_{i,A}}$. It is constructed in the same way: 

\begin{fact}\label{assoc-frag}
    Given a pseudo-model $\mathbf{S_f}$ for the fragment $\mathcal{L}_{{\B[\A]}_{i,A}}$, the \emph{associated model} for $\mathbf{S_f}$ is a structure $\mathbf{X_f}={(H,\leq_i,\sim_i, \llbracket\cdot\rrbracket_{\mathbf{X}})}_{i\in A}$, which is obtained by restricting the construction of the associated model from \Cref{def:assoc-model} to relations for individual agents and for the full group, i.e., the relations labeled by $A$ itself or by groups of the form $\{i\}\subseteq A$. The resulting structure $\mathbf{X_f}$ is a relational evidence model.
\end{fact}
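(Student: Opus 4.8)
The plan is to derive Fact~\ref{assoc-frag} directly from the corresponding facts for the full language, since the fragment construction is literally a restriction of it. First I would note that a pseudo-model $\mathbf{S_f}$ for $\mathcal{L}_{{\B[\A]}_{i,A}}$ carries only relations $\leq_\alpha$ and $\sim_\alpha$ with $\alpha\in\{A\}\cup A$; consequently every $s_0$-generated history over $\mathbf{S_f}$ records one-step relations labeled only by such $\alpha$, and the monotonicity-closure clauses of \Cref{def:intermediate-rels} can only ever produce $\xrightarrow{\leq}_{\{i\}}$, $\xrightarrow{\leq}_A$, $\xrightarrow{\sim}_{\{i\}}$, $\xrightarrow{\sim}_A$ --- for a singleton $\{i\}$ one may take a superset $I\in\{\{i\},A\}$, and for $A$ only $I=A$ is available. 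Hence the construction of \Cref{def:final-rels} is well defined for exactly the labels we keep, yielding $\leq_i:=\leq_{\{i\}}$ and $\sim_i:=\sim_{\{i\}}$ precisely as in the full case.

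Next I would observe that the arguments in \Cref{lem:standard-ps} showing that each $\leq_I$ is a pre-order, each $\sim_I$ an equivalence relation, and $\leq_I\subseteq\sim_I$, rely only on the tree structure of $H$ and on the shape of the one-step relations --- as captured by the path characterizations in \Cref{leq-equiv-set_OLD} and \Cref{sim-equiv-set_OLD} --- and not on the presence of relations for arbitrary groups. Instantiating $I$ at singletons $\{i\}$, these proofs carry over verbatim, so $\leq_i$ is a pre-order, $\sim_i$ is an equivalence relation, and $\leq_i\subseteq\sim_i$ for every $i\in A$. Together with the valuation $\llbracket\cdot\rrbracket_{\mathbf{X}}$ this is exactly what \Cref{def:rela-ev-model} demands, so $\mathbf{X_f}={(H,\leq_i,\sim_i,\llbracket\cdot\rrbracket_{\mathbf{X}})}_{i\in A}$ is a relational evidence model.

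I do not expect a genuine obstacle: the claim is a bookkeeping restriction of the full-language construction, and the only point meriting a line of care is checking that shrinking the set of available group labels does not disturb the monotonicity-closure and reflexive-transitive-closure steps --- it does not, since every superset invoked in those steps still lies in $\{\{i\}:i\in A\}\cup\{A\}$. I would also flag that at this stage nothing beyond the defining conditions of \Cref{def:rela-ev-model} is needed; the extra intersection identities $\sim_A=\bigcap_{i\in A}\sim_i$ and $\leq_A=\bigcap_{i\in A}\leq_i$, which upgrade $\mathbf{X_f}$ to a \emph{standard} pseudo-model for the fragment, are the fragment analogue of \Cref{claim:is-intersect_OLD} and are argued separately.
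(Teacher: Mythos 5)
Your proposal is correct and matches the paper's treatment: the paper states this as a Fact with no separate proof, relying on exactly the observation you make — that the history/one-step/closure construction and the arguments of \Cref{lem:standard-ps} instantiated at singletons go through unchanged when the available group labels are restricted to $\{A\}\cup\{\{i\}:i\in A\}$. Your closing remark correctly separates the standardness (intersection) property, which is indeed handled elsewhere and is not needed to verify the conditions of \Cref{def:rela-ev-model}.
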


To extend our completeness proof with respect to pseudo-models (\Cref{comp-pseudo_OLD}) to relational evidence models, we prove that every formula satisfiable on a pseudo-model for $\mathcal{L}_{{\B[\A]}_{i,A}}$ is also satisfiable on the associated model; we show that the map $last(\cdot)$ from histories to states is a p-morphism, i.e., a functional bisimulation (see~\cite{blackbrijkeven}). We need the following lemma. 

\begin{lemma}\label{leq-last_OLD}
$ $
    \begin{enumerate}
        \item For all groups $I\subseteq A$, if $h \xrightarrow{\leq}_I h'$, then $last(h) \leq_I last(h')$.
        \item For all groups $I\subseteq A$, if $h \xrightarrow{\sim}_I h'$, then $last(h) \sim_I last(h')$. 
    \end{enumerate}
\end{lemma}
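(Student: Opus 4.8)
The plan is to prove both parts by unfolding the definitions of the one-step relations from \Cref{def:intermediate-rels} and then invoking the anti-monotonicity of the relations of the original pseudo-model $\mathbf{S}$ (condition~1 of \Cref{pseudo-model}).

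For part~1, suppose $h \xrightarrow{\leq}_I h'$. By definition of $\xrightarrow{\leq}_I$ there is a group $G \supseteq I$ with $h \xrightarrow{\operatorname{P}}_G h'$; unfolding $\xrightarrow{\operatorname{P}}_G$ then gives $h' = (h, \leq_G, last(h'))$ together with $last(h) \leq_G last(h')$ in $\mathbf{S}$. Since $G \supseteq I$ and $I \neq \emptyset$, anti-monotonicity of the $\leq$-relations in $\mathbf{S}$ yields $last(h) \leq_I last(h')$, as required. Part~2 is entirely analogous: from $h \xrightarrow{\sim}_I h'$ we obtain a group $G \supseteq I$ with $h \xrightarrow{\operatorname{E}}_G h'$, hence $h' = (h, \sim_G, last(h'))$ and $last(h) \sim_G last(h')$ in $\mathbf{S}$, and anti-monotonicity of the $\sim$-relations gives $last(h) \sim_I last(h')$.

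I do not expect a genuine obstacle here; the one point to keep straight is the two layers of relations involved. The one-step relations $\xrightarrow{\leq}_I$ and $\xrightarrow{\sim}_I$ live on histories and are, at this stage, literally single steps — the reflexive-transitive closure is only taken afterwards in \Cref{def:final-rels} — whereas the conclusion concerns the relations of $\mathbf{S}$ between states in $S$. Because each such one-step relation is exactly one application of $\xrightarrow{\operatorname{P}}_G$ (resp.\ $\xrightarrow{\operatorname{E}}_G$) for a single $G \supseteq I$, one invocation of anti-monotonicity suffices in each case, and no induction on path length is needed.
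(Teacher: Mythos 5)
Your proof is correct and follows exactly the paper's argument: unfold $\xrightarrow{\leq}_I$ (resp.\ $\xrightarrow{\sim}_I$) to a single $\xrightarrow{\operatorname{P}}_G$ (resp.\ $\xrightarrow{\operatorname{E}}_G$) step with $G \supseteq I$, read off $last(h) \leq_G last(h')$ (resp.\ $\sim_G$) in $\mathbf{S}$, and apply anti-monotonicity. Your closing remark that no induction is needed at this stage is also exactly right — the induction on path length only enters in the subsequent lemma.
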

\begin{proof} 
    \begin{enumerate}
        \item Suppose $h \xrightarrow{\leq}_I h'$. By \Cref{def:intermediate-rels} of $\xrightarrow{\leq}_I$, there is $G\supseteq I$ such that $h \xrightarrow{\operatorname{P}}_G h'$. By \Cref{def:intermediate-rels} of $\xrightarrow{\operatorname{P}}_G$, we have $h'=(h,\leq_G,s')$ with $last(h)\leq_G s'=last(h')$. By the anti-monotonicity condition on pseudo-models (\Cref{pseudo-model}), we get that $last(h) \leq_I last(h')$.
        \item Similar to the proof of (1).
    \end{enumerate}
\end{proof}

For our final step, we inductively extend the properties from Lemma~\ref{leq-last_OLD} to groups:

\begin{lemma}\label{extend-leq_OLD}
$ $
    \begin{enumerate}
        \item For all groups $I\subseteq A$, if $h \leq_I h'$, then $last(h) \leq_I last(h')$.
        \item For all groups $I\subseteq A$, if $h \sim_I h'$, then $last(h) \sim_I last(h')$.
    \end{enumerate}
    
\end{lemma}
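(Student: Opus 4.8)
The plan is to prove both parts by induction on the length of the path witnessing the relation, using \Cref{leq-last_OLD} for the single-step case and the order-theoretic conditions on the pseudo-model $\mathbf{S}$ (reflexivity, transitivity, symmetry, and the inclusion $\leq_I\subseteq\sim_I$) to close the induction.

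For part (1), I would recall that by \Cref{def:final-rels} the relation $h\leq_I h'$ unfolds to $h\left(\xrightarrow{\leq}_I\right)^* h'$, i.e.\ there is a finite chain $h=h_0\xrightarrow{\leq}_I h_1\xrightarrow{\leq}_I\dots\xrightarrow{\leq}_I h_n=h'$. The induction is on $n$. For $n=0$ we have $h=h'$, hence $last(h)=last(h')$, and reflexivity of $\leq_I$ on $\mathbf{S}$ gives $last(h)\leq_I last(h')$. For the inductive step, split off the last edge $h_{n-1}\xrightarrow{\leq}_I h'$: the induction hypothesis yields $last(h)\leq_I last(h_{n-1})$, \Cref{leq-last_OLD}(1) yields $last(h_{n-1})\leq_I last(h')$, and transitivity of $\leq_I$ on $\mathbf{S}$ finishes the case.

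Part (2) is analogous but with four edge types. By \Cref{def:final-rels}, $h\sim_I h'$ means there is a finite path from $h$ to $h'$ each of whose steps is one of $\xrightarrow{\leq}_I$, $\xleftarrow{\leq}_I$, $\xrightarrow{\sim}_I$, $\xleftarrow{\sim}_I$. Inducting on the path length, the base case is reflexivity of $\sim_I$, and for the inductive step it suffices to check that a single edge of each of the four types between $g$ and $g'$ forces $last(g)\sim_I last(g')$; the full claim then follows from the induction hypothesis and transitivity of $\sim_I$. In the case $g\xrightarrow{\leq}_I g'$, \Cref{leq-last_OLD}(1) gives $last(g)\leq_I last(g')$, and the inclusion condition $\leq_I\subseteq\sim_I$ of \Cref{pseudo-model} upgrades this to $last(g)\sim_I last(g')$; in the case $g\xrightarrow{\sim}_I g'$, \Cref{leq-last_OLD}(2) gives it directly; and the two converse cases $g\xleftarrow{\leq}_I g'$ and $g\xleftarrow{\sim}_I g'$ reduce to the previous two by symmetry of $\sim_I$ on $\mathbf{S}$.

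There is no real obstacle here: the content is entirely in \Cref{leq-last_OLD}, and the rest is bookkeeping with the pseudo-model axioms. The only points needing a moment's care are (i) that a $\leq$-edge must be converted to a $\sim$-relation via $\leq_I\subseteq\sim_I$ before it can be composed with $\sim$-edges, and (ii) the handling of the backward edges $\xleftarrow{\leq}_I$ and $\xleftarrow{\sim}_I$, which is the reason symmetry of $\sim_I$ is invoked.
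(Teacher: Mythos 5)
Your proof is correct and follows essentially the same route as the paper's: induction on the length of the witnessing path, \Cref{leq-last_OLD} for each single step, and the pseudo-model axioms (reflexivity, transitivity, symmetry of $\sim_I$, and the inclusion $\leq_I\subseteq\sim_I$) to compose the steps — including the same care about converting $\leq$-edges into $\sim$-facts and handling the backward edges via symmetry. The only cosmetic difference is that you work directly with a chain witnessing the reflexive-transitive closure, while the paper first invokes Lemmas~\ref{leq-equiv-set_OLD} and~\ref{sim-equiv-set_OLD} to characterize the steps on the unique non-redundant path.
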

\begin{proof}
    \begin{enumerate}
        \item By induction on the length $n$ of the non-redundant path from $h$ to $h'$. For the base case, where $h \leq_I h'$ with $n=0$, we have $h=h'$. So the claim that $last(h) \leq_I last(h')$ follows immediately from reflexivity of $\leq_I$.

    For the inductive step, suppose the claim holds for paths of length $n$, and suppose the non-redundant path from $h$ to $h'$ has length $n+1$. By \Cref{leq-equiv-set_OLD}, the last step of the non-redundant path from $h$ to $h'$ must be of the form $h_{n} \xrightarrow{\operatorname{P}}_{I^{n+1}} h_{n+1}=h'$, with $I^{n+1} \supseteq I$. So by definition of $\xrightarrow{\leq}_I$, we have $h_n\xrightarrow{\leq}_I h_{n+1}$. Using \Cref{leq-last_OLD}.1 we obtain that $last(h_n)\leq_I last(h_{n+1})$. By transitivity of $\leq_I$, it now suffices to show that $last(h)\leq_I last(h_n)$ (since that would give us that $last(h)\leq_I last(h_{n+1})$).

    Since the path from $h$ to $h_n$ has length $n$, we can apply the induction hypothesis to the fact that $h\leq_I h_n$ (which follows from our assumption that $h\leq_I h_{n+1}$). This gives us that $last(h) \leq_I last(h_n)$.

    \item By induction on the length $n$ of the non-redundant path from $h$ to $h'$. For the base case, where $h \sim_I h'$ with $n=0$ we have $h=h'$. So the claim that $last(h) \sim_I last(h')$ follows immediately from reflexivity of $\sim_I$.

    For the inductive step, suppose the claim holds for paths of length $n$, and suppose the non-redundant path from $h$ to $h'$ has length $n+1$. By \Cref{sim-equiv-set_OLD}, the last step of the non-redundant path from $h$ to $h'$ must be of one of the forms
    \begin{multicols}{2}
    \begin{enumerate}
        \item $h_{n} \xrightarrow{\operatorname{P}}_{I^{n+1}} h_{n+1}$
        \item $h_n \xleftarrow{\operatorname{P}}_{I^{n+1}} h_{n+1}$
        \item $h_{n} \xrightarrow{\operatorname{E}}_{I^{n+1}} h_{n+1}$
        \item $h_{n} \xleftarrow{\operatorname{E}}_{I^{n+1}} h_{n+1}$
    \end{enumerate}
    \end{multicols}
    with $h_{n+1}=h'$ and $I^{n+1} \supseteq I$. So applying the definitions of $\xrightarrow{\leq}_I$ and $\xrightarrow{\sim}_I$ to these cases, one of the following is the case:
    \begin{multicols}{2}
    \begin{enumerate}
        \item $h_{n} \xrightarrow{\leq}_{I^{n+1}} h_{n+1}$
        \item $h_n \xleftarrow{\leq}_{I^{n+1}} h_{n+1}$
        \item $h_{n} \xrightarrow{\sim}_{I^{n+1}} h_{n+1}$
        \item $h_{n} \xleftarrow{\sim}_{I^{n+1}} h_{n+1}$.
    \end{enumerate}
\end{multicols}
    First observe that the path from $h$ to $h_n$ has length $n$ and we can therefore apply the induction hypothesis to the fact that $h\sim_I h_n$ (which follows from the assumption that $h\sim_I h_{n+1}$ and from the definition of $\sim_I$). This gives us that $last(h)\sim_I last(h_n)$. It remains to show that $last(h_n)\sim_I last(h_{n+1})$, which by transitivity of $\sim_I$ will give us that $last(h)\sim_I last(h_{n+1})=last(h')$, as required.

    We use \Cref{leq-last_OLD}.1 for cases (1) and (2), and \Cref{leq-last_OLD}.2 for cases (3) and (4), to obtain that either $last(h_n)\leq_I last(h_{n+1})$, or $last(h_n)\sim_I last(h_{n+1})$, or one of their converses is true. In the cases of $last(h_n)\sim_I last(h_{n+1})$ and $last(h_{n+1})\sim_I last(h_n)$ we are done, so suppose that $last(h_n)\leq_I last(h_{n+1})$ or $last(h_{n+1})\leq_I last(h_n)$ is the case. But then we have by the inclusion condition on pseudo-models that $last(h_n)\sim_I last(h_{n+1})$, so we can conclude that $last(h)\sim_I last(h')$.
    \end{enumerate}
\end{proof}

We can now show that the function $last(\cdot)$ from associated models to pseudo-models is a \emph{p-morphism}.\footnote{A functional bisimulation, see~\cite{blackbrijkeven}.}

\begin{proposition}\label{bisim_OLD}
    Let $\mathbf{S}$ be a pseudo-model and let its associated model be given by $\mathbf{X}$. Then the map $last: H \rightarrow S$, mapping every history $h \in H$ to its last element $last(h)$, defines a p-morphism from $\mathbf{X}$ to $\mathbf{S}$ (with $\mathbf{X}$ and $\mathbf{S}$ seen as Kripke models with basic relations $\sim_I$ for all groups $I\subseteq A$). 
\end{proposition}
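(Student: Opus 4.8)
The plan is to check the three defining conditions of a p-morphism — atom-preservation, the \emph{forth} (homomorphism) condition, and the \emph{back} (zig-zag) condition — for $last\colon H\to S$ with respect to each of the relations $\leq_I$ and $\sim_I$ ($I\subseteq A$). Atom-preservation is immediate from the definition of the valuation on the associated model in \Cref{def:assoc-model}: $h\in\llbracket p\rrbracket_{\mathbf{X}}$ holds exactly when $last(h)\in\llbracket p\rrbracket_{\mathbf{S}}$. The \emph{forth} condition is already in hand: it is precisely \Cref{extend-leq_OLD}, which says that $h\leq_I h'$ implies $last(h)\leq_I last(h')$, and $h\sim_I h'$ implies $last(h)\sim_I last(h')$.

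So the only new work is the \emph{back} condition. Here I would argue as follows: given a history $h$ with $last(h)=s$ and a state $s'$ of $\mathbf{S}$ with $s\leq_I s'$ (respectively $s\sim_I s'$), take the one-step extension $h':=(h,\leq_I,s')$ (respectively $h':=(h,\sim_I,s')$). Since $last(h)=s$ stands in the required relation to $s'$, this $h'$ is a legitimate history by the definition of $H$; by \Cref{def:intermediate-rels} we get $h\xrightarrow{\operatorname{P}}_I h'$ (respectively $h\xrightarrow{\operatorname{E}}_I h'$), hence $h\xrightarrow{\leq}_I h'$ (respectively $h\xrightarrow{\sim}_I h'$) because $I\supseteq I$, and therefore $h\leq_I h'$ (respectively $h\sim_I h'$) after taking the reflexive-transitive closure in \Cref{def:final-rels}. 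As $last(h')=s'$ by construction, $h'$ is the desired witness. I would also note that $last$ is surjective onto the submodel of $\mathbf{S}$ generated by $s_0$ — since $s_0=last\big((s_0)\big)$ and, whenever $s=last(h)$ and $s\,R_I\,t$, also $t=last\big((h,R_I,t)\big)$ — which is all the completeness argument requires.

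I do not expect a real obstacle: the substantive content was front-loaded into \Cref{leq-last_OLD} and \Cref{extend-leq_OLD}, which propagate the relational facts along arbitrary non-redundant tree paths. The only points that need a little care are confirming that the one-step extension used for the back condition genuinely lies inside the closure-defined group relations of \Cref{def:final-rels} (not merely inside some one-step relation), and checking that the whole argument goes through uniformly for every group $I$; both are routine bookkeeping once the definitions are unfolded.
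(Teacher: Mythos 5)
Your proposal is correct and follows essentially the same route as the paper's proof: atom-preservation from the valuation on histories, the forth condition from \Cref{extend-leq_OLD}, and the back condition witnessed by the one-step extension of the history. Your treatment of the back condition is in fact slightly more careful than the paper's in two respects: you use the witness $(h,\sim_I,s')$ for the $\sim_I$ case (where the paper's text writes $(h,\leq_I,s')$, an apparent slip, since $last(h)\sim_I s'$ need not give a valid $\leq_I$-step), and you explicitly trace the one-step relation through $\xrightarrow{\operatorname{P}}_I$ / $\xrightarrow{\operatorname{E}}_I$ into the closure-defined relations of \Cref{def:final-rels}.
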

\begin{proof}
    The function $last(\cdot)$ is well-defined: since every history $h\in H$ is by definition a nonempty sequence, it contains at least one state. Since it is also finite, it must have a last state: $last(h)$ exists. To see that $last(\cdot)$ is a p-morphism, we check the following three conditions:

    \emph{Atomic preservation for atoms} $p \in \prop$: This is immediate by definition of the valuation function $\llbracket\cdot\rrbracket_{\mathbf{X}}$ for associated models.

    \emph{Forth condition}: let $I\subseteq A$ be a group. For $\leq_I$, assume $h\leq_I h'$; then $last(h) \leq_I last(h')$ is immediate from \Cref{extend-leq_OLD}.1. For $\sim_I$, assume $h \sim_I h'$ then $last(h) \sim_I last(h')$ is immediate from \Cref{extend-leq_OLD}.2.

    \emph{Back condition}: let $I\subseteq A$ be a group. For $\leq_I$, assume $last(h)\leq_I s'$. We need to prove that there is $h'\in H$ such that $h \leq_I h'$ and $last(h')=s'$. From $last(h)\leq_I s'$, we know that $(h,\leq_I, s')$ is a history in $H$. So we can take $h':=(h,\leq_I, s')$. Similarly, for $\sim_I$, assume $last(h)\sim_I s'$. Again, we can take $h':=(h,\leq_I, s')$ to prove that there is $h'\in H$ such that $h \sim_I h'$ and $last(h')=s'$.
\end{proof}

\begin{corollary}\label{coroll-bisim_OLD}
    The same formulas in $\mathcal{L}_{{\B[\A]}_I}$ are satisfiable in the associated model $\mathbf{X}$, as in its p-morphic image contained in the pseudo-model $\mathbf{S}$ for $\mathcal{L}_{{\B[\A]}_I}$. More precisely, for every history $h \in H$ and every formula $\phi\in \mathcal{L}_{{\B[\A]}_I}$, we have:
    \begin{align*}
        h \vDash_{\mathbf{X}} \phi \quad \text{ iff }\quad last(h) \vDash_{\mathbf{S}} \phi.
    \end{align*}
\end{corollary}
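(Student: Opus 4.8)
The plan is to derive this as the standard modal-logic consequence of the p-morphism established in \Cref{bisim_OLD}: p-morphisms preserve the truth of all modal formulas, in both directions, at corresponding points. So the proof is a routine induction on the structure of $\phi \in \mathcal{L}_{{\B[\A]}_I}$, where the only work is in the modal clauses, and there the forth and back conditions of $last(\cdot)$ do exactly what is needed.

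First I would set up the induction. For the base case $\phi = p$, the equivalence $h \vDash_{\mathbf{X}} p \Leftrightarrow last(h) \vDash_{\mathbf{S}} p$ is immediate from the definition of the valuation $\llbracket \cdot \rrbracket_{\mathbf{X}}$ on histories (\Cref{def:assoc-model}), i.e., the atomic-preservation clause checked in \Cref{bisim_OLD}. The Boolean cases $\neg \psi$ and $\psi \wedge \theta$ are immediate from the induction hypothesis, since the semantic clauses for the connectives are pointwise and identical in $\mathbf{X}$ (read as a standard pseudo-model via \Cref{assoc-as-standardps_OLD}) and in $\mathbf{S}$.

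For the modal cases, consider $\phi = \B_I \psi$ (the case $\phi = {[\A]}_I \psi$ is handled identically, replacing $\leq_I$ by $\sim_I$ throughout). For the left-to-right direction, assume $h \vDash_{\mathbf{X}} \B_I \psi$ and suppose $last(h) \leq_I t$ in $\mathbf{S}$; by the back condition of \Cref{bisim_OLD} there is $h' \in H$ with $h \leq_I h'$ and $last(h') = t$, whence $h' \vDash_{\mathbf{X}} \psi$, and the induction hypothesis gives $t = last(h') \vDash_{\mathbf{S}} \psi$; since $t$ was arbitrary, $last(h) \vDash_{\mathbf{S}} \B_I \psi$. For the converse, assume $last(h) \vDash_{\mathbf{S}} \B_I \psi$ and suppose $h \leq_I h'$ in $\mathbf{X}$; by the forth condition of \Cref{bisim_OLD} (which is just \Cref{extend-leq_OLD}.1) we get $last(h) \leq_I last(h')$, so $last(h') \vDash_{\mathbf{S}} \psi$, and the induction hypothesis yields $h' \vDash_{\mathbf{X}} \psi$; since $h'$ was arbitrary, $h \vDash_{\mathbf{X}} \B_I \psi$.

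I do not expect any real obstacle here: the substantive content has already been discharged in \Cref{lem:standard-ps} (so that $\mathbf{X}$ genuinely is a standard pseudo-model and its pseudo-model semantics coincides with the relational-evidence-model semantics) and in \Cref{bisim_OLD} (so that $last(\cdot)$ is a p-morphism with respect to \emph{all} the relations $\leq_I$ and $\sim_I$). The only point requiring a word of care is to note that the group relations on $\mathbf{X}$ used in the semantics are exactly those of \Cref{def:final-rels}, matching the relations with respect to which the p-morphism conditions were verified, so that the induction goes through uniformly for every group $I \subseteq A$ and both modalities.
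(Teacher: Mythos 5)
Your proposal is correct and takes essentially the same route as the paper: the paper simply invokes the standard result that modal formulas are preserved under p-morphisms (citing \Cref{bisim_OLD} and the textbook preservation theorem), whereas you spell out the underlying induction explicitly, using the forth and back conditions in exactly the way that standard proof does. No gap; the extra care you note about the group relations on $\mathbf{X}$ matching those of \Cref{def:final-rels} is indeed the only point worth flagging.
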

\begin{proof}
     By \Cref{bisim_OLD}, the map $last(\cdot): H \rightarrow S$ is a bisimulation between $\mathbf{S}$ and its image in $\mathbf{X}$, seen as Kripke models for the language with modalities $\B_I$ and ${[\A]}_I$ for all groups $I\subseteq A$. Since $\mathcal{L}_{{\B[\A]}_I}$ is the basic modal language for this vocabulary, formulas in $\mathcal{L}_{{\B[\A]}_I}$ are preserved by $last(\cdot)$ (by the standard results on preservation of modal formulas under bisimulations, cf.~\cite{blackbrijkeven}).
\end{proof}

We naturally extend \Cref{coroll-bisim_OLD} to the fragment of the language:

\begin{corollary}\label{coroll-bisim-frag_OLD}
    Let $\mathbf{S_f}$ be a pseudo-model for $\mathcal{L}_{{\B[\A]}_{i,A}}$. The same formulas in $\mathcal{L}_{{\B[\A]}_{i,A}}$ are satisfiable in the associated model $\mathbf{X_f}$, as in its p-morphic image in $\mathbf{S_f}$.
\end{corollary}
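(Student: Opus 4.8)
The plan is to re-run the proof of \Cref{coroll-bisim_OLD} with every group index restricted to $\alpha\in\{A\}\cup A$. First I would note that, by \Cref{assoc-frag}, the associated model $\mathbf{X_f}$ is built exactly as in \Cref{def:assoc-model}: its states are the $s_0$-generated histories over $\mathbf{S_f}$, which now only record steps labeled by the relations $\leq_\alpha$ and $\sim_\alpha$, and the relations $\leq_\alpha,\sim_\alpha$ on $H$ are obtained via the one-step relations and the constructions of \Cref{def:intermediate-rels,def:final-rels}. Since fragment pseudo-models satisfy Anti-Monotonicity and Inclusion for this restricted index set (\Cref{pseudo-model}), the combinatorial facts used in the full-language argument carry over verbatim: the path characterizations \Cref{leq-equiv-set_OLD,sim-equiv-set_OLD} and the $last(\cdot)$-compatibility \Cref{leq-last_OLD,extend-leq_OLD} all hold with $I$ ranging over $\{i\}$ and $A$, because their proofs only invoke Anti-Monotonicity, Inclusion and the tree structure of $H$. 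One point deserving a line of care is that the monotonicity-closure in \Cref{def:intermediate-rels} only ever passes to \emph{supersets} of a label, and the only admissible supersets of $\{i\}$ are $\{i\}$ and $A$; hence the construction never introduces a relation indexed by a proper subgroup of size $\geq 2$, so $\mathbf{X_f}$ genuinely is a Kripke model over the restricted vocabulary (equivalently, a relational evidence model, as stated in \Cref{assoc-frag}).

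Next I would verify that $last(\cdot):H\to S_f$ is a p-morphism from $\mathbf{X_f}$ onto its image in $\mathbf{S_f}$, regarding both as Kripke models with basic relations $\leq_\alpha$ and $\sim_\alpha$ for $\alpha\in\{A\}\cup A$. Atomic preservation is immediate from the definition of $\llbracket\cdot\rrbracket_{\mathbf{X}}$ on histories; the forth condition for $\leq_\alpha$ and $\sim_\alpha$ is exactly \Cref{extend-leq_OLD} in its restricted form; and the back condition is as in \Cref{bisim_OLD}, since whenever $last(h)$ is related to $s'$ by $\leq_\alpha$ (resp.\ $\sim_\alpha$), the sequence $(h,\leq_\alpha,s')$ (resp.\ $(h,\sim_\alpha,s')$) is again a history in $H$ and serves as the required successor.

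Finally, since $\mathcal{L}_{{\B[\A]}_{i,A}}$ is precisely the basic modal language over the relations $\leq_\alpha,\sim_\alpha$ with $\alpha\in\{A\}\cup A$, invariance of modal formulas under p-morphisms (cf.~\cite{blackbrijkeven}) gives $h\vDash_{\mathbf{X_f}}\phi$ iff $last(h)\vDash_{\mathbf{S_f}}\phi$ for every $h\in H$ and every $\phi\in\mathcal{L}_{{\B[\A]}_{i,A}}$, which is the assertion of the corollary. I expect the only mild obstacle to be the bookkeeping just mentioned — confirming that the restricted label set $\{A\}\cup A$ is closed under the operations used in the associated-model construction — after which the argument is a faithful restriction of the one already carried out for the full language.
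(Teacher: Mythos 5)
Your proposal is correct and matches the paper's proof, which simply says to restrict all the constructions and lemmas from the full-language argument (the associated model, the path characterizations, and the $last(\cdot)$ p-morphism) to the index set $\{A\}\cup A$ and rerun the reasoning of \Cref{coroll-bisim_OLD}. Your extra check that the label set is closed under the superset-passing in \Cref{def:intermediate-rels} is a sensible piece of bookkeeping that the paper leaves implicit.
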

\begin{proof}
    The proof is obtained by restricting all the constructions in the proof of \Cref{coroll-bisim_OLD} to $\mathcal{L}_{{\B[\A]}_{i,A}}$. This gives us a bisimulation between $\mathbf{S_f}$ and $\mathbf{X_f}$. An argument following the same line of reasoning as \Cref{coroll-bisim_OLD} then concludes our proof.
\end{proof}

We finally prove \Cref{corr:compness-boxall-frag}: 

\medskip
\paragraph{Proof of \Cref{corr:compness-boxall-frag}} 
For $\bm{{\B[\A]}_I}$, soundness of the axioms and rules from \Cref{pf-syst} is a routine check.

As for completeness, let $\phi\in \mathcal{L}_{{\B[\A]}_I}(\prop)$ be any consistent formula. By \Cref{comp-pseudo_OLD}, there exists a pseudo-model $\mathbf{S}={(S,\leq_I,\sim_I, \llbracket\cdot\rrbracket_{\mathbf{S}})}_{I\subseteq A}$ for $\mathcal{L}_{{\B[\A]}_I}$ and some state $s_0\in S$, such that $(\mathbf{S},s)\vDash\phi$. Consider the associated model $\mathbf{X}={(H,\leq_i,\sim_i, \llbracket\cdot\rrbracket_{\mathbf{X}})}_{i\in A}$ for $\mathbf{S}$, where $H$ is given by the set of $s_0$-generated histories in the pseudo-model $\mathbf{S}$.

By \Cref{coroll-bisim_OLD}, the same formulas in $\mathcal{L}_{{\B[\A]}_I}(\prop)$ are satisfied in the associated model $\mathbf{X}$ as in its p-morphic image in $\mathbf{S}$. Note that $s_0$ is contained in the p-morphic image of $\mathbf{X}$ in $\mathbf{S}$, since the sequence $h:=(s_0)$ is an $s_0$-generated history in $H$ with $last(h)=s_0$. Therefore, $\phi$ is satisfied on $\mathbf{X}$.

This gives us weak completeness of $\bm{{\B[\A]}_I}$ with respect to relational evidence models. By \Cref{cor:rel-sem-equiv-concl_OLD}, we obtain weak completeness with respect to multi-agent topo-e-models. Decidability of the logic of $\mathcal{L}_{{\B[\A]}_I}$ follows from the fact that it has the finite pseudo-model property (see \Cref{comp-pseudo_OLD}).

For $\bm{{\B[\A]}_{i,A}}$, soundness follows directly from soundness for the proof system $\bm{{\B[\A]}_I}$, given that the axioms and rules of $\bm{{\B[\A]}_{i,A}}$ are contained in $\bm{{\B[\A]}_I}$.

The completeness proof for $\bm{{\B[\A]}_{i,A}}$ follows the same line of reasoning as the proof for $\bm{{\B[\A]}_I}$: let $\phi\in \mathcal{L}_{{\B[\A]}_{i,A}}(\prop)$ be any consistent formula. By \Cref{comp-pseudo_OLD}, there exists a pseudo-model $\mathbf{S}$ for $\mathcal{L}_{{\B[\A]}_{i,A}}$ that satisfies $\phi$ at some state $s_0$. By \Cref{coroll-bisim-frag_OLD}, there exists an associated model $\mathbf{X}$ for $\mathbf{S}$, such that its state space $H$ is defined by the $s_0$-originated histories of $\mathbf{S}$, and therefore satisfies $\phi$. The associated model is a relational evidence model, which gives us weak completeness for $\bm{{\B[\A]}_{i,A}}$ with respect to relational evidence models. By \Cref{cor:rel-sem-equiv-concl_OLD}, we obtain weak completeness for topo-e-models. Decidability of $\bm{{\B[\A]}_{i,A}}$ follows from the fact that it has the finite pseudo-model property (see \Cref{comp-pseudo_OLD}).

%%%%%%%%%%%%%%%%%%%%%%%%%%%%%%%%%%%%%%%%%%%%%%%%%%%%%%%%%%%%%%%%%%%%%%

\subsection{Proof of Completeness and Decidability for the Restricted Logic of Group Knowledge and Belief (\texorpdfstring{\Cref{corr:compness-kb}}{Theorem 8})}

The structure of this proof bears a superficial resemblance to the structure of the proof of \Cref{corr:compness-boxall-frag}. The construction for $\mathcal{L}_{KB_{i,A}}$ requires an extra step. We first define the relevant pseudo-models, which explicitly represent the relations corresponding to knowledge and belief. We prove completeness with respect to these structures via the standard canonical-model construction; we refer to~\cite{blackbrijkeven} for a detailed discussion of this construction. 

The crucial step is the \emph{representation theorem}
(\Cref{boxforall-on-kb_OLD}). It states that each pseudo-model $\mathbf{M}$ for $\mathcal{L}_{KB_{i,A}}$ can alternatively be represented as a pseudo-model $\mathbf{S}$ for $\mathcal{L}_{{\B[\A]}_{i,A}}$, which agrees with $\mathbf{M}$ on the interpretation of formulas over the language $\mathcal{L}_{KB_{i,A}}$. We recover relations for knowledge and belief on pseudo-models for $\mathcal{L}_{{\B[\A]}_{i,A}}$ of evidence and, conversely,  evidence relations on a pseudo-model for $\mathcal{L}_{KB_{i,A}}$ of knowledge and belief. The former construction is straightforward, since the relations for knowledge and belief are uniquely determined by the evidence relations. The latter is more complicated.

Throughout this proof, fix a finite group of agents $A$ and a finite vocabulary $\prop$. 

\subsubsection{Pseudo-Models for \texorpdfstring{$\mathcal{L}_{KB_{i,A}}$.}{KBiA}}\label{sec:kb-def-pseudo} For several relations $R$ on our models, we will be using the notion of $R$-maximal worlds. We define $R$-maximality as follows: 

\begin{definition}[$R$-maximal worlds]\label{maximal}
    Given a set of states $S$ and a relation $R$ on $S$, define $R$-maximal worlds of $S$ as $Max_R(S):=\{s\in S\mid \A w\in S(s R w \Rightarrow w R s)\}$.
\end{definition} 

We now define the relevant pseudo-models.

\begin{definition}[Pseudo-Model for $\mathcal{L}_{KB_{i,A}}$]\label{kb-pseudo}
    A \emph{pseudo-model for $\mathcal{L}_{KB_{i,A}}$} is a structure $\mathbf{M}={(S, \kn_i, \rightarrow_i, \kn_A, \rightarrow_A,\llbracket\cdot\rrbracket)}_{i\in A}$, where $S$ is a set of states and $V$ is a valuation. A pseudo-model for $\mathcal{L}_{KB_{i,A}}$ is required to satisfy the following conditions: 
    \begin{enumerate}
        \item \textbf{Stalnaker's conditions}. The knowledge and belief modalities $\kn_i$ and $\rightarrow_i$, for $i\in A$, and $\kn_A$ and $\rightarrow_A$, for the full group $A$, each satisfy the relational correspondents of Stalnaker's axioms (see~\cite{stalnaker}). That is, for all $\alpha\in \{A\}\cup A$ we have:
    \begin{itemize}
        \item The $\kn_\alpha$ (knowledge) relation is \textbf{S4}, i.e., $\kn_\alpha$ is a preorder;
        \item The $\rightarrow_\alpha$ (belief) relation is \textbf{KD45}, i.e., $\rightarrow_\alpha$ is serial, transitive, and Euclidean; 
        \item \textbf{Inclusion}. $\rightarrow_\alpha\subseteq \kn_\alpha$; 
        \item \textbf{Strong Transitivity}. For all $s,t,u\in S$, if $s\kn_\alpha t$ and $t\rightarrow_\alpha u$, then $s\rightarrow_\alpha u$;
        \item \textbf{Strong Euclideanity}. For all $s,t,u\in S$, if $s\kn_\alpha t$ and $s\rightarrow_\alpha u$, then $t\rightarrow_\alpha u$;
        \item \textbf{Full Belief}. For all $s,t,u\in S$, if $s\rightarrow_\alpha t$ and $t\kn_\alpha u$, then $s\rightarrow_\alpha u$.
    \end{itemize}
    \item \textbf{WM-Condition}. For all $i\in A$, $\kn_A\subseteq (\kn_i\cup \rightarrow_A)$;
    \item \textbf{Super-Introspection condition}. For all $s,t,u\in S$, if $s\kn_A t$, then we have for all $i\in A$, that $s\rightarrow_i u$ if and only if $t\rightarrow_i u$;
    \item \textbf{CBD-Condition}. For all $s\in S$ there exists $w\in S$ such that $s\ (\rightarrow_A\cap \bigcap_{i\in A} \kn_i)\ w$. 
    \end{enumerate}
\end{definition}

It can be checked that Stalnaker's conditions imply that the knowledge relation on pseudo-models for $\mathcal{L}_{KB_{i,A}}$ is weakly directed\footnote{A relation $R$ on a relational frame $\mathcal{M}=(X,R)$ is \emph{weakly directed} (also called \emph{directed} or \emph{confluent}) if we have for all $x,y,z\in X$ with $x R y$ and $x R z$, that there exists $u\in X$ such that $y R u$ and $z R u$.} (see~\cite{stalnakersknowledge}), that is, the knowledge relation is $\mathsf{S4.2}$. 

Belief relations on pseudo-models for $\mathcal{L}_{KB_{i,A}}$ have the property that $\alpha\in \{A\}\cup A$ believes $\phi$ if and only if $\phi$ is true in the $\kn_\alpha$-maximal worlds within the current information cell:

\begin{lemma}\label{nice-belief_OLD}
    On a pseudo-model $\mathbf{M} = {(S, \kn_i, \rightarrow_i, \kn_A, \rightarrow_A, \llbracket\cdot\rrbracket)}_{i \in A}$ for $\mathcal{L}_{KB_{i,A}}$, we have for all $\alpha\in \{A\}\cup A$ and for all $s,w\in S$ that
    \begin{align*}
        s\rightarrow_\alpha w &\text{ iff } s\kn_\alpha w\in Max_{\kn_\alpha}(S).
    \end{align*} 
\end{lemma}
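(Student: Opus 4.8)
The plan is to derive the biconditional, uniformly in $\alpha \in \{A\}\cup A$, purely from the relational correspondents of Stalnaker's axioms collected in \Cref{kb-pseudo} (Inclusion $\rightarrow_\alpha\subseteq\kn_\alpha$, Strong Euclideanity, Full Belief, seriality of $\rightarrow_\alpha$), together with the weak directedness of $\kn_\alpha$ noted immediately after that definition. For the left-to-right direction, assume $s\rightarrow_\alpha w$. Then $s\kn_\alpha w$ is immediate by Inclusion, so it only remains to show $w\in Max_{\kn_\alpha}(S)$. Take any $u$ with $w\kn_\alpha u$. Full Belief applied to $s\rightarrow_\alpha w$ and $w\kn_\alpha u$ yields $s\rightarrow_\alpha u$, hence also $s\kn_\alpha u$ by Inclusion; then Strong Euclideanity applied to $s\kn_\alpha u$ and $s\rightarrow_\alpha w$ yields $u\rightarrow_\alpha w$, hence $u\kn_\alpha w$ by Inclusion. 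Thus every $\kn_\alpha$-successor of $w$ sees $w$ back, i.e.\ $w$ is $\kn_\alpha$-maximal.

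For the right-to-left direction, assume $s\kn_\alpha w$ with $w\in Max_{\kn_\alpha}(S)$. First I would record (or just cite the noted $\mathsf{S4.2}$-property) that $\kn_\alpha$ is weakly directed: given $s\kn_\alpha x$ and $s\kn_\alpha y$, seriality of $\rightarrow_\alpha$ gives some $u$ with $s\rightarrow_\alpha u$, and Strong Euclideanity then gives $x\rightarrow_\alpha u$ and $y\rightarrow_\alpha u$, whence $x\kn_\alpha u$ and $y\kn_\alpha u$ by Inclusion. Now pick, by seriality, a $v$ with $s\rightarrow_\alpha v$, so $s\kn_\alpha v$; apply weak directedness to $s\kn_\alpha w$ and $s\kn_\alpha v$ to obtain a common successor $u$ with $w\kn_\alpha u$ and $v\kn_\alpha u$; since $w$ is $\kn_\alpha$-maximal and $w\kn_\alpha u$, we get $u\kn_\alpha w$. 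Finally, Full Belief applied to $s\rightarrow_\alpha v$ and $v\kn_\alpha u$ gives $s\rightarrow_\alpha u$, and Full Belief applied once more to $s\rightarrow_\alpha u$ and $u\kn_\alpha w$ gives $s\rightarrow_\alpha w$, as required.

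Since every condition invoked belongs to ``Stalnaker's conditions'' in \Cref{kb-pseudo} and holds for each $\alpha\in\{A\}\cup A$, the argument needs no separate treatment for individuals versus the full group $A$. The only mildly delicate point — the one I would flag as the crux — is the choice of the ``pivot'' world $u$ in the converse direction: it must simultaneously be a $\kn_\alpha$-successor of a doxastic successor $v$ of $s$ (so that Full Belief pushes $s\rightarrow_\alpha$ forward onto it) and a $\kn_\alpha$-predecessor of $w$ (so that Full Belief then carries us on to $w$). Weak directedness of $\kn_\alpha$ combined with the $\kn_\alpha$-maximality of $w$ is exactly what manufactures such a $u$; everything else is a routine unfolding of the axiom correspondents.
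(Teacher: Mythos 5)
Your proof is correct and follows essentially the same strategy as the paper's: a direct chase through the relational correspondents of Stalnaker's axioms (Inclusion, Strong Euclideanity, Full Belief, seriality), handled uniformly for all $\alpha\in\{A\}\cup A$. The only difference is in the converse direction, where the paper reaches the pivot world more directly — by seriality it picks $w\rightarrow_\alpha w'$, uses maximality of $w$ to get $w'\kn_\alpha w$, applies Strong Transitivity to $s\kn_\alpha w\rightarrow_\alpha w'$ and then Full Belief — whereas you detour through weak directedness of $\kn_\alpha$ and two applications of Full Belief; both routes are sound, yours simply avoids Strong Transitivity at the cost of re-deriving directedness.
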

\begin{proof}
    Let $i\in A$. We show the proof for the individual relation $\rightarrow_i$. The proof for $\rightarrow_A$ is symmetrical, as it refers only to Stalnaker's conditions on $\rightarrow_A$, which are analogous to Stalnaker's conditions on $\rightarrow_i$. For the left-to-right direction, let $s,w\in S$ and suppose $s\rightarrow_i w$. Then the inclusion condition gives us that $s\kn_i w$. To show that $w\in Max_{\kn_i}(S)$, suppose that $w\kn_i w'$. It suffices to show that $w'\kn_i w$. Observe that $s\kn_i w\kn_i w'$ gives us $s\kn_i w'$ (by transitivity of $\kn_i$); now we have $s\rightarrow_i w$ and $s\kn_i w'$ which, by strong Euclideanity, gives us $w'\rightarrow_i w$. But then, again by the inclusion condition, we obtain $w'\kn_i w$, as required.

    For the right-to-left direction, let $s,w\in S$ and suppose $s\kn_i w\in Max_{\kn_i}(S)$. We show $s\rightarrow_i w$. By seriality of $\rightarrow_i$, there exists $w'\in S$ such that $w\rightarrow_i w'$. Using the inclusion condition, we obtain $w\kn_i w'$. But then, since $w\in Max_{\kn_i}(S)$, we also have $w'\kn_i w$. Now we have $s\kn_i w\rightarrow_i w'$ which, by strong transitivity, gives us $s\rightarrow_i w'$. Finally, $s\rightarrow_i w'\kn_i w$ gives us (by full belief) that $s\rightarrow_i w$, as required.
\end{proof}

We interpret formulas over $\mathcal{L}_{KB_{i,A}}$ on pseudo-models for $\mathcal{L}_{KB_{i,A}}$ as follows:

\begin{definition}[Pseudo-Model Semantics of $\mathcal{L}_{KB_{i,A}}$]\label{sem-pseudo-kb}
    The topological semantics of $\mathcal{L}_{KB_{i,A}}(\prop)$ on pseudo-model $\mathbf{M}= {(S, \kn_i, \rightarrow_i, \kn_A, \rightarrow_A,\llbracket\cdot\rrbracket)}_{i\in A}$ for $\mathcal{L}_{KB_{i,A}}$ is defined recursively as
    \[
    \begin{array}{lll}
        (\mathbf{M}, s) \vDash p & \text{ iff } & s\in 
        \llbracket p\rrbracket
        \\
        (\mathbf{M}, s) \vDash \neg \phi & \text{ iff } &  (\mathbf{M}, s) \not\vDash \phi \\
        (\mathbf{M}, s) \vDash \phi \wedge \psi & \text{ iff } & (\mathbf{M}, s) \vDash \phi \text{ and } (\mathbf{M}, s) \vDash \psi \\
        (\mathbf{M}, s) \vDash K_i \phi & \text{ iff } & \text{ for all } t\in S \text{ s.t. } s\kn_i t: (\mathbf{M}, t) \vDash \phi \\
        (\mathbf{M}, s) \vDash B_i \phi & \text{ iff } &  \text{ for all } t\in S \text{ s.t. } s\rightarrow_i t: (\mathbf{M}, t) \vDash \phi \\
        (\mathbf{M}, s) \vDash K_A \phi & \text{ iff } &  \text{ for all } t\in S \text{ s.t. } s\kn_A t: (\mathbf{M}, t)\vDash \phi \\
        (\mathbf{M}, s) \vDash B_A \phi & \text{ iff } &  \text{ for all } t\in S \text{ s.t. } s\rightarrow_A t: (\mathbf{M}, t) \vDash \phi
    \end{array}
    \]
    where $s\in S$ is any state and $p\in \prop$ is any propositional variable.
\end{definition}

\subsubsection{Soundness and Completeness of \texorpdfstring{$\mathcal{L}_{KB_{i,A}}$}{KBiA} w.r.t. Pseudo-Models.}\label{sec:pseudo-sound-comp} We first prove soundness and completeness with respect to pseudo-models. \Cref{soundness-kb-pseudo_OLD} takes care of soundness. 

\begin{proposition}\label{soundness-kb-pseudo_OLD}
    The proof system $\bm{KB_{i,A}}$ for $\mathcal{L}_{KB_{i,A}}$ (displayed in \Cref{pf-systKB}) is sound with respect to relational pseudo-models for $\mathcal{L}_{KB_{i,A}}$. 
\end{proposition}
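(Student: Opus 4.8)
The plan is to run the standard soundness argument: verify that every axiom of $\bm{KB_{i,A}}$ is valid on every pseudo-model for $\mathcal{L}_{KB_{i,A}}$, and that the rules (modus ponens and the necessitation rules for $K_\alpha$ and $B_\alpha$) preserve validity. Since the pseudo-model semantics (\Cref{sem-pseudo-kb}) reads each $K_\alpha$ as the box modality for $\kn_\alpha$ and each $B_\alpha$ as the box modality for $\rightarrow_\alpha$, the block of axioms and rules of normal modal logic holds by the usual Kripke-frame reasoning. For the Stalnaker epistemic-doxastic block, I would observe that each axiom is the modal correspondent of exactly one relational constraint bundled into ``Stalnaker's conditions'' in \Cref{kb-pseudo}: (T) from reflexivity of $\kn_\alpha$, (KK) from transitivity of $\kn_\alpha$, (CB) from seriality of $\rightarrow_\alpha$, Knowledge-implies-Belief (KB) from the inclusion $\rightarrow_\alpha\subseteq\kn_\alpha$, (SPI) from Strong Transitivity, (SNI) from Strong Euclideanity, and (FB) from the Full Belief condition; each is a one-line correspondence check of the familiar Sahlqvist kind, and these are precisely the correspondences already established for the single-agent topological setting in~\cite{stalnakersknowledge}.

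It then remains to check the three Group Knowledge axioms, each matched to one of the extra pseudo-model conditions. For (SI), suppose $(\mathbf{M},s)\models B_i\phi$ and fix $t$ with $s\kn_A t$ and $u$ with $t\rightarrow_i u$; the Super-Introspection condition gives $s\rightarrow_i u$, hence $(\mathbf{M},u)\models\phi$, so $(\mathbf{M},t)\models B_i\phi$ and therefore $(\mathbf{M},s)\models K_A B_i\phi$. For (WM), suppose $(\mathbf{M},s)\models K_i\phi\wedge B_A\phi$ and fix $t$ with $s\kn_A t$; the WM-condition $\kn_A\subseteq(\kn_i\cup\rightarrow_A)$ gives $s\kn_i t$ or $s\rightarrow_A t$, and in either case $(\mathbf{M},t)\models\phi$, so $(\mathbf{M},s)\models K_A\phi$. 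For (CBD), suppose $(\mathbf{M},s)\models\bigwedge_{i\in A}K_i\phi_i$; the CBD-condition supplies a $w$ with $s\rightarrow_A w$ and $s\kn_i w$ for every $i\in A$, so $(\mathbf{M},w)\models\phi_i$ for all $i$, whence $w$ witnesses $(\mathbf{M},s)\models\langle B_A\rangle(\bigwedge_{i\in A}\phi_i)$.

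I do not expect a genuine obstacle here: the conditions in \Cref{kb-pseudo} were set up exactly so that this verification succeeds, so the only real work is the bookkeeping of pairing each axiom with its relational correspondent. If one wishes to shorten the belief cases, \Cref{nice-belief_OLD} (characterising $s\rightarrow_\alpha w$ as ``$s\kn_\alpha w$ with $w$ being $\kn_\alpha$-maximal'') can be invoked, but it is not needed for soundness; the argument is a routine check throughout.
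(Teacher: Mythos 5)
Your proposal is correct and matches the paper's approach exactly: the paper dismisses this proposition as "a routine check of the correspondences between the axioms of $\bm{KB_{i,A}}$ and the properties of pseudo-models," and you have simply carried out that check, pairing each axiom with the right relational condition (including the correct matches of (SPI) to Strong Transitivity, (SNI) to Strong Euclideanity, and (FB) to the Full Belief condition) and verifying the three group axioms against the Super-Introspection, WM, and CBD conditions. Nothing is missing.
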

\begin{proof}
    The proof is a routine check of the correspondences between the axioms of $\bm{KB_{i,A}}$ and the properties of pseudo-models for $\mathcal{L}_{KB_{i,A}}$. 
\end{proof}

Since the completeness proof amounts to a straightforward ``canonical model'' construction, we omit it here. \Cref{comp-pseudo-KB} summarises our results:

\begin{corollary}\label{comp-pseudo-KB}
    The proof system $\bm{KB_{i,A}}$ is sound and weakly complete with respect to pseudo-models for $\mathcal{L}_{KB_{i,A}}$.
\end{corollary}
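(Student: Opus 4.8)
The plan is to run the standard Henkin-style canonical-model construction. Soundness is already Proposition~\ref{soundness-kb-pseudo_OLD}, so only weak completeness needs an argument. First I would build the \emph{canonical model} $\mathbf{M}^c = {(S^c, \kn_i, \rightarrow_i, \kn_A, \rightarrow_A, \llbracket\cdot\rrbracket)}_{i\in A}$, whose states $S^c$ are the maximal $\bm{KB_{i,A}}$-consistent subsets of $\mathcal{L}_{KB_{i,A}}(\prop)$, with $w\,\kn_\alpha\,v$ iff $\{\phi \mid K_\alpha\phi\in w\}\subseteq v$ and $w\,\rightarrow_\alpha\,v$ iff $\{\phi \mid B_\alpha\phi\in w\}\subseteq v$ (for $\alpha\in\{A\}\cup A$), and $\llbracket p\rrbracket := \{w\in S^c \mid p\in w\}$. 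Since $K_\alpha$ and $B_\alpha$ are normal modalities (cf.\ the axioms and rules of normal modal logic in \Cref{pf-systKB}), the Existence Lemma and the Truth Lemma go through verbatim as in~\cite{blackbrijkeven}: $(\mathbf{M}^c,w)\vDash\phi$ iff $\phi\in w$, for every $\phi\in\mathcal{L}_{KB_{i,A}}(\prop)$ and every $w\in S^c$.

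The substantive part is to verify that $\mathbf{M}^c$ is in fact a \emph{pseudo-model for $\mathcal{L}_{KB_{i,A}}$} in the sense of \Cref{kb-pseudo}, i.e.\ that its relations satisfy conditions (1)--(4); each is the canonical-frame correspondent of one of the axioms. Stalnaker's conditions~(1) on $\kn_\alpha$ and $\rightarrow_\alpha$ (preorder; serial–transitive–Euclidean; inclusion; strong transitivity; strong Euclideanity; full belief) drop out of the Stalnaker axioms (T), (KK), (CB), (SPI), (SNI), (KB), (FB) by the familiar Sahlqvist-style computations, exactly as in the single-agent completeness proof of~\cite{stalnakersknowledge}, applied to each $\alpha\in\{A\}\cup A$ separately. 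For the WM-condition~(2), $\kn_A\subseteq(\kn_i\cup\rightarrow_A)$, I would argue contrapositively: if $s\,\kn_A\,t$ but $t$ is neither a $\kn_i$- nor a $\rightarrow_A$-successor of $s$, pick witnesses $K_i\psi\in s$ with $\psi\notin t$ and $B_A\chi\in s$ with $\chi\notin t$; then $K_i\psi\wedge B_A\chi\in s$, so (WM) gives $K_A(\psi\wedge\chi)\in s$, hence $\psi\wedge\chi\in t$, a contradiction. The Super-Introspection condition~(3) follows from (SI), using (SNI) and ``knowledge implies belief'' for the ``only if'' direction (from $\neg B_i\phi$ one derives $B_i\neg B_i\phi$, applies (SI) to $\neg B_i\phi$, and pushes it through a $\rightarrow_i$-successor). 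The CBD-condition~(4) is where a genuine Lindenbaum argument is needed: given $s$, the set $\{\phi \mid B_A\phi\in s\}\cup\bigcup_{i\in A}\{\psi \mid K_i\psi\in s\}$ is $\bm{KB_{i,A}}$-consistent — otherwise some finite subset fails, and writing $\rho_i$ for the conjunction of the chosen $K_i$-formulas and $\chi$ for the conjunction of the chosen $B_A$-formulas, normality gives $K_i\rho_i\in s$ and $B_A\chi\in s$, so (CBD) yields $\langle B_A\rangle\bigwedge_{i\in A}\rho_i\in s$, contradicting the fact that $B_A\chi\in s$ together with $\chi\vdash\neg\bigwedge_{i\in A}\rho_i$ force $B_A\neg\bigwedge_{i\in A}\rho_i\in s$; extending this consistent set to a maximal consistent $w$ gives the required $w$ with $s\,(\rightarrow_A\cap\bigcap_{i\in A}\kn_i)\,w$.

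Finally, given any consistent $\phi_0\in\mathcal{L}_{KB_{i,A}}(\prop)$, Lindenbaum's Lemma produces a state $w_0\in S^c$ with $\phi_0\in w_0$, and the Truth Lemma gives $(\mathbf{M}^c,w_0)\vDash\phi_0$; since $\mathbf{M}^c$ is a pseudo-model for $\mathcal{L}_{KB_{i,A}}$, weak completeness follows. I expect the main obstacle to be essentially bookkeeping: matching the three non-Stalnaker frame conditions (WM, SI, CBD) to their axioms, and in particular handling the CBD-condition, where the (CBD) axiom must be combined with the normality of $K_i$ and $B_A$ to manufacture a single successor world lying simultaneously in $\rightarrow_A$ and in every $\kn_i$. (One can also check in passing that $\mathbf{M}^c$ carries whatever additional closure property is needed downstream for the representation theorem, but this is not required for the present corollary.)
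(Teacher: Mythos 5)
Your proposal takes exactly the paper's route: the paper's proof of this corollary is a one-line appeal to the standard canonical-model construction, and you are filling in the frame-correspondence checks it leaves implicit; the Stalnaker, SI, and CBD verifications you sketch all go through. One step as written would fail, though it is easily repaired: from $K_i\psi\wedge B_A\chi\in s$, axiom (WM) does \emph{not} yield $K_A(\psi\wedge\chi)\in s$ --- (WM) requires the \emph{same} formula under both $K_i$ and $B_A$, and the implication $(K_i\psi\wedge B_A\chi)\to K_A(\psi\wedge\chi)$ is not even sound (the believed $\chi$ may be false). The correct move is to pass to $\theta:=\psi\vee\chi$: normality of $K_i$ and $B_A$ gives $K_i\theta\wedge B_A\theta\in s$, so (WM) yields $K_A\theta\in s$, whence $\psi\vee\chi\in t$, contradicting $\psi\notin t$ and $\chi\notin t$ (as $t$ is maximal consistent). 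With that fix, and taking $\rho_i:=\top$ in the CBD argument for agents contributing no $K_i$-formulas to the finite inconsistent subset, the proof is complete.
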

\begin{proof}
    We prove completeness with respect to pseudo-models for $\mathcal{L}_{KB_{i,A}}$ by showing that every consistent formula $\phi\in \mathcal{L}_{KB_{i,A}}(\prop)$ is satisfiable in the canonical pseudo-model for $\mathcal{L}_{KB_{i,A}}$ for $\mathcal{L}_{KB_{i,A}}(\prop)$. The canonical pseudo-model is defined according to the standard ``canonical model'' construction (see e.g.~\cite{blackbrijkeven}). 
\end{proof}

\textbf{Back and Forth between Pseudo-Models for \texorpdfstring{$\mathcal{L}_{KB_{i,A}}$}{KBiA} and for \texorpdfstring{$\mathcal{L}_{{\B[\A]}_I}$}{BoxForallI}.}\label{sec:repr-thm} It remains to prove completeness with respect to the intended topo-e-models. We represent the pseudo-models from \Cref{kb-pseudo} as pseudo-models for $\mathcal{L}_{{\B[\A]}_{i,A}}$, such that we can reuse the correspondence of these models with standard relational evidence models (which we obtained by using the unraveling technique, see \Cref{sec:pseudo-to-model-boxall}). 

We show both directions of the correspondence. The proof is structured as follows: 
\begin{enumerate}[(1)]
    \item \emph{From pseudo-models for $\mathcal{L}_{{\B[\A]}_{i,A}}$ to pseudo-models for $\mathcal{L}_{KB_{i,A}}$}. This is the straightforward direction of the proof. We recover the (uniquely determined) relations for knowledge and belief on a pseudo-model $\mathbf{S}$ for $\mathcal{L}_{{\B[\A]}_{i,A}}$, and show that the semantics for $\mathcal{L}_{{\B[\A]}_{i,A}}$ in terms of these relations (defined in \Cref{sem-pseudo-kb}), applied to $\mathbf{S}$, agrees with the original semantics on $\mathbf{S}$ (defined in \Cref{sem-boxall-pseudo}), on all formulas over $\mathcal{L}_{KB_{i,A}}$ where we recall that $\mathcal{L}_{KB_{i,A}}$ is a fragment of $\mathcal{L}_{{\B[\A]}_{i,A}}$. 
    
    \item The crucial step: \emph{from pseudo-models for $\mathcal{L}_{KB_{i,A}}$ to pseudo-models for $\mathcal{L}_{{\B[\A]}_{i,A}}$}. We present an approach to recover the evidence relations on a pseudo-model $\mathbf{M}$ for $\mathcal{L}_{KB_{i,A}}$ using the existing relations for knowledge and belief.
    
    \item Representing $\mathbf{M}$ as a pseudo-model $\mathbf{S}$ for $\mathcal{L}_{{\B[\A]}_{i,A}}$, consisting of the newly defined evidence relations, we use the approach from (1) to recover the corresponding knowledge and belief relations from these evidence relations. 
    
    \item Finally, we show that the newly recovered relations for knowledge and belief on $\mathbf{S}$ coincide with the original relations for knowledge and belief on $\mathbf{M}$. 
    \item Using the result from (1), we conclude that on the two representations of $\mathbf{M}$, as a pseudo-model for $\mathcal{L}_{KB_{i,A}}$ and as a pseudo-model for $\mathcal{L}_{{\B[\A]}_{i,A}}$, the interpretations of knowledge and belief coincide.
    \item We derive that the semantics for the two representations agree on all formulas in the language $\mathcal{L}_{KB_{i,A}}$. 
\end{enumerate}

\textbf{From pseudo-models for \texorpdfstring{$\mathcal{L}_{{\B[\A]}_{i,A}}$}{BoxForalliA} to pseudo-models for \texorpdfstring{$\mathcal{L}_{KB_{i,A}}$}{KBiA}}\label{sec:box-model-to-pseudo}. On pseudo-models for $\mathcal{L}_{{\B[\A]}_{i,A}}$, relations for knowledge and belief can directly be recovered from the evidence relations by unfolding the interpretations of knowledge and belief as abbreviations; we show this in \Cref{kb-on-boxforall_OLD}.

We define a map that requires the $\leq$ relations on the pseudo-model to have a particular property, which we refer to as \emph{max-density}: 

\begin{definition}[Max-dense]\label{max-dense}
    Given a set of states $S$ and a pre-order $R$ on $S$, we say that $R$ is \emph{max-dense} if, for all $s\in S$, there exists $t \in Max_{R}(S)$, such that $s R t$. Equivalently, the pre-order $R$ is max-dense if the set $Max_{R}(S)$ is dense in the up-set topology (see~\cite{aybukephd}, Chapter 3.1.2) with respect to $R$. 
\end{definition}

\begin{observation}
    Relations on finite models are automatically max-dense, by the absence of infinite $R$-chains. Thus, we can consider max-density as a generalization of finiteness.
\end{observation}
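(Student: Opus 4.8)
The plan is to obtain max-density of an arbitrary preorder $R$ on a finite state set $S$ directly from finiteness, by passing to the strict order induced by $R$. So fix such an $S$ and $R$, and let $s\in S$ be arbitrary; we must produce some $t\in Max_R(S)$ with $s R t$. First I would consider the $R$-up-set of $s$, namely $U_s:=\{w\in S\mid s R w\}$; by reflexivity of $R$ we have $s\in U_s$, so $U_s$ is a nonempty finite subset of $S$.

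Next, introduce the strict relation $\prec$ on $S$ by letting $x\prec y$ hold iff $x R y$ but not $y R x$, and check that it is a strict partial order: irreflexivity is immediate, and transitivity follows from that of $R$ (if $x\prec y$ and $y\prec z$ then $x R z$ by transitivity; and if moreover $z R x$ held, then $z R y$ would follow from $x R y$, contradicting $y\prec z$; hence $\neg(zRx)$, i.e.\ $x\prec z$). Since $S$ is finite and $\prec$ is a strict partial order, it has no infinite ascending chains --- this is exactly the ``absence of infinite $R$-chains'' alluded to in the statement --- so the nonempty set $U_s$ has a $\prec$-maximal element $t$. I then claim $t\in Max_R(S)$: given any $w$ with $t R w$, transitivity of $R$ together with $s R t$ (which holds as $t\in U_s$) yields $s R w$, so $w\in U_s$; were $\neg(wRt)$, we would have $t\prec w$ with $w\in U_s$, contradicting the $\prec$-maximality of $t$; hence $w R t$. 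Thus $t\in Max_R(S)$ and $s R t$, completing the argument.

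No serious obstacle arises here; the only point worth flagging is that $R$ is merely a preorder rather than a partial order, so ``maximal element'' has to be read through the strict order $\prec$ (equivalently, through the poset quotient $S/{\approx}$ with $x\approx y$ iff $x R y$ and $y R x$), not naively --- the rest is a routine finiteness argument. For the closing remark that max-density generalizes finiteness, nothing more need be shown: the proof used finiteness of $S$ only to extract a $\prec$-maximal element of $U_s$, and that step survives for many infinite models (e.g.\ whenever $\prec$ is upward well-founded). Equivalently, as noted in \Cref{max-dense}, one may phrase this topologically: the minimal open sets of the Alexandroff up-set topology of $R$ are exactly the sets $\uparrow_{R}\!s$, so density of $Max_R(S)$ amounts to its meeting every $\uparrow_{R}\!s$, which is precisely what we verified.
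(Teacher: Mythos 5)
Your proof is correct and matches the paper's (unwritten) justification exactly: the paper offers only the one-line reason ``absence of infinite $R$-chains,'' and your argument via the induced strict order $\prec$ and a $\prec$-maximal element of the up-set $U_s$ is precisely the careful formalization of that remark, correctly handling the preorder subtlety. No gaps.
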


We now define the map and prove that it preserves the interpretation of knowledge and belief. 

\begin{proposition}\label{kb-on-boxforall_OLD} 
    Let $\mathbf{S}={(S,\leq_i,\sim_i,\leq_A,\sim_A,\llbracket\cdot\rrbracket)}_{i\in A}$ be a pseudo-model for $\mathcal{L}_{{\B[\A]}_{i,A}}$ such that for each $\alpha\in \{A\}\cup A$, the relation $\leq_\alpha$ is max-dense. Let $\alpha\in \{A\}\cup A$. If we set 
    \begin{align*}
        s\rightarrow_\alpha^{\mathbf{S}} w \quad &\text{ iff } \quad s\sim_\alpha w \in Max_{\leq_\alpha}(S) \\
        s\kn_\alpha^{\mathbf{S}} w \quad &\text{ iff } \quad s \leq_\alpha w \text{ or } s\rightarrow_\alpha^{\mathbf{S}} w 
    \end{align*} 
    then the following statements hold for $\mathbf{S}$:
    \begin{enumerate}
    \item the structure
      $\mathbf{M}_{\mathbf{S}} = {(S, \kn_i^{\mathbf{S}}, \rightarrow_i^{\mathbf{S}}, \kn_A^{\mathbf{S}},\rightarrow_A^{\mathbf{S}},\llbracket\cdot\rrbracket)}_{i\in A}$ is a pseudo-model for $\mathcal{L}_{KB_{i,A}}$;
        \item  we have for all $s\in S$ and for all formulas $\phi$ over the language $\mathcal{L}_{KB_{i,A}}$:
        \[
        \begin{array}{llll}
            (a) & (\mathbf{S},s)\vDash B_\alpha\phi & \text{ iff } & \text{ for all } t\in S \text{ s.t. } s\rightarrow_\alpha^{\mathbf{S}} t: (\mathbf{M}_{\mathbf{S}}, t) \vDash \phi \\
             (b) & (\mathbf{S},s)\vDash K_\alpha\phi & \text{ iff } &\text{ for all } t\in S \text{ s.t. } s \kn_\alpha^{\mathbf{S}} t: (\mathbf{M}_{\mathbf{S}}, t)\vDash \phi.
        \end{array}
        \]
    \end{enumerate}
\end{proposition}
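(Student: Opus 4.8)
The plan is to dispose of the two assertions in turn, after recording a couple of elementary facts about $\leq_\alpha$-maximal worlds that get reused everywhere. Fix $\alpha\in\{A\}\cup A$ and abbreviate $M_\alpha:=Max_{\leq_\alpha}(S)$. Since $\leq_\alpha$ is a preorder, one checks immediately that (i) if $t\in M_\alpha$ and $t\leq_\alpha u$ then $u\leq_\alpha t$ and $u\in M_\alpha$ (so the $\leq_\alpha$-upset of a maximal world is exactly its $\leq_\alpha$-cluster, and every world in that cluster is again maximal), and that $\B_\alpha\phi$ is upward-persistent along $\leq_\alpha$; and (ii) using the Inclusion clause $\leq_\alpha\subseteq\sim_\alpha$ of \Cref{pseudo-model} and transitivity of $\sim_\alpha$, both $s\rightarrow_\alpha^{\mathbf{S}}w$ and $s\kn_\alpha^{\mathbf{S}}w$ imply $s\sim_\alpha w$. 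Max-density (\Cref{max-dense}) says precisely that every $s$ has some $w\in M_\alpha$ with $s\leq_\alpha w$.

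For Part~1, every clause of \Cref{kb-pseudo} for $\mathbf{M}_{\mathbf{S}}$ follows by a short case analysis on the two-clause definitions of $\kn_\alpha^{\mathbf{S}}$ and $\rightarrow_\alpha^{\mathbf{S}}$, using (i), (ii) and max-density. Concretely: $\kn_\alpha^{\mathbf{S}}$ is reflexive (it contains $\leq_\alpha$) and transitive (the only non-immediate case, $s\leq_\alpha t$ and $t\sim_\alpha u\in M_\alpha$, uses $s\leq_\alpha t\sim_\alpha u$ to get $s\sim_\alpha u$); $\rightarrow_\alpha^{\mathbf{S}}$ is serial by max-density and transitive and Euclidean because $\sim_\alpha$ is; Inclusion is by definition; and Strong Transitivity, Strong Euclideanity and Full Belief all reduce, via (ii), to rearranging $\sim_\alpha$-equivalences and using (i) to absorb a stray $\leq_\alpha$-step into a maximal cluster. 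For the group-specific clauses: WM holds since $s\kn_A^{\mathbf{S}}t$ is either $s\leq_A t$ (whence $s\leq_i t$ by anti-monotonicity of \Cref{pseudo-model}, so $s\kn_i^{\mathbf{S}}t$) or $s\rightarrow_A^{\mathbf{S}}t$, already in the right-hand side; Super-Introspection holds since $s\kn_A^{\mathbf{S}}t$ gives $s\sim_A t$, hence $s\sim_i t$, so $s$ and $t$ have the same $\rightarrow_i^{\mathbf{S}}$-successors; and the CBD-condition is witnessed by any $\leq_A$-maximal world $w$ above $s$ (which exists by max-density, satisfies $s\rightarrow_A^{\mathbf{S}}w$ by (ii), and satisfies $s\kn_i^{\mathbf{S}}w$ since $s\leq_A w$ implies $s\leq_i w$).

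For Part~2 the engine is the following purely semantic fact about $\mathbf{S}$: for every formula $\phi$ and every $\alpha$, $(\mathbf{S},s)\vDash B_\alpha\phi$ iff $(\mathbf{S},w)\vDash\phi$ for all $w$ with $s\rightarrow_\alpha^{\mathbf{S}}w$, and $(\mathbf{S},s)\vDash K_\alpha\phi$ iff $(\mathbf{S},w)\vDash\phi$ for all $w$ with $s\kn_\alpha^{\mathbf{S}}w$. To prove the first, unfold $B_\alpha\phi=[\A]_\alpha\Diamond_\alpha\B_\alpha\phi$ (\Cref{sem-boxall-pseudo}) and observe, using max-density and upward-persistence of $\B_\alpha\phi$, that $(\mathbf{S},t)\vDash\Diamond_\alpha\B_\alpha\phi$ iff there is $w\in M_\alpha$ with $t\leq_\alpha w$ and $(\mathbf{S},w)\vDash\B_\alpha\phi$; then fact (i) lets one rewrite the $[\A]_\alpha$-quantification over the $\sim_\alpha$-class of $s$ as a quantification over the $M_\alpha$-worlds in that class, i.e.\ over the $\rightarrow_\alpha^{\mathbf{S}}$-successors of $s$. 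The second follows since $K_\alpha\phi=\B_\alpha\phi\wedge B_\alpha\phi$ amounts to truth at all $\leq_\alpha$-successors together with truth at all $\rightarrow_\alpha^{\mathbf{S}}$-successors, which by the definition of $\kn_\alpha^{\mathbf{S}}$ is truth at all $\kn_\alpha^{\mathbf{S}}$-successors. With this in hand, a routine induction on $\phi\in\mathcal{L}_{KB_{i,A}}$ (atoms and Booleans trivial; the $B_\alpha$- and $K_\alpha$-cases combine the displayed fact with the induction hypothesis and \Cref{sem-pseudo-kb}) gives $(\mathbf{S},s)\vDash\phi$ iff $(\mathbf{M}_{\mathbf{S}},s)\vDash\phi$ for all $s$, and the stated equivalences (a) and (b) are the $B_\alpha\phi$- and $K_\alpha\phi$-instances, read off via the same fact and \Cref{sem-pseudo-kb}.

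The only step that is not pure bookkeeping is the reduction of $\Diamond_\alpha\B_\alpha\phi$ to ``$\B_\alpha\phi$ holds at some $\leq_\alpha$-maximal world above the current one'': this is exactly what max-density is introduced to guarantee, and it is what makes belief (defined through $\Diamond_\alpha\B_\alpha$) collapse to truth at the maximal worlds of the information cell. Everything else — all of Part~1 and the induction of Part~2 — is straightforward, if somewhat lengthy, verification, which matches the expectation that this is the ``easy'' direction of the representation.
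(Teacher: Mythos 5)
Your proposal is correct and follows essentially the same route as the paper's proof: Part~1 by direct case analysis on the two-clause definitions of $\kn_\alpha^{\mathbf{S}}$ and $\rightarrow_\alpha^{\mathbf{S}}$ (with the same witnesses for seriality, CBD, etc.), and Part~2 by unfolding $B_\alpha$ as ${[\A]}_\alpha\Diamond_\alpha\B_\alpha$ and using max-density together with the cluster structure of $\leq_\alpha$-maximal worlds. The only (harmless) organizational difference is that you first state the key semantic fact purely in terms of $\mathbf{S}$ and then explicitly run the induction over $\phi$ that the paper defers to \Cref{comp-sim_OLD}.
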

\begin{proof}
    The interpretations of the modalities and the definitions of the corresponding relations for the full group $A$ are analogous to those for individual agents, therefore we only show the cases for individual agents in both proofs. 

    For (1), we show that $\mathbf{M}_{\mathbf{S}}$ satisfies the conditions of a pseudo-model for $\mathcal{L}_{KB_{i,A}}$ (\Cref{kb-pseudo}). Let $i\in A$. 
        \begin{itemize}
            \item Stalnaker's conditions. First, $\kn_i^{\mathbf{S}}$ is a pre-order. For reflexivity, observe that $s \kn_i^{\mathbf{S}} s$ follows from the fact that $s\leq_i s$ (by reflexivity of $\leq_i$ in \Cref{pseudo-model}). For transitivity, suppose $s\kn_i^{\mathbf{S}} w \kn_i^{\mathbf{S}} v$. Applying the definition of $\kn_i^{\mathbf{S}}$, we have one of the following four cases:
            \begin{enumerate}[(a)]
                \item $s\leq_i w \leq_i v$. Then $s\leq_i v$ (by transitivity of $\leq_i$, \Cref{pseudo-model}), so $s\kn_i^{\mathbf{S}} v$.  
                \item $s\leq_i w \rightarrow_i^{\mathbf{S}} v$. So $w\sim_i v$ with $v \in Max_{\leq_i}(S)$. By the inclusion condition on $\mathbf{S}$, we have $s\sim_i w$, so with $\sim_i$ being an equivalence relation $s\sim_i v$. But then $s\rightarrow_i^{\mathbf{S}} v$, so $s\kn_i^{\mathbf{S}} v$.  
                \item $s\rightarrow_i^{\mathbf{S}} w \rightarrow_i^{\mathbf{S}} v$. Then $s\sim_i w$ and $w\sim_i v$, with $v \in Max_{\leq_i}(S)$, so $s\rightarrow_i^{\mathbf{S}} v$, and therefore, $s\kn_i^{\mathbf{S}} v$.
                \item $s\rightarrow_i^{\mathbf{S}} w \leq_i v$. Then $s\sim_i w$ with $w\in Max_{\leq_i}(S)$, so with $w \leq_i v$, it must be that $v\in Max_{\leq_i}(S)$. With $w\leq_i v$, we have by the inclusion condition on $\mathbf{S}$ that $w\sim_i v$, so by $\sim_i$ being an equivalence relation, we have $s\sim_i v$. So $s\kn_i^{\mathbf{S}} v$.  
            \end{enumerate}
            Next, we show that $\rightarrow_i^{\mathbf{S}}$ is serial, transitive, and Euclidean. For seriality, let $s\in S$. Note that $\leq_i$ is max-dense. Thus, there is $t\in Max_{\leq_i}(S)$ such that $s\leq_i t$. By the inclusion condition on $\mathbf{S}$, we obtain $s\sim_i t$ which, by the fact that $t\in Max_{\leq_i}(S)$, gives us that $s\rightarrow_i^{\mathbf{S}} t$. For transitivity, see item (c) above. For Euclideanity, let $s\rightarrow_i^{\mathbf{S}} w$ and $s\rightarrow_i^{\mathbf{S}} v$. Then $s\sim_i w$ an $s\sim_i v$, with both $w\in Max_{\leq_i}(S)$ and $v\in Max_{\leq_i}(S)$. As $
            \sim_i$ is an equivalence relation, we have $w\sim_i v$, giving us $w \rightarrow_i^{\mathbf{S}} v$. 
            
            Inclusion. Suppose $s\rightarrow_i^{\mathbf{S}}w$. Then, by definition, $s\kn_i^{\mathbf{S}} w$.

            Strong transitivity. See case (b) above.

            Strong Euclideanity. Suppose $s\kn_i^{\mathbf{S}} w$ and $s\rightarrow_i^{\mathbf{S}} v$. We claim that $w\rightarrow_i^{\mathbf{S}} v$. Given the assumption that $s\kn_i^{\mathbf{S}} w$, there are two possible cases: either (1) $s\leq_i w$, or (2) $s\rightarrow_i^{\mathbf{S}} w$. In either case, we have $s\sim_i w$: in the case of (1), it follows from inclusion on $\mathbf{S}$; in the case of (2), it follows from the definition of $\rightarrow_i^{\mathbf{S}}$. Since $\sim_i$ is an equivalence relation, we have $w\sim_i v$. By definition of $\rightarrow_i^{\mathbf{S}}$, we get $w\rightarrow_i^{\mathbf{S}} v$, as required.

            Full belief. Suppose $s\rightarrow_i^{\mathbf{S}} w$ and $w\kn_i^{\mathbf{S}} v$. We claim that $s\rightarrow_i^{\mathbf{S}} v$. By definition of $\rightarrow_i^{\mathbf{S}}$, we have $s\sim_i w$ with $w\in Max_{\leq_i}(S)$. By $w\kn_i^{\mathbf{S}} v$, it must be that $v\in Max_{\leq_i}(S)$. We have $w\sim_i v$ by definition of $\rightarrow_i^{\mathbf{S}}$, which gives us that $w\rightarrow_i^{\mathbf{S}} v$, as required. 

            \item WM-condition. We show that $\kn_A^{\mathbf{S}}\subseteq (\kn_i^{\mathbf{S}}\cup \rightarrow_A^{\mathbf{S}})$. Let $s\kn_A^{\mathbf{S}}w$; we show that $s (\kn_i^{\mathbf{S}}\cup \rightarrow_A^{\mathbf{S}})w$. The assumption gives us two possible cases: either (1) $s\leq_A w$, or (2) $s\rightarrow_A^{\mathbf{S}}w$. In the case of (1), anti-monotonicity of $\leq$ gives us $s\leq_i w$ so, by definition, $s \kn_i^{\mathbf{S}} w$, and therefore, $s (\kn_i^{\mathbf{S}}\cup \rightarrow_A^{\mathbf{S}})w$. In the case of (2), the claim is immediate from $s\rightarrow_A^{\mathbf{S}}w$. 
            
            \item Super-introspection condition. Suppose $s\kn_A^{\mathbf{S}} t$. We show that $s\rightarrow_i^{\mathbf{S}} u$ if and only if $t\rightarrow_i^{\mathbf{S}} u$. We show one direction; the converse direction is symmetrical. Suppose $s\rightarrow_i^{\mathbf{S}} u$. Then $s\sim_i u$ with $u\in Max_{\leq_i}(S)$. By the inclusion condition on $\mathbf{S}$, the assumption $s\kn_A^{\mathbf{S}} t$ gives us that $s\sim_i t$. Since $\sim_i$ is an equivalence relation, $t\sim_i u$. With $u\in Max_{\leq_i}(S)$, we have $t \rightarrow_i^{\mathbf{S}} u$, as required. 
            
            \item CBD-condition. Let $s\in S$. We show that there exists $w\in S$ such that $s\ (\rightarrow_A^{\mathbf{S}}\cap \kn_i^{\mathbf{S}})\ w$ (for our fixed, arbitrary $i\in A$). We use max-density of $\leq_A$: there exists $w\in Max_{\leq_A}(S)$ such that $s\leq_A w$. By the inclusion condition on $\mathbf{S}$, we have $s\sim_A w$. Thus, $s\rightarrow_A^{\mathbf{S}} w$. To see that we also have $s \kn_i^{\mathbf{S}} w$, note that with $s\leq_A w$, anti-monotonicity of $\leq$ gives us that $s\leq_i w$. Thereby, $s \kn_i^{\mathbf{S}} w$, as required.
        \end{itemize}
        
        For the proofs of (2), let $s\in S$ and let $\phi\in \mathcal{L}_{KB_{i,A}}(\prop)$ and fix an agent $i\in A$.
        \begin{enumerate}
        \item[(a)] $(\mathbf{S},x)\vDash B_i\phi \text{ iff } \text{ for all } t\in S \text{ s.t. } s\rightarrow_i^{\mathbf{S}} t: (\mathbf{M}_{\mathbf{S}}, t) \vDash \phi$. 
        
        Unfolding the semantic definition of $B_i$ on pseudo-models for $\mathcal{L}_{{\B[\A]}_{i,A}}$ (\Cref{sem-boxall-pseudo}) in terms of the evidence relations, we obtain the following interpretation for $B_i$, which we will use:
        \[ 
        \begin{array}{lll}
            (\mathbf{S},s)\vDash B_i\phi &\text{ iff } &\A t\in S: \\ & &  s\sim_i t  \Rightarrow (\E u\in S (t\leq_i u \text{ and } \A v\in S: u\leq_i v\Rightarrow (\mathbf{S},v)\vDash \phi)).
        \end{array}
        \]
        For the left-to-right direction, suppose $(\mathbf{S},s)\vDash B_i\phi$. 
        We need to show that for all $t\in S$ with $s\rightarrow_i^{\mathbf{S}} t$, we have $(\mathbf{M}_{\mathbf{S}}, t) \vDash \phi$. So let $t\in S$ such that $s \rightarrow_i^{\mathbf{S}} t$. By definition of $\rightarrow_i^{\mathbf{S}}$, we have $s\sim_i t$ with $t\in Max_{\leq_i}(S)$. So by the unfolding of $B_i\phi$, there exists $u\in S$ such that $t\leq_i u $ and for all $v\in S$, $u\leq_i v$ implies $(\mathbf{S},v)\vDash \phi$. Furthermore, by $\leq_i$-maximality of $t$, we have that $t \leq_i u$ implies $u\leq_i t$. So by $u\leq_i t$, we have $(\mathbf{S},t)\vDash \phi$, i.e., $(\mathbf{M}_{\mathbf{S}}, t)\vDash \phi$, as required. 
        
        For the converse direction, suppose that $s\rightarrow_i^{\mathbf{S}} t$ implies that $(\mathbf{M}_{\mathbf{S}}, t) \vDash \phi$ (i.e., $(\mathbf{S}, t) \vDash \phi$), for all $t\in S$. Let $w\in S$ with $s\sim_i w$. We want to find $u\in S$ such that $w\leq_i u$ and for all $v\in S$, $u\leq_i v$ implies $(\mathbf{S},v)\vDash \phi$. 

        By max-density of $\leq_i$, there exists $u\in Max_{\leq_i}(S)$ such that $w\leq_i u$. By the inclusion condition on $\mathbf{S}$, we have $w \rightarrow_i^{\mathbf{S}} u$. If we prove that for all $v\in S$, $u\leq_i v$ implies $(\mathbf{S}, v) \vDash \phi$, then we are done. So let $v\in S$ and suppose $u\leq_i v$. As a property of $\leq_i$-maximality, it must be that $v\in Max_{\leq_i}(S)$. Now we claim that $s\rightarrow_i^{\mathbf{S}} v$: we have a chain $s\sim_i w\leq_i u\leq_i v$, so by the inclusion condition on $\mathbf{S}$ and by the properties of $\sim_i$, we have $s\sim_i v$. Therefore,  $s\rightarrow_i^{\mathbf{S}} v$. But then, by assumption, $(\mathbf{S},v)\vDash \phi$, as required. 
    
        \item[(b)] $(\mathbf{S},s)\vDash K_i\phi \text{ iff } \text{ for all } t\in S \text{ s.t. } s \kn_i^{\mathbf{S}} t: (\mathbf{M}_{\mathbf{S}}, t)\vDash \phi$.
        
        For the left-to-right direction, suppose $(\mathbf{S},s)\vDash K_i\phi$. Unfolding the interpretation of $K_i$, we have that $(\mathbf{S},s)\vDash \B_i\phi\wedge B_i\phi$. To prove the claim, let $t\in S$ and suppose that $s\kn_i^{\mathbf{S}}t$. We have two possible cases: either (1) $s\leq_i t$ or (2) $s\rightarrow_i^{\mathbf{S}} t$. In either case we have $(\mathbf{S},t)\vDash \phi$:
        \begin{enumerate}[(1)]
            \item If $s\leq_i t$, then by $(\mathbf{S},s)\vDash \B_i\phi$, we get that $(\mathbf{S},t)\vDash \phi$, i.e., $(\mathbf{M}_{\mathbf{S}}, t)\vDash \phi$.
            \item If $s\rightarrow_i^{\mathbf{S}} t$, then with $(\mathbf{S},s)\vDash B_i\phi$, (a) gives us that $(\mathbf{M}_{\mathbf{S}}, t)\vDash \phi$.
        \end{enumerate}
        
        For the converse direction, suppose that for all $t\in S$, $s \kn_i^{\mathbf{S}} t$ implies that $(\mathbf{M}_{\mathbf{S}}, t)\vDash \phi$. It suffices to show that $(\mathbf{S},s)\vDash \B_i\phi\wedge B_i\phi$. To see that $(\mathbf{S},s)\vDash \B_i\phi$, let $t\in S$ such that $s\leq_i t$. We show that $(\mathbf{S},t)\vDash \phi$. By definition of $\kn_i^{\mathbf{S}}$, $s\leq_i t$ gives us that $s\kn_i^{\mathbf{S}} t$. By assumption, this implies that $(\mathbf{M}_{\mathbf{S}}, t)\vDash \phi$, as required. 
        
        Next, to see that $(\mathbf{S},s)\vDash B_i\phi$, recall from (a) that it suffices to show that for all $t\in S$, $s\rightarrow_i^{\mathbf{S}} t$ implies $(\mathbf{M}_{\mathbf{S}}, t) \vDash \phi$. So let $t\in S$ and suppose $s\rightarrow_i^{\mathbf{S}} t$. Then, by definition of $\kn_i^{\mathbf{S}}$, we have $s \kn_i^{\mathbf{S}} t$. The claim then follows directly from our assumption that $s \kn_i^{\mathbf{S}} t$ implies that $(\mathbf{M}_{\mathbf{S}}, t)\vDash \phi$.  
    \end{enumerate}
\end{proof}

Thus, assuming a pseudo-model $\mathbf{S}$ for $\mathcal{L}_{{\B[\A]}_{i,A}}$ with max-dense $\leq$ relations, we can recover the relations corresponding to knowledge and belief to obtain a pseudo-model $\mathbf{M}_{\mathbf{S}}$ for $\mathcal{L}_{KB_{i,A}}$. The resulting model agrees with $\mathbf{S}$ on the interpretation of knowledge and belief. 

\textbf{From pseudo-models for \texorpdfstring{$\mathcal{L}_{KB_{i,A}}$}{KBiA} to pseudo-models for \texorpdfstring{$\mathcal{L}_{{\B[\A]}_{i,A}}$}{BoxForalliA}}\label{sec:box-pseudo-to-model}. For the converse direction, the representation theorem (\Cref{boxforall-on-kb_OLD}) constructs evidence relations on pseudo-models for $\mathcal{L}_{KB_{i,A}}$. It uses the following lemma, which gives us equivalent definitions of the $\sim$ relations that we will define on $\mathbf{M}$. 

\begin{lemma}\label{equivalences-kb-pseudo_OLD}
    Let $s,w,t\in S$ and let $\alpha\in \{A\}\cup A$. Let $\mathbf{M}:= {(S, \kn_i, \rightarrow_i, \kn_A, \rightarrow_A,\llbracket\cdot\rrbracket)}_{i\in A}$ be a pseudo-model for $\mathcal{L}_{KB_{i,A}}$. Then the following are equivalent on $\mathbf{M}$:
    \begin{align*}
        \E t (s\kn_\alpha t \text{ and } w\kn_\alpha t)\quad &\text{ iff } \quad\E t (s\rightarrow_\alpha t \text{ and } w\rightarrow_\alpha t)\\
        &\text{ iff } \quad\A t (s\rightarrow_\alpha t \text{ iff } w\rightarrow_\alpha t).
    \end{align*}
\end{lemma}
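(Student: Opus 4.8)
The plan is to establish the cycle of implications $(1)\Rightarrow(2)\Rightarrow(3)\Rightarrow(1)$, where $(1)$, $(2)$, $(3)$ denote the three conditions displayed in the statement, read in order. Since \Cref{kb-pseudo} imposes the belief/knowledge conditions for the full group $A$ in exactly the same form as for individual agents, I would write the argument uniformly for $\alpha\in\{A\}\cup A$. The only facts needed are seriality, transitivity and Euclideanity of $\rightarrow_\alpha$, together with the Inclusion condition $\rightarrow_\alpha\subseteq\kn_\alpha$ and Strong Transitivity; none of the other clauses of \Cref{kb-pseudo} (nor \Cref{nice-belief_OLD}) is required.

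For $(1)\Rightarrow(2)$: assume $s\kn_\alpha t$ and $w\kn_\alpha t$ for some $t$. By seriality of $\rightarrow_\alpha$ pick $u$ with $t\rightarrow_\alpha u$. Then Strong Transitivity applied to $s\kn_\alpha t$ and $t\rightarrow_\alpha u$ yields $s\rightarrow_\alpha u$, and likewise $w\kn_\alpha t$ and $t\rightarrow_\alpha u$ yield $w\rightarrow_\alpha u$; so $u$ witnesses $(2)$.

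For $(2)\Rightarrow(3)$: assume $s\rightarrow_\alpha t$ and $w\rightarrow_\alpha t$ for some $t$. Fix an arbitrary $u$; by the symmetry of the hypothesis it suffices to show $s\rightarrow_\alpha u$ implies $w\rightarrow_\alpha u$. From $s\rightarrow_\alpha t$ and $s\rightarrow_\alpha u$, Euclideanity of $\rightarrow_\alpha$ gives $t\rightarrow_\alpha u$; then transitivity applied to $w\rightarrow_\alpha t$ and $t\rightarrow_\alpha u$ gives $w\rightarrow_\alpha u$. For $(3)\Rightarrow(1)$: by seriality of $\rightarrow_\alpha$ choose $u$ with $s\rightarrow_\alpha u$; condition $(3)$ then forces $w\rightarrow_\alpha u$, and Inclusion turns both into $s\kn_\alpha u$ and $w\kn_\alpha u$, so $u$ witnesses $(1)$.

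I do not expect a genuine obstacle here: the lemma is a purely relational fact about \textsf{KD45}-with-Strong-Transitivity frames, and each implication is a two-line diagram chase. The only points requiring a little care are to route through a \emph{successor} of the common $\kn_\alpha$-point rather than the point itself in $(1)\Rightarrow(2)$, and to phrase the argument once in a way that applies verbatim to both $\alpha=i$ and $\alpha=A$, which is legitimate because the relevant conditions are stipulated symmetrically in \Cref{kb-pseudo}.
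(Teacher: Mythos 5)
Your proof is correct and follows essentially the same route as the paper's: the same cycle $(1)\Rightarrow(2)\Rightarrow(3)\Rightarrow(1)$, with identical arguments for the last two implications. The only (harmless, arguably cleaner) difference is in $(1)\Rightarrow(2)$: you take a serial $\rightarrow_\alpha$-successor of the common $\kn_\alpha$-point $t$ and apply Strong Transitivity twice, whereas the paper takes a serial successor of $s$ and combines Strong Euclideanity with Strong Transitivity, so your variant even dispenses with Strong Euclideanity.
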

\begin{proof}
We prove the following chain of implications: given (1) $\E t (s\kn_\alpha t, w\kn_\alpha t)$, (2) $\E t (s\rightarrow_\alpha t, w\rightarrow_\alpha t)$, and (3) $\A t (s\rightarrow_\alpha t \text{ iff } w\rightarrow_\alpha t)$, we show that (1) $\Rightarrow$ (2) $\Rightarrow$ (3) $\Rightarrow$ (1).

(1) implies (2). Suppose (1). By seriality of $\rightarrow_i$, there exists $t_1\in S$ such that $s\rightarrow_i t_1$. By the strong Euclideanity condition on pseudo-models for $\mathcal{L}_{KB_{i,A}}$, we have $t\rightarrow_i t_1$. Now, applying the strong transitivity condition to $w\kn_i t\rightarrow_i t_1$, we have $w\rightarrow_i t_1$. Thus, there is $t_1$ such that $s\rightarrow_i t_1, w\rightarrow_i t_1$. 
        
(2) implies (3). Suppose (2). Let $u\in S$ be arbitrary and assume, without loss of generality, that $s\rightarrow_i t$. Then $w\rightarrow_i t$ follows immediately from Euclideanity and transitivity of $\rightarrow_i$. 
        
(3) implies (1). Suppose (3). Then, by seriality of $\rightarrow_i$, there is $t$ such that $s\rightarrow_i t$ if and only if $ w\rightarrow_i t$. The claim then follows directly from the inclusion condition on pseudo-models for $\mathcal{L}_{KB_{i,A}}$.
\end{proof}

It is important to note that, in contrast to the relations in the construction of the map in the converse direction (\Cref{kb-on-boxforall_OLD}), the evidence relations defined in \Cref{boxforall-on-kb_OLD} are \emph{not} uniquely determined.\footnote{In particular, an alternative, weaker condition for the relation $\leq_A^{\mathbf{M}}$ on a pseudo-model $\mathbf{M}$ (for $\mathcal{L}_{KB_{i,A}}$) replaces condition (2) in \Cref{boxforall-on-kb_OLD} with the following condition: $(2') \text{ if } s\in Max_{\kn_A}(S), \text{ then } w\ (\bigcap_{i\in A}\kn_i)\ s$. The resulting alternative definition of relations also results in a max-dense pseudo-model for $\mathcal{L}_{{\B[\A]}_{i,A}}$. Nevertheless, we chose condition (2), as it simplifies the proof of \Cref{boxforall-on-kb_OLD}.}

\begin{theorem}[Representing Pseudo-Models for $\mathcal{L}_{KB_{i,A}}$ as pseudo-models for $\mathcal{L}_{{\B[\A]}_{i,A}}$.]\label{boxforall-on-kb_OLD}
    Let $\mathbf{M}:= {(S, \kn_i, \rightarrow_i, \kn_A, \rightarrow_A,\llbracket\cdot\rrbracket)}_{i\in A}$ be a pseudo-model for $\mathcal{L}_{KB_{i,A}}$. We introduce the following relations on $\mathbf{M}$, for all $i\in A$:
    \begin{align*}
        s\sim_i^{\mathbf{M}} w &\text{ iff} \quad \E t (s\kn_i t \text{ and } w\kn_i t) \\
        s\sim_A^{\mathbf{M}} w &\text{ iff}\quad \E t (s\kn_A t \text{ and } w\kn_A t) \\
        s\leq_i^{\mathbf{M}} w &\text{ iff} \quad s\kn_i w \\
        s\leq_A^{\mathbf{M}} w  &\text{ iff} \quad
        \begin{cases}
            (1) & s\ (\kn_A\cap \bigcap_{i\in A} \kn_i)\ w;   \\
            (2) & \text{ if } s\in Max_{\kn_A}(S), \text{ then } w=s.
        \end{cases}
     \end{align*}   
    Then we have that for all $\alpha\in \{A\}\cup A$:
    \begin{enumerate}[a.]
        \item $\leq_\alpha^{\mathbf{M}}\subseteq \kn_\alpha\subseteq\ \sim_\alpha^{\mathbf{M}}$;
        \item $\leq_\alpha^{\mathbf{M}}$ is a preorder and $\sim_\alpha^{\mathbf{M}}$ is an equivalence relation; 
        \item $Max_{\leq_\alpha^{\mathbf{M}}}(S)=Max_{\kn_\alpha}(S)$;
        \item $s\rightarrow_\alpha w$ if and only if $s\sim_\alpha^{\mathbf{M}} w \in Max_{\leq_\alpha^{\mathbf{M}}}(S)$;
        \item $s\kn_\alpha w$ if and only if ($s\leq_\alpha^{\mathbf{M}} w$ or $s\rightarrow_\alpha w$).
    \end{enumerate}
    Furthermore, the structure $\mathbf{S}_{\mathbf{M}}:={(S, \leq_i^{\mathbf{M}}, \sim_i^{\mathbf{M}},\leq_A^{\mathbf{M}},\sim_A^{\mathbf{M}},\llbracket\cdot\rrbracket)}_{i \in A}$ satisfies the following conditions:
    \begin{enumerate}[(I)]
        \item $\mathbf{S}_{\mathbf{M}}$ is a pseudo-model for $\mathcal{L}_{{\B[\A]}_{i,A}}$, such that for each $\alpha\in \{A\}\cup A$, the relation $\leq_\alpha$ is max-dense. 
        \item $\mathbf{S}_{\mathbf{M}}$ and $\mathbf{M}$ agree on the interpretation of the modalities $K_i,K_A,B_i$ and $B_A$. 
    \end{enumerate}
\end{theorem}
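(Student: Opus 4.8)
The plan is to establish the five relational properties (a)--(e) first, and then read off (I) and (II) from them together with \Cref{kb-on-boxforall_OLD}. The individual case $\alpha=i$ of (a)--(e) is almost immediate, since $\leq_i^{\mathbf{M}}=\kn_i$ by definition: (a) reduces to reflexivity of $\kn_i$ plus the observation that $s\kn_i w$ gives $s\sim_i^{\mathbf{M}} w$ (witnessed by $t=w$); (b) is Stalnaker's $\mathsf{S4}$ condition on $\kn_i$; (c) is trivial; (d) is \Cref{nice-belief_OLD} combined with the fact that when $w\in Max_{\kn_i}(S)$ one has $s\sim_i^{\mathbf{M}} w$ iff $s\kn_i w$ (the nontrivial direction turning a common $\kn_i$-successor $t$ of $s$ and $w$ into $s\kn_i t\kn_i w$ via transitivity and maximality); and (e) follows from the Inclusion condition $\rightarrow_i\subseteq\kn_i$. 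That each $\sim_\alpha^{\mathbf{M}}$ is an equivalence relation is handled uniformly by \Cref{equivalences-kb-pseudo_OLD}, which identifies $\sim_\alpha^{\mathbf{M}}$ with the relation ``$s$ and $w$ have the same set of $\rightarrow_\alpha$-successors'', manifestly an equivalence.

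The real work is the group case $\alpha=A$, where the key device is clause $(2)$ in the definition of $\leq_A^{\mathbf{M}}$: it forces $\leq_A^{\mathbf{M}}$ to act as the identity on every $\kn_A$-maximal cluster of $\mathbf{M}$. With this, transitivity of $\leq_A^{\mathbf{M}}$ in (b) is a short case split (if the source is $\kn_A$-maximal, clause $(2)$ propagates the forced equalities along the chain). For (c), i.e.\ $Max_{\leq_A^{\mathbf{M}}}(S)=Max_{\kn_A}(S)$: the inclusion $\supseteq$ is immediate from clause $(2)$; for $\subseteq$ I use that $\kn_A$ is max-dense, which follows from the CBD-condition of \Cref{kb-pseudo} and \Cref{nice-belief_OLD} (for each $s$, CBD yields $w\in Max_{\kn_A}(S)$ with $s\,(\kn_A\cap\bigcap_{i}\kn_i)\,w$); then, if $s\in Max_{\leq_A^{\mathbf{M}}}(S)$ but $s\notin Max_{\kn_A}(S)$, clause $(2)$ is vacuous for the pair $(s,w)$, so $s\leq_A^{\mathbf{M}} w$, whence $w\leq_A^{\mathbf{M}} s$ by maximality, whence $s=w\in Max_{\kn_A}(S)$ by clause $(2)$ applied to $(w,s)$ --- a contradiction. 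Property (e) for $A$ uses the WM-condition $\kn_A\subseteq\kn_i\cup\rightarrow_A$: given $s\kn_A w$, either $s\rightarrow_A w$ and we are done, or $s\kn_i w$ for all $i$ so that clause $(1)$ of $\leq_A^{\mathbf{M}}$ holds; then either $s\notin Max_{\kn_A}(S)$, making clause $(2)$ vacuous so $s\leq_A^{\mathbf{M}} w$, or $s\in Max_{\kn_A}(S)$, in which case $s\kn_A w$ forces $w\in Max_{\kn_A}(S)$ and hence $s\rightarrow_A w$ by \Cref{nice-belief_OLD}. Finally (d) for $A$ drops out of (c), (a), and \Cref{nice-belief_OLD}, exactly as in the individual case.

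For (I) I check the clauses of \Cref{pseudo-model} (restricted to $\mathcal{L}_{{\B[\A]}_{i,A}}$) for $\mathbf{S}_{\mathbf{M}}$: $\leq_\alpha^{\mathbf{M}}$ is a preorder and $\sim_\alpha^{\mathbf{M}}$ an equivalence by (b); Inclusion $\leq_\alpha^{\mathbf{M}}\subseteq\sim_\alpha^{\mathbf{M}}$ by (a); and Anti-Monotonicity in two parts --- $\leq_A^{\mathbf{M}}\subseteq\leq_i^{\mathbf{M}}$ is immediate from clause $(1)$, while $\sim_A^{\mathbf{M}}\subseteq\sim_i^{\mathbf{M}}$ is the only genuinely nontrivial inclusion: it follows from the Super-Introspection condition ($\kn_A$-related states have identical $\rightarrow_i$-images) together with the $\rightarrow$-image characterization of $\sim^{\mathbf{M}}$ from \Cref{equivalences-kb-pseudo_OLD}. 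Max-density of $\leq_i^{\mathbf{M}}=\kn_i$ is seriality of $\rightarrow_i$ plus \Cref{nice-belief_OLD}; max-density of $\leq_A^{\mathbf{M}}$ is again the CBD argument, splitting on whether $s$ is $\kn_A$-maximal (if so, $s$ itself is the required maximal $\leq_A^{\mathbf{M}}$-successor, by (c) and reflexivity; if not, CBD supplies one and clause $(2)$ is vacuous). For (II), property (I) lets us apply \Cref{kb-on-boxforall_OLD} to $\mathbf{S}_{\mathbf{M}}$, producing a pseudo-model $\mathbf{M}_{\mathbf{S}_{\mathbf{M}}}$ for $\mathcal{L}_{KB_{i,A}}$ with knowledge and belief relations $\kn_\alpha^{\mathbf{S}_{\mathbf{M}}},\rightarrow_\alpha^{\mathbf{S}_{\mathbf{M}}}$ given by the formulas of that proposition; by (d) these coincide with the original $\rightarrow_\alpha$, by (e) with the original $\kn_\alpha$, and since the carrier and valuation are untouched, $\mathbf{M}_{\mathbf{S}_{\mathbf{M}}}=\mathbf{M}$ on the nose. \Cref{kb-on-boxforall_OLD}(2) then gives that the $\mathcal{L}_{{\B[\A]}_{i,A}}$-interpretation of $B_\alpha\phi$ and $K_\alpha\phi$ on $\mathbf{S}_{\mathbf{M}}$ agrees, at every state, with their $\mathcal{L}_{KB_{i,A}}$-pseudo-model interpretation on $\mathbf{M}_{\mathbf{S}_{\mathbf{M}}}=\mathbf{M}$, and a routine induction on formula structure (atoms and Booleans via the shared valuation, modalities via the preceding sentence) extends this to all of $\mathcal{L}_{KB_{i,A}}$, which is (II). The main obstacle is the bookkeeping around $\kn_A$-maximal clusters: proving $Max_{\leq_A^{\mathbf{M}}}(S)=Max_{\kn_A}(S)$ and the max-density of $\leq_A^{\mathbf{M}}$ each require carefully orchestrating the CBD- and WM-conditions with the case split induced by clause $(2)$, and the step $\sim_A^{\mathbf{M}}\subseteq\sim_i^{\mathbf{M}}$ is the one place where Super-Introspection is indispensable rather than decorative.
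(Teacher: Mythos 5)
Your proposal is correct and follows essentially the same route as the paper: establish (a)--(e) first (with the individual case trivialized by $\leq_i^{\mathbf{M}}=\kn_i$ and the group case hinging on clause $(2)$, the CBD- and WM-conditions, and \Cref{nice-belief_OLD}), then verify the pseudo-model clauses and max-density for (I), and obtain (II) by composing with \Cref{kb-on-boxforall_OLD} via the coincidence of the recovered relations with the originals given by (d) and (e). Your local shortcuts --- witnessing $\kn_\alpha\subseteq\sim_\alpha^{\mathbf{M}}$ by $t=w$, deriving that $\sim_\alpha^{\mathbf{M}}$ is an equivalence from the third clause of \Cref{equivalences-kb-pseudo_OLD} rather than from weak directedness, and closing the $Max_{\leq_A^{\mathbf{M}}}\subseteq Max_{\kn_A}$ argument by applying clause $(2)$ to $(w,s)$ instead of invoking strong transitivity --- are all sound and, if anything, slightly cleaner than the paper's.
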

\begin{proof}
    We first prove statements (a)-{(e)} for $\mathbf{M}$.
    \begin{enumerate}[(a)]
        \item Let $i\in A$. We prove $\leq_i^{\mathbf{M}}\subseteq \kn_i\subseteq\ \sim_i^{\mathbf{M}}$. For $\leq_i\subseteq \kn_i$, let $s\leq_i^{\mathbf{M}} w$. Then $s\kn_i w$ by definition. Now for $\kn_i\subseteq\ \sim_i^{\mathbf{M}}$, let $s\kn_i w$. For $s\sim_i^{\mathbf{M}} w$, by \Cref{equivalences-kb-pseudo_OLD} it suffices to show that there exists $t\in S$ such that $s\kn_i t, w\kn_i t$. By seriality of $\rightarrow_i$, there is $t\in S$ such that $w\rightarrow_i t$. By the inclusion condition on pseudo-models for $\mathcal{L}_{KB_{i,A}}$ we obtain $w\kn_i t$, so that we have $s\kn_i w \kn_i t$. By transitivity of $\kn_i$ we get $s\kn_i t$. But then we have $t\in S$ such that $w\kn_i t$ and $s\kn_i t$, as required. 

        For the full group, we prove $\leq_A^{\mathbf{M}}\subseteq \kn_A\subseteq\ \sim_A^{\mathbf{M}}$. For $\leq_A^{\mathbf{M}}\subseteq \kn_A$, let $s\leq_A^{\mathbf{M}} w$. Then we obtain $s\kn_A w$ from (1) of the definition of $\leq_A^{\mathbf{M}}$. Next, for $\kn_A\subseteq\ \sim_A^{\mathbf{M}}$, let $s\kn_A w$. By \Cref{equivalences-kb-pseudo_OLD}, it suffices to show that there exists $t\in S$ such that $s\rightarrow_A t, w\rightarrow_A t$. By seriality of $\rightarrow_A$, there is $t\in S$ such that $w\rightarrow_A t$. Having $s\rightarrow_A t$ and $s\kn_A w$, strong Euclideanity on pseudo-models for $\mathcal{L}_{KB_{i,A}}$ gives us that $w\rightarrow_A t$, which concludes our proof.
        
        \item  Let $i\in A$. We show that $\leq_i^{\mathbf{M}}$ is a preorder. Since $\kn_i$ is a preorder, we have that $\leq_i^{\mathbf{M}}$ is a pre-order by definition. 
        
        For the full group, we show that $\leq_A^{\mathbf{M}}$ is a preorder. For reflexivity of $\leq_A^{\mathbf{M}}$, we show that conditions (1) and (2) of the definition of $\leq_A^{\mathbf{M}}$ hold for $s$ with respect to $s$ itself. For (1), observe that we have $s(\kn_A\cap\bigcap_{i\in A}\kn_i) s$: since $\kn_A$ and $\kn_i$ for $i\in A$ are preorders, we have $s\kn_A s$ and $s\kn_i s$ for all $i\in A$. (2) follows immediately from the fact that $s=s$. For transitivity of $\leq_A^{\mathbf{M}}$, suppose that $s\leq_A^{\mathbf{M}} w\leq_A^{\mathbf{M}} t$. We show that $s\leq_A^{\mathbf{M}} t$. Condition (1) follows directly from $\kn_A$ and all $\kn_i$ being transitive: we have $s\kn_A w\kn_A t$ and $s\kn_i w\kn_i t$ by assumption, which gives us $s\kn_A t$ and $s\kn_i t$ for $i\in A$, by transitivity of the respective relations. Thus, $s(\kn_A\cap\bigcap_{i\in A}\kn_i) t$. For condition (2), suppose $s\in Max_{\kn_A}(S)$. Since we assumed $s\leq_A^{\mathbf{M}} w$, we know that $w=s$. But then $w=s\in Max_{\kn_A}(S)$. With $w\leq_A^{\mathbf{M}} t$, we obtain that $t=s$, as required. So $s\leq_A^{\mathbf{M}} t$, giving us transitivity for $\leq_A^{\mathbf{M}}$. 
            
        Let $i\in A$. We show that $\sim_i^{\mathbf{M}}$ is an equivalence relation. For reflexivity, by definition of $\sim_i^{\mathbf{M}}$, to obtain $s\sim_i^{\mathbf{M}} s$ it suffices to show that there exists $t$ such that $s\kn_i t$. By reflexivity of $\kn_i$, we have $s\kn_i s$ and we are done. For transitivity of $\sim_i^{\mathbf{M}}$, let $s\sim_i^{\mathbf{M}} w\sim_i^{\mathbf{M}} t$. So by definition of $\sim_i^{\mathbf{M}}$, there are $u_1,u_2\in S$ such that $s\kn_i u_1$ and $w\kn_i u_1$, and $w\kn_i u_2$ and $t\kn_i u_2$. To show $s\sim_i^{\mathbf{M}} t$, we need $u_3$ such that $s\sim_i^{\mathbf{M}} u_3$ and $t \sim_i^{\mathbf{M}} u_3$. This state exists, because $\kn_i$ is weakly directed: by $w\kn_i u_1$, and $w\kn_i u_2$, there must be $u_3$ such that $u_1\kn_i u_3$ and $u_2\kn_i u_3$. Now we apply transitivity of $\kn_A$ to the fact that $s\kn_i u_1\kn_i u_3$ and $t\kn_i u_2\kn_i u_3$, and we are done. Finally, for symmetry of $\sim_i^{\mathbf{M}}$, let $s\sim_i^{\mathbf{M}} w$. By the symmetric nature of the definition of $\sim_i^{\mathbf{M}}$, we automatically obtain $w\sim_i^{\mathbf{M}} s$. 
       
        For the full group, $\sim_A^{\mathbf{M}}$ is an equivalence relation: the proofs for $\sim_A^{\mathbf{M}}$ are analogous to those for $\sim_i^{\mathbf{M}}$, replacing each occurrence of $\kn_i$ with $\kn_A$. 

        \item Let $i\in A$. Then $Max_{\leq_i^{\mathbf{M}}}(S)=Max_{\kn_i}(S)$ follows immediately from the definition of $\leq_i^{\mathbf{M}}$ on pseudo-models for $\mathcal{L}_{KB_{i,A}}$: we have $s\leq_i^{\mathbf{M}} t$ if and only if $s\kn_i t$, for all $s,t\in S$. 
            
       For the full group, we show that $Max_{\leq_A^{\mathbf{M}}}(S)=Max_{\kn_A}(S)$: for the left-to-right direction, let $s\in Max_{\leq_A^{\mathbf{M}}}(S)$ and suppose for contradiction that $s\notin Max_{\kn_A}(S)$. By the CBD-condition on pseudo-models for $\mathcal{L}_{KB_{i,A}}$, there exists $t\in S$ such that $s\rightarrow t$ and $s(\bigcap_{i\in A}\kn_i) t$. By the inclusion condition, $s\rightarrow t$ gives us that $s\kn t$. So with $s(\kn\cap\bigcap_{i\in A}\kn_i)t$, condition (1) for $s\leq_A^{\mathbf{M}} t$ is satisfied. Furthermore, since we assumed that $s\notin Max_{\kn_A}(S)$, condition (2) holds trivially. Thus, we have $s\leq_A^{\mathbf{M}} t$. Now, since we assumed that $s\in Max_{\leq_A^{\mathbf{M}}}(S)$, we get $t\leq_A^{\mathbf{M}} s$. By definition of $\leq_A^{\mathbf{M}}$, we have $t\kn s$. However, recall that we also have $s\rightarrow_A t$. Strong transitivity on pseudo-models for $\mathcal{L}_{KB_{i,A}}$ then gives us that $t\rightarrow_A t$, so by \Cref{nice-belief_OLD}, $t\in Max_{\kn_A}(S)$. With $t\kn_A s$, it must be that $s\in Max_{\kn_A}(S)$: we have reached a contradiction, and we conclude that $s\in Max_{\kn_A}(S)$ after all. 

       For the converse direction, let $s\in Max_{\kn_A}(S)$ and let $t\in S$ such that $s\leq_A^{\mathbf{M}} t$. We show that $t\leq_A^{\mathbf{M}} s$. Since $s\in Max_{\kn_A}(S)$, we get by $s\leq_A^{\mathbf{M}} t$, and by definition of $\leq_A^{\mathbf{M}}$, that $t=s$. Thus, it remains to show that $t\leq_A^{\mathbf{M}} t$. Condition (1) of the definition of $\leq_A^{\mathbf{M}}$ follows from reflexivity of $\kn_A$ and $\kn_i$ for all $i\in A$; condition (2) is trivially true, since $t=t$. Therefore, $t\leq_A^{\mathbf{M}} t=s$ and so, $s\in Max_{\leq_A^{\mathbf{M}}}(S)$.
        
        \item Let $i\in A$; we show that $s\rightarrow_i w$ if and only if $s\sim_i^{\mathbf{M}} w$ and $w \in Max_{\leq_i^{\mathbf{M}}}(S)$. For the left-to-right direction, let $s\rightarrow_i w$. For $s\sim_i^{\mathbf{M}} w$, we know by the assumption that $s\rightarrow_i w$ and by \Cref{nice-belief_OLD}, that $w\in Max_{\kn_i}(S)$. So (by reflexivity of $\kn_i$ and, again by \Cref{nice-belief_OLD}), we get $w\rightarrow_i w$. But then, with $s\rightarrow_i w$ and $w\rightarrow_i w$, we have $s\sim_i^{\mathbf{M}} w$. It remains to show that $w \in Max_{\leq_i^{\mathbf{M}}}(S)$: recall that $w\in Max_{\kn_i}(S)$, so by our proof of (c) we know that $w \in Max_{\leq_i^{\mathbf{M}}}(S)$. 

        For the converse direction, let $s\sim_i^{\mathbf{M}} w$ with $w \in Max_{\leq_i^{\mathbf{M}}}(S)$. For $s\rightarrow_i w$, by \Cref{nice-belief_OLD} it suffices to show that $s\kn_i w$ and $w \in Max_{\kn_i}(S)$. By our proof of (c) we know that $w \in Max_{\kn_i}(S)$, so it remains to show that $s\kn_i w$. By definition of $\sim_i^{\mathbf{M}}$, there exists $t\in S$ such that $s\kn_i t$ and $w\kn_i t$. But then we must have $t\kn_i w$. So transitivity of $\kn_i$ gives us $s\kn_i w$, as required.

        For the full group, the proof that $s\rightarrow_A w$ if and only if $s\sim_A^{\mathbf{M}} w \in Max_{\leq_A^{\mathbf{M}}}(S)$ is analogous to the proof for individual agents $i\in A$, by replacing each individual relation with its counterpart for the full group. 
        
        \item Let $i\in A$; we show that $s\kn_i w$ if and only if ($s\leq_i^{\mathbf{M}} w$ or $s\rightarrow_i w$). For the left-to-right direction, $s\kn_i w$ implies (by definition of $\leq_i^{\mathbf{M}}$) that $s\leq_i^{\mathbf{M}} w$, as required. For the converse direction, we make a case distinction. If $s\leq_i^{\mathbf{M}} w$, then we obtain $s\kn_i w$ by definition of $\leq_i^{\mathbf{M}}$; on the other hand, if $s\rightarrow_i w$, then we have by \Cref{nice-belief_OLD} that $s\kn_i w$, as required. 
        
        For the full group, we show that $s\kn_A w$ if and only if ($s\leq_A^{\mathbf{M}} w$ or $s\rightarrow_A w$): for the left-to-right direction, let $s\kn_A w$. By the WM-condition on pseudo-models for $\mathcal{L}_{KB_{i,A}}$, we have for all $i\in A$ that $s(\kn_i\ \cup \rightarrow_A)w$. If $s\rightarrow_A w$, we are done. So suppose not. Then $s\kn_i w$ for all $i\in A$. We claim that this implies $s\leq_A^{\mathbf{M}} w$. For condition (1) of the definition of $\leq_A^{\mathbf{M}}$ on pseudo-models for $\mathcal{L}_{KB_{i,A}}$, observe that we have $s\kn_A w$ by assumption, and $s(\bigcap_{i\in A}\kn_i) w$ by the WM-condition on for $\mathcal{L}_{KB_{i,A}}$ (and by the assumption that $s\not\rightarrow_A w$). Thus, $s(\kn_A \cap \bigcap_{i\in A}\kn_i)w$. For condition (2), suppose that $s\in Max_{\kn_A}(S)$. Then by $s\kn_A w$ we obtain that $w\in Max_{\kn_A}(S)$. With $s\kn_A w$, this means that $s\rightarrow_A w$ (\Cref{nice-belief_OLD}), which we assumed was not the case: a contradiction. Therefore, $s$ cannot be contained in $Max_{\kn_A}(S)$ and condition (2) is vacuously true. We conclude that $s\leq_A^{\mathbf{M}} w$, as required. 
    \end{enumerate}
    It remains to prove statements (I) and (II). 
    \begin{enumerate}[(I)]
        \item First, to see that $\mathbf{S}_{\mathbf{M}}$ is a pseudo-model for $\mathcal{L}_{{\B[\A]}_{i,A}}$, we check the following conditions (\Cref{pseudo-model}) on the model $\mathbf{M}$.\footnote{The only difference between $\mathbf{S}_{\mathbf{M}}$ and $\mathbf{M}$ is that $\mathbf{S}_{\mathbf{M}}$ does not contain the relations for knowledge and belief from $\mathbf{M}$, and we need these relations to prove our claims.}
        
        Relations $\leq_A^{\mathbf{M}}$ and $\leq_i^{\mathbf{M}}$ for $i\in A$ are preorders, and $\sim_A^{\mathbf{M}}$ and $\sim_i^{\mathbf{M}}$ for $i\in A$ are equivalence relations. This is stated and proved in (b).  
        
        The relations $\leq^{\mathbf{M}}$ are anti-monotone: suppose that $s\leq_A^{\mathbf{M}} t$ and let $i\in A$. By definition of $\leq_A^{\mathbf{M}}$, $s\leq_A^{\mathbf{M}} t$ implies $s\kn_i t$. The definition of $\leq_i^{\mathbf{M}}$ then gives us $s\leq_i^{\mathbf{M}} t$, as required.  
            
        The relations $\sim^{\mathbf{M}}$ are anti-monotone: suppose that $s\sim_A^{\mathbf{M}} t$ and let $i\in A$. We show that $s\sim_i^{\mathbf{M}} t$. By definition of $\sim_i^{\mathbf{M}}$, there exists $u\in S$ such that $s\kn u$ and $t\kn u$. Now by seriality of $\rightarrow_i$, there exists $v_1\in S$ such that $s\rightarrow_i v_1$. With $s\kn u$ and $s\rightarrow_i v_1$, the super-introspection condition on pseudo-models for $\mathcal{L}_{KB_{i,A}}$ then gives us that $u\rightarrow_i v_1$. By the same condition, applied to $t\kn u$ and $u\rightarrow_i v_1$, we have $t\rightarrow_i v_1$. But then there exists $v_1$ such that both $s\rightarrow_i v_1$ and $u\rightarrow_i v_1$, which gives us $s\sim_i^{\mathbf{M}} u$ by the definition of $\sim_i^{\mathbf{M}}$. 
 
        Inclusion: we have $\leq_i^{\mathbf{M}}\subseteq\ \sim_i^{\mathbf{M}}$ for all $i\in A$, and $\leq_A^{\mathbf{M}}\subseteq\ \sim_A^{\mathbf{M}}$. This is stated and proved in (a).

        It remains to show that the $\leq^{\mathbf{M}}$ relations are max-dense. For individual agents, let $i\in A$ and let $s\in S$. We need to find some $t\in Max_{\leq_i^{\mathbf{M}}}(S)$, such that $s\leq_i^{\mathbf{M}} t$. By seriality of $\rightarrow_i$, there exists $t\in S$ such that $s\rightarrow_i t$. Now, by (d), $t\in  Max_{\leq_i^{\mathbf{M}}}(S)$. The inclusion condition on $\mathbf{M}$ gives us, by $s\rightarrow_i t$, that $s\kn_i t$. By definition of $\leq_i^{\mathbf{M}}$, we have $s\leq_i^{\mathbf{M}} t$, as required. 
        
        For the full group, we find some $t\in Max_{\leq_A^{\mathbf{M}}}(S)$, such that $s\leq_A^{\mathbf{M}} t$. We consider two cases:
        \begin{enumerate}
            \item $s\in Max_{\kn_A}(S)$. We take $t:=s$ and show that conditions (1) and (2) are satisfied for $s\leq_A^{\mathbf{M}} s$: condition (1) follows from reflexivity of $\kn_A$, as well as all $\kn_i$ relations. Condition (2) is trivially true, by the fact that $s=s$. Therefore, $s\leq_A^{\mathbf{M}} s$, as required.
            \item $s\notin Max_{\kn_A}(S)$. By the CBD-condition on $\mathbf{M}$, there exists $w\in S$ such that $s\rightarrow_A w$ and for all $i\in A$, $s\kn_i w$. We take $t:=w$. By (d), we have $w\in  Max_{\leq_A^{\mathbf{M}}}(S)$. It remains to show that conditions (1) and (2) are satisfied for $s\leq_A^{\mathbf{M}} w$. For (1), note that $s\rightarrow_A w$ implies $s\kn_A w$ (by the inclusion condition on $\mathbf{M}$), and recall that for all $i\in A$, $s\kn_i w$. Now, (2) is vacuously satisfied, as $s\notin Max_{\kn_A}(S)$. Therefore, $s\leq_A^{\mathbf{M}} w$, as required.
        \end{enumerate}
        In conclusion, $\mathbf{S}_{\mathbf{M}}$ is a pseudo-model for $\mathcal{L}_{{\B[\A]}_{i,A}}$, such that all $\leq^{\mathbf{M}}$ relations are max-dense. 

        \item To show that the pseudo-model $\mathbf{M}$ for $\mathcal{L}_{KB_{i,A}}$ and the pseudo-model $\mathbf{S}$ for $\mathcal{L}_{{\B[\A]}_{i,A}}$ agree on the interpretation of the modalities $K_i,B_i, K_A$, and $B_A$, we prove that the primitive knowledge and belief relations on $\mathbf{M}$ coincide with the recovered knowledge and belief relations on $\mathbf{S}$. 

        First, recall the result from \Cref{kb-on-boxforall_OLD}, which states that for all $\alpha\in \{A\}\cup A$, we can recover relations $\rightarrow_\alpha^{\mathbf{S}}$ and $\kn_\alpha^{\mathbf{S}}$ for knowledge and belief on the pseudo-model $\mathbf{S}$ for $\mathcal{L}_{{\B[\A]}_{i,A}}$, given that the $\leq$ relations for all agents and the full group are max-dense. 
        
        Now consider our pseudo-model $\mathbf{S}_{\mathbf{M}}$, which was obtained from $\mathbf{M}$ by recovering the evidence relations $\sim_\alpha^{\mathbf{M}}$ and $\leq_\alpha^{\mathbf{M}}$ for all $\alpha\in \{A\}\cup A$. By (I), it satisfies the conditions from \Cref{kb-on-boxforall_OLD}. So suppose we apply \Cref{kb-on-boxforall_OLD} to recover the uniquely determined knowledge and belief relations $\rightarrow_\alpha^{\mathbf{S}}$ and $\kn_\alpha^{\mathbf{S}}$ in terms of the (recovered, and not uniquely determined) evidence relations $\sim_\alpha^{\mathbf{M}}$ and $\leq_\alpha^{\mathbf{M}}$. 
        
        Then, by combining the results from \Cref{kb-on-boxforall_OLD} and from (d) and (e) of this proposition, we have that $\rightarrow_\alpha^{\mathbf{S}}=\rightarrow_\alpha$ and $\kn_\alpha^{\mathbf{S}}=\kn_\alpha$, where $\rightarrow_\alpha$ and $\kn_\alpha$ represent the primitive relations for knowledge and belief on the pseudo-model $\mathbf{M}$.
        
        Thus, by \Cref{kb-on-boxforall_OLD}, we have for all  $\alpha\in \{A\}\cup A$ and $s\in S$, that 
        \[
        \begin{array}{lllll}
            (\mathbf{M},x)\vDash B_\alpha \phi 
            &\text{ iff } & \text{ for all } t\in S \text{ s.t. } s\rightarrow_\alpha t: &(\mathbf{M}, t) \vDash \phi &\text{ (Def.~\ref{sem-pseudo-kb}) } \\
            &\text{ iff } & \text{ for all } t\in S \text{ s.t. } s\rightarrow_\alpha^{\mathbf{S}} t: &(\mathbf{M}, t) \vDash \phi &\text{ (Prop.~\ref{kb-on-boxforall_OLD}, (d), (e))} \\
            &\text{ iff } & & (\mathbf{S}_{\mathbf{M}},x)\vDash B_\alpha \phi &\text{ (Prop.~\ref{kb-on-boxforall_OLD})} 
        \end{array}
        \]

        \[
        \begin{array}{lllll}
            (\mathbf{M},x)\vDash K_\alpha \phi 
            &\text{ iff } &\text{ for all } t\in S \text{ s.t. } s \kn_\alpha t: &(\mathbf{M}, t)\vDash \phi &\text{ (Def.~\ref{sem-pseudo-kb}) } \\
            &\text{ iff } &\text{ for all } t\in S \text{ s.t. } s \kn_\alpha^{\mathbf{S}} t: &(\mathbf{M}, t)\vDash \phi &\text{ (Prop.~\ref{kb-on-boxforall_OLD}, (d), (e))}\\
            &\text{ iff } & & (\mathbf{S}_{\mathbf{M}},x)\vDash K_\alpha \phi &\text{ (Prop.~\ref{kb-on-boxforall_OLD})}\\
        \end{array}
        \]
        which concludes our proof.   
    \end{enumerate}
\end{proof}

Thus, we can recover evidence relations on the pseudo-model $\mathbf{M}$ for $\mathcal{L}_{KB_{i,A}}$, that result in a pseudo-model $\mathbf{S}_{\mathbf{M}}$ for $\mathcal{L}_{{\B[\A]}_{i,A}}$, that agrees with $\mathbf{M}$ on the interpretation of knowledge and belief. We extend this claim to all formulas in the language $\mathcal{L}_{KB_{i,A}}$:

\begin{corollary}\label{comp-sim_OLD}    
    The same formulas in $\mathcal{L}_{KB_{i,A}}$ are satisfiable in the pseudo-model $\mathbf{M}$ for $\mathcal{L}_{KB_{i,A}}$, as in the pseudo-model $\mathbf{S}$ for $\mathcal{L}_{{\B[\A]}_{i,A}}$. 
\end{corollary}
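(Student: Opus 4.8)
The plan is to read off \Cref{comp-sim_OLD} almost immediately from the representation theorem \Cref{boxforall-on-kb_OLD} (for the direction from $\mathcal{L}_{KB_{i,A}}$-pseudo-models to $\mathcal{L}_{{\B[\A]}_{i,A}}$-pseudo-models) and from \Cref{kb-on-boxforall_OLD} (for the converse, on max-dense pseudo-models), via one routine induction on formula complexity. So fix a pseudo-model $\mathbf{M}$ for $\mathcal{L}_{KB_{i,A}}$ and let $\mathbf{S}=\mathbf{S}_{\mathbf{M}}$ be the pseudo-model for $\mathcal{L}_{{\B[\A]}_{i,A}}$ constructed in \Cref{boxforall-on-kb_OLD}. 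By construction $\mathbf{M}$ and $\mathbf{S}$ have the same state set and the same valuation, and by clause (II) of \Cref{boxforall-on-kb_OLD} --- which in turn rests on the identities $\kn_\alpha^{\mathbf{S}}=\kn_\alpha$ and $\rightarrow_\alpha^{\mathbf{S}}=\rightarrow_\alpha$ established in its proof --- the semantic operators $K_\alpha$ and $B_\alpha$ act the same way on $\mathbf{M}$ and on $\mathbf{S}$, for every $\alpha\in\{A\}\cup A$.

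Next I would prove, by induction on $\phi\in\mathcal{L}_{KB_{i,A}}(\prop)$, that $\llbracket\phi\rrbracket^{\mathbf{M}}=\llbracket\phi\rrbracket^{\mathbf{S}}$. The atomic case is the shared valuation and the Boolean cases are immediate from the induction hypothesis. For $\phi=B_\alpha\psi$, the induction hypothesis gives $\llbracket\psi\rrbracket^{\mathbf{M}}=\llbracket\psi\rrbracket^{\mathbf{S}}$, and then applying the (common) operator from \Cref{boxforall-on-kb_OLD}(II) yields $\llbracket B_\alpha\psi\rrbracket^{\mathbf{M}}=B_\alpha(\llbracket\psi\rrbracket^{\mathbf{M}})=B_\alpha(\llbracket\psi\rrbracket^{\mathbf{S}})=\llbracket B_\alpha\psi\rrbracket^{\mathbf{S}}$; the case $\phi=K_\alpha\psi$ is identical. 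Hence $(\mathbf{M},s)\vDash\phi$ iff $(\mathbf{S},s)\vDash\phi$ for all $s$, so the same $\mathcal{L}_{KB_{i,A}}$-formulas are satisfiable in $\mathbf{M}$ and in $\mathbf{S}$. For the reverse half of the correspondence --- starting from a pseudo-model $\mathbf{S}$ for $\mathcal{L}_{{\B[\A]}_{i,A}}$ whose relations $\leq_\alpha$ are max-dense and passing to the pseudo-model $\mathbf{M}_{\mathbf{S}}$ for $\mathcal{L}_{KB_{i,A}}$ --- the same induction applies, now invoking clause (2) of \Cref{kb-on-boxforall_OLD} in place of \Cref{boxforall-on-kb_OLD}(II).

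Since all the analytic work has already been done inside \Cref{boxforall-on-kb_OLD} and \Cref{kb-on-boxforall_OLD}, there is no genuine obstacle left in this step; the only point requiring a little attention is that on $\mathbf{S}$ the operators $K_\alpha$ and $B_\alpha$ are \emph{not} primitive but abbreviations built from $\Box_\alpha$ and ${[\A]}_\alpha$, so their interpretations must be matched with the primitive ones on $\mathbf{M}$ --- precisely what the cited clauses of \Cref{kb-on-boxforall_OLD} and the relation-identities in the proof of \Cref{boxforall-on-kb_OLD} supply, making the induction go through verbatim. Finally I would record the payoff for \Cref{corr:compness-kb}: by \Cref{comp-pseudo-KB} any consistent $\phi\in\mathcal{L}_{KB_{i,A}}(\prop)$ is satisfiable in the canonical $\mathcal{L}_{KB_{i,A}}$-pseudo-model; by \Cref{comp-sim_OLD} it is then satisfiable in a max-dense $\mathcal{L}_{{\B[\A]}_{i,A}}$-pseudo-model; by \Cref{coroll-bisim-frag_OLD} in the associated (tree-like) model, hence in a relational evidence model; and by \Cref{cor:rel-sem-equiv-concl_OLD} in a multi-agent topo-e-model --- which completes the proof of \Cref{corr:compness-kb}.
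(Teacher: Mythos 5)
Your proposal is correct and follows essentially the same route as the paper: an induction on the complexity of $\phi\in\mathcal{L}_{KB_{i,A}}$, with the atomic and Boolean cases immediate from the shared state set and valuation, and the $K_\alpha$/$B_\alpha$ cases discharged by \Cref{kb-on-boxforall_OLD} and \Cref{boxforall-on-kb_OLD}(II). Your additional remark that $K_\alpha$ and $B_\alpha$ are abbreviations on $\mathbf{S}$ and must be matched with the primitive relations on $\mathbf{M}$ is exactly the point those cited clauses are designed to handle, so nothing is missing.
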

\begin{proof}
    By induction on the complexity of $\phi$. We compare the interpretation of formulas $\phi\in \mathcal{L}_{KB_{i,A}}(\prop)$ on the pseudo-model $\mathbf{S}$ for $\mathcal{L}_{{\B[\A]}_{i,A}}$, according to \Cref{sem-boxall-pseudo}, with the interpretation on the pseudo-model $\mathbf{M}$ for $\mathcal{L}_{KB_{i,A}}$ according to \Cref{sem-pseudo-kb}. 
    
    For the atomic case, and for the boolean cases of the inductive step, we observe that the interpretations agree on the semantics of atomic propositions and boolean combinations. For formulas of the form $K_i\psi,B_i\psi,K_A\psi$, and $B_A\psi$, the claim follows from \Cref{kb-on-boxforall_OLD} and \Cref{boxforall-on-kb_OLD} (II).
\end{proof}

We now finally prove \Cref{corr:compness-kb}: we show that the proof system $\bm{KB_{i,A}}$ from \Cref{pf-systKB} is sound and weakly complete with respect to multi-agent topo-e-models, and that the logic of $\bm{KB_{i,A}}$ is decidable. 

\medskip
\paragraph{Proof of \Cref{corr:compness-kb}} Soundness of the axioms and rules in \Cref{pf-systKB} is a routine check, therefore we omit these proofs. As for completeness, let $\phi\in \mathcal{L}_{KB_{i,A}}(\prop)$ be any consistent formula. By \Cref{comp-pseudo-KB}, $\phi$ is satisfiable on a pseudo-model $\mathbf{M}$ for $\mathcal{L}_{KB_{i,A}}$. By \Cref{comp-sim_OLD}, there exists an equivalent pseudo-model $\mathbf{S}$ for $\mathcal{L}_{{\B[\A]}_{i,A}}$. Thus, $\phi$ is satisfied on $\mathbf{S}$. By \Cref{coroll-bisim-frag_OLD}, $\phi$ is satisfiable on the associated model for $\mathbf{S}$. Since this is a relational evidence model, we obtain weak completeness for $\mathcal{L}_{KB_{i,A}}$ with respect to relational evidence models. Finally, by \Cref{cor:rel-sem-equiv-concl_OLD}, we obtain weak completeness with respect to topo-e-models. Decidability of $\mathcal{L}_{KB_{i,A}}$ follows from decidability of the larger language $\mathcal{L}_{{\B[\A]}_I}$ (see \Cref{corr:compness-boxall-frag}).

%%%%%%%%%%%%%%%%%%%%%%%%%%%%%%%%%%%%%%%%%%%%%%%%%%%%%%%%%%%%%%%%%%%%%%

\subsection{Proof of Completeness and Decidability for the Dynamic Logics of Evidence-Sharing (\texorpdfstring{\Cref{compness-boxall-dyn}}{Theorem 10})}

We follow a standard approach in DEL literature: to show the completeness of the axiomatic system for th dynamic extension of a static logic that has already been proven to be complete, it suffices to use the dynamic reduction axioms to show that the static language is provably equally expressive as its dynamic extension. 

We apply this technique to the dynamic extensions of each of the static proof systems $\bm{{\B[\A]}_I}$ and and $\bm{KB_{i,A}}$. We first need the following two lemmas, as preliminary steps, that establish the \textit{elimination of a one-step dynamic modality} for each of these logics.

\begin{lemma}\label{lem:comp-dyn-all_OLD}
    Let $I\subseteq A$ be any group of agents. Then, for every ``static'' formula $\phi$ in the language $\mathcal{L}_{{\B[\A]}_I}$, there exists another ``static'' formuula $\phi_{I}$ in $\mathcal{L}_{{\B[\A]}_I}$, such that
    \[
    \vdash [\share_I]\phi \leftrightarrow \phi_{I}
    \]
    is provable in the system $\bm{{\B[\A]}_I [\share_I]}$.
\end{lemma}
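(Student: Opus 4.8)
The plan is to prove \Cref{lem:comp-dyn-all_OLD} by induction on the structure of the static formula $\phi \in \mathcal{L}_{{\B[\A]}_I}$, at each step using the appropriate reduction axiom from \Cref{pf-syst-boxall-dyn} to push the single dynamic modality $[\share_I]$ inward past the connective, and then applying the induction hypothesis to the (structurally simpler) resulting subformulas. Concretely, for the atomic case I would use (Atomic Reduction) to get $\vdash [\share_I]p \leftrightarrow p$, so $\phi_I := p$. For $\phi = \neg\psi$, (Negation Reduction) gives $\vdash [\share_I]\neg\psi \leftrightarrow \neg[\share_I]\psi$; by the induction hypothesis there is a static $\psi_I$ with $\vdash [\share_I]\psi \leftrightarrow \psi_I$, hence (by normal-modal-logic reasoning for $[\share_I]$, in particular replacement of provable equivalents) $\vdash [\share_I]\neg\psi \leftrightarrow \neg\psi_I$, so $\phi_I := \neg\psi_I$. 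For $\phi = \psi_1 \wedge \psi_2$, I would first note that Conjunction Reduction $[\share_I](\psi_1\wedge\psi_2) \leftrightarrow ([\share_I]\psi_1 \wedge [\share_I]\psi_2)$ is derivable from normality of $[\share_I]$ together with (Negation Reduction) (as remarked in the text right after \Cref{KB-dyn}), then apply the induction hypothesis to each conjunct and set $\phi_I := (\psi_1)_I \wedge (\psi_2)_I$.

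The modal cases are where the group-index bookkeeping enters. For $\phi = \B_J\psi$ with $J\subseteq A$, the ($\B$-Reduction) axiom gives $\vdash [\share_I]\B_J\psi \leftrightarrow \B_{J/\!+ I}[\share_I]\psi$, where $J/\!+ I := J\cup I$ if $I\cap J\neq\emptyset$ and $J/\!+ I := J$ otherwise. By the induction hypothesis applied to $\psi$, there is a static $\psi_I$ with $\vdash [\share_I]\psi \leftrightarrow \psi_I$; since $\B_{J/\!+ I}$ is a normal modality in the static system $\bm{{\B[\A]}_I}$, replacement of provable equivalents under $\B_{J/\!+ I}$ yields $\vdash [\share_I]\B_J\psi \leftrightarrow \B_{J/\!+ I}\psi_I$, and crucially $\B_{J/\!+ I}\psi_I$ is again a formula of $\mathcal{L}_{{\B[\A]}_I}$ (the language has $\B_K$ for \emph{every} group $K$, in particular $K = J/\!+I$), so $\phi_I := \B_{J/\!+ I}\psi_I$ works. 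The case $\phi = {[\A]}_J\psi$ is entirely parallel using ($\A$-Reduction) and the fact that ${[\A]}_{J/\!+ I}$ is normal, giving $\phi_I := {[\A]}_{J/\!+ I}\psi_I$.

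Throughout, the two auxiliary facts I would invoke are: (i) replacement of provably equivalent subformulas is admissible in the system $\bm{{\B[\A]}_I[\share_I]}$ — this follows from the normal-modal-logic rules (necessitation + K-axiom) for each of the modalities $\B_K$, ${[\A]}_K$, $[\share_I]$ present in the system, by the standard induction on formula structure; and (ii) Conjunction Reduction is derivable, as already noted in the excerpt. I expect the main (though modest) obstacle to be purely organizational: making the induction well-founded. The subtlety is that in the clause for $\B_J\psi$ the formula $\B_{J/\!+ I}[\share_I]\psi$ is \emph{not} structurally simpler than $[\share_I]\B_J\psi$ in the naive sense — the dynamic modality has merely moved inward. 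The clean fix is to induct not on the full formula $[\share_I]\phi$ but on the complexity of the \emph{static part} $\phi$ alone: each reduction axiom strictly decreases the size of the static formula occurring under (or to the left of) $[\share_I]$, and the Boolean/atomic cases eventually eliminate the dynamic modality outright, so the recursion terminates. Once this measure is fixed, every case is a one-line application of a reduction axiom plus replacement of equivalents, and there is no genuine mathematical difficulty.
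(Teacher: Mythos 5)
Your proposal is correct and follows essentially the same route as the paper's proof: induction on the complexity of the static formula $\phi$, using each reduction axiom to push $[\share_I]$ inward and then applying the induction hypothesis together with replacement of provable equivalents (your observation that the replacement in the $\B_J$ case is licensed by normality of $\B_{J/\!+ I}$, and your explicit remark on the induction measure, are slightly more careful than the paper's phrasing but do not change the argument).
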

\begin{proof}
We prove the existence of $\phi_I$, by
    induction on the complexity of the static formula $\phi$.

    For the atomic case, where $\phi:=p$, the Atomic Reduction axiom from \Cref{pf-syst-boxall-dyn} gives us that $\vdash [\share_I]p\leftrightarrow p$, so we can take $\phi_{I}:=p$.

    For $\phi:=\neg\psi$, we apply the induction hypothesis to $\psi$ to obtain that there exists $\psi_{I}\in \mathcal{L}_{{\B[\A]}_I}$ such that $\vdash [\share_I]\psi \leftrightarrow \psi_{I}$. By the Negation Reduction axiom, we have $\vdash [\share_I]\neg \psi \leftrightarrow \neg[\share_I] \psi$ which, with the induction hypothesis, gives us that $\vdash [\share_I]\neg \psi \leftrightarrow \neg \psi_{I}$. Thus, we can take $\phi_{I}:=\neg\psi_{I}$.

    For $\phi:=\psi\wedge\chi$, the proof is similar, using the derived law of Conjunction Reduction (which as we saw is a theorem in this system).

    For $\phi:= \B_J\psi$, where $J\subseteq A$ is a group, we use the $\B$-Reduction axiom, to obtain that $\vdash [\share_I]\B_J\psi\leftrightarrow \B_{J/+ I}[\share_I]\psi$. By the induction hypothesis, there exists $\psi_{I}\in \mathcal{L}_{{\B[\A]}_I}$ such that $\vdash [\share_I]\psi \leftrightarrow \psi_{I}$.
    Thus, using the normality of $[\share_I]$, we have $\vdash [\share_I]\B_J\psi\leftrightarrow \B_{J/+ I}\psi_{I}$, therefore, we can take $\phi_I:= \B_{J/+}\psi_{I}$.

    For $\phi:= {[\A]}_J\psi$,  where $J\subseteq A$ is a group, the proof is similar to the previous case, using the $[\A]$-Reduction axiom instead of the $\B$-Reduction. 
\end{proof}

\begin{lemma}\label{lem:comp-dyn-all_NEW}
    Let $\phi$ be any ``static'' formula in the language $\mathcal{L}_{KB_{i,A}}$. Then there exists some ``static'' formula $\phi_{(A)}$ in $\mathcal{L}_{KB_{i,A}}$, such that
    \[
    \vdash [\share_A]\phi \leftrightarrow \phi_{(A)}
    \]
    is provable in $\bm{KB_{i,A} [\share_A]}$.
\end{lemma}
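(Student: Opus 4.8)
The plan is to prove \Cref{lem:comp-dyn-all_NEW} by induction on the complexity of the static formula $\phi \in \mathcal{L}_{KB_{i,A}}$, in complete analogy with the proof of \Cref{lem:comp-dyn-all_OLD} for the $\mathcal{L}_{{\B[\A]}_I}$ case, but using the reduction axioms from \Cref{KB-dyn} instead of those from \Cref{pf-syst-boxall-dyn}. The key simplification here is that there is only one dynamic modality $[\share_A]$ (for the full group $A$), so there is no need to track a family of group-relative reduction targets; the ``output group'' for both the $K$-Reduction and the $B$-Reduction axioms is always $A$ itself.

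The base case $\phi := p$ uses the Atomic Reduction axiom $[\share_A]p \leftrightarrow p$, so we set $\phi_{(A)} := p$. For $\phi := \neg\psi$, the induction hypothesis gives $\psi_{(A)}$ with $\vdash [\share_A]\psi \leftrightarrow \psi_{(A)}$; combining this with the Negation Reduction axiom $[\share_A]\neg\psi \leftrightarrow \neg[\share_A]\psi$ yields $\vdash [\share_A]\neg\psi \leftrightarrow \neg\psi_{(A)}$, so we take $\phi_{(A)} := \neg\psi_{(A)}$. For $\phi := \psi \wedge \chi$, we proceed similarly using the derived Conjunction Reduction law (noted in the text to be provable from the normality of $[\share_A]$ together with Negation Reduction) and the two induction hypotheses, taking $\phi_{(A)} := \psi_{(A)} \wedge \chi_{(A)}$. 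For $\phi := K_\alpha\psi$ with $\alpha \in \{A\}\cup A$, we apply the $K$-Reduction axiom $[\share_A]K_\alpha\psi \leftrightarrow K_A[\share_A]\psi$, then use the induction hypothesis $\vdash [\share_A]\psi \leftrightarrow \psi_{(A)}$ together with the normality (monotonicity/replacement of equivalents) of $K_A$ to obtain $\vdash [\share_A]K_\alpha\psi \leftrightarrow K_A\psi_{(A)}$, setting $\phi_{(A)} := K_A\psi_{(A)}$. The case $\phi := B_\alpha\psi$ is identical, using the $B$-Reduction axiom $[\share_A]B_\alpha\psi \leftrightarrow B_A[\share_A]\psi$ and the normality of $B_A$, with $\phi_{(A)} := B_A\psi_{(A)}$.

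There is no real obstacle here: the proof is entirely routine, being a syntactic mirror of \Cref{lem:comp-dyn-all_OLD}. The only minor point worth flagging is that one must observe that $K_A\psi_{(A)}$ and $B_A\psi_{(A)}$ genuinely lie in $\mathcal{L}_{KB_{i,A}}$ (not just in some larger language) --- this holds because $\psi_{(A)} \in \mathcal{L}_{KB_{i,A}}$ by the induction hypothesis and $K_A, B_A$ are among the primitive modalities of $\mathcal{L}_{KB_{i,A}}$; in particular, even when the original $\alpha$ is an individual agent $i$, the reduction target uses only the full-group modalities $K_A$ and $B_A$, which are available. Here is the proof.

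\begin{proof}
    We prove the existence of $\phi_{(A)}$ by induction on the complexity of the static formula $\phi\in \mathcal{L}_{KB_{i,A}}$.

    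For the atomic case, where $\phi:=p$, the Atomic Reduction axiom from \Cref{KB-dyn} gives us $\vdash [\share_A]p\leftrightarrow p$, so we can take $\phi_{(A)}:=p$.

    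For $\phi:=\neg\psi$, we apply the induction hypothesis to $\psi$ to obtain some $\psi_{(A)}\in \mathcal{L}_{KB_{i,A}}$ such that $\vdash [\share_A]\psi \leftrightarrow \psi_{(A)}$. By the Negation Reduction axiom, $\vdash [\share_A]\neg \psi \leftrightarrow \neg[\share_A] \psi$, which together with the induction hypothesis gives $\vdash [\share_A]\neg \psi \leftrightarrow \neg \psi_{(A)}$. Thus, we can take $\phi_{(A)}:=\neg\psi_{(A)}$.

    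For $\phi:=\psi\wedge\chi$, the proof is similar, using the derived law of Conjunction Reduction (which, as noted above, is provable from the normality of $[\share_A]$ and the Negation Reduction axiom) together with the induction hypotheses for $\psi$ and $\chi$; we take $\phi_{(A)}:=\psi_{(A)}\wedge \chi_{(A)}$.

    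For $\phi:= K_\alpha\psi$, where $\alpha\in \{A\}\cup A$, we use the $K$-Reduction axiom to obtain $\vdash [\share_A]K_\alpha\psi\leftrightarrow K_{A}[\share_A]\psi$. By the induction hypothesis, there exists $\psi_{(A)}\in \mathcal{L}_{KB_{i,A}}$ such that $\vdash [\share_A]\psi \leftrightarrow \psi_{(A)}$. Using the normality of $K_A$ (replacement of provable equivalents under the box), we obtain $\vdash [\share_A]K_\alpha\psi\leftrightarrow K_{A}\psi_{(A)}$. Since $\psi_{(A)}\in \mathcal{L}_{KB_{i,A}}$ and $K_A$ is a modality of $\mathcal{L}_{KB_{i,A}}$, the formula $K_A\psi_{(A)}$ is again in $\mathcal{L}_{KB_{i,A}}$, so we can take $\phi_{(A)}:= K_{A}\psi_{(A)}$.

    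For $\phi:= B_\alpha\psi$, where $\alpha\in \{A\}\cup A$, the proof is identical to the previous case, using the $B$-Reduction axiom and the normality of $B_A$; we take $\phi_{(A)}:= B_{A}\psi_{(A)}$.
\end{proof}
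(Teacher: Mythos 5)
Your proof is correct and takes exactly the approach the paper intends: the paper's own proof of this lemma simply says it is "completely similar" to the proof of \Cref{lem:comp-dyn-all_OLD} with the reduction axioms of \Cref{KB-dyn} substituted in, and your induction spells out precisely that argument, case by case. No gaps.
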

\begin{proof}
The proof is completely similar to the one of Lemma~\ref{lem:comp-dyn-all_OLD}, but using the reduction axioms for $[\share_A]$ in Table~\ref{KB-dyn} instead of the ones in \Cref{pf-syst-boxall-dyn}.
\end{proof}

Now, we can establish the desired co-expressivity results.

\begin{proposition}\label{lem:comp2-dyn-all_OLD}
    For every ``dynamic'' formula $\phi$ in the language $\mathcal{L}_{{\B[\A]}_I [\share_I]}$, there exists some ``static'' formula $\phi^{\prime}\in \mathcal{L}_{{\B[\A]}_I}$ such that
    \[
    \vdash \phi \leftrightarrow \phi^{\prime}
    \]
    is provable in the system $\bm{{\B[\A]}_I [\share_I]}$.
\end{proposition}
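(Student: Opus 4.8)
The plan is to remove the dynamic modalities from $\phi$ one at a time, working from the inside out, each removal being an instance of Lemma~\ref{lem:comp-dyn-all_OLD}. Let $d(\phi)$ denote the total number of occurrences in $\phi$ of modalities of the form $[\share_J]$ (for $J\subseteq A$), and proceed by induction on $d(\phi)$.

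\emph{Base case.} If $d(\phi)=0$ then $\phi$ is already a formula of $\mathcal{L}_{{\B[\A]}_I}$, and we take $\phi':=\phi$, so that $\vdash\phi\leftrightarrow\phi'$ holds trivially.

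\emph{Inductive step.} Suppose $d(\phi)=n+1$. First I would single out an \emph{innermost} dynamic modality: a subformula occurrence of the form $[\share_J]\psi$ such that $\psi$ contains no dynamic modality at all. Such an occurrence exists whenever $d(\phi)>0$ --- for instance, pick any occurrence of some $[\share_J]$ whose scope has the least $d$-value among all dynamic-modality occurrences in $\phi$; that least value is then necessarily $0$. Since $\psi$ is a static formula, Lemma~\ref{lem:comp-dyn-all_OLD} (applied with the group $J$) yields a static $\psi_J\in\mathcal{L}_{{\B[\A]}_I}$ with $\vdash[\share_J]\psi\leftrightarrow\psi_J$ provable in $\bm{{\B[\A]}_I[\share_I]}$. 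Let $\phi^{*}$ be the formula obtained from $\phi$ by replacing that single occurrence of $[\share_J]\psi$ with $\psi_J$. Because $\bm{{\B[\A]}_I[\share_I]}$ includes the axioms and rules of normal modal logic for each of the modalities $\B_J$, ${[\A]}_J$ and $[\share_J]$, it has the standard replacement-of-provable-equivalents property; hence $\vdash[\share_J]\psi\leftrightarrow\psi_J$ gives $\vdash\phi\leftrightarrow\phi^{*}$. Moreover $d(\phi^{*})=n$, since the deleted subformula contributed exactly one $[\share_J]$ and its replacement $\psi_J$ is static. By the induction hypothesis there is a static $\phi'\in\mathcal{L}_{{\B[\A]}_I}$ with $\vdash\phi^{*}\leftrightarrow\phi'$, and composing the two equivalences gives $\vdash\phi\leftrightarrow\phi'$, as required.

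The only step I would take care to spell out, and the sole non-mechanical ingredient, is the replacement-of-equivalents (congruence) principle: if $\vdash\chi\leftrightarrow\chi'$ then $\vdash\theta\leftrightarrow\theta^{*}$, where $\theta^{*}$ results from $\theta$ by substituting $\chi'$ for one occurrence of $\chi$. For normal modal logics this is proved by a routine induction on the structure of $\theta$, using only necessitation and the $K$-axiom (hence monotonicity) of each modality; the point to verify here is merely that this applies to the dynamic modality $[\share_J]$ as well, which is exactly what the ``axioms and rules of normal modal logic for $[\share_I]$'' clause in Table~\ref{pf-syst-boxall-dyn} guarantees. Thus I do not anticipate any genuine obstacle --- the statement is the expected inside-out elimination argument, bootstrapping off Lemma~\ref{lem:comp-dyn-all_OLD}.
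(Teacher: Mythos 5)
Your proof is correct, but it is organized differently from the paper's. The paper proves the proposition by a single structural induction on $\phi$: for each connective it pushes the translation through (using normality of $\B_I$, ${[\A]}_I$ and $[\share_I]$), and in the case $\phi=[\share_I]\psi$ it first applies the induction hypothesis to the scope $\psi$ to get a static $\psi'$, derives $\vdash[\share_I]\psi\leftrightarrow[\share_I]\psi'$ by normality, and only then invokes Lemma~\ref{lem:comp-dyn-all_OLD} to eliminate the outermost dynamic modality. You instead induct on the number of $[\share_J]$ occurrences and eliminate an \emph{innermost} one first, which forces you to invoke a general replacement-of-provable-equivalents principle to propagate the local equivalence $\vdash[\share_J]\psi\leftrightarrow\psi_J$ up to the whole formula. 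That principle does hold here — every modality in $\bm{{\B[\A]}_I[\share_I]}$ is normal, so the congruence property follows by the routine induction you sketch — and your verification that the chosen occurrence is genuinely innermost and that the count drops by exactly one is sound. The trade-off is that the paper's structural induction gets the congruence reasoning for free (it is baked into the recursive cases), whereas your version isolates it as an explicit auxiliary lemma; conversely, your argument makes the ``eliminate one dynamic modality at a time'' intuition more visible and would adapt unchanged to any normal extension of the static base. Both are standard DEL reduction arguments and both are complete proofs.
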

\begin{proof}
    By induction on the complexity of the dynamic formula $\phi$. 
    
    For atoms $\phi:=p$, we have that $\phi\in\mathcal{L}_{{\B[\A]}_I}$, so we can take $\phi^{\prime}:=\phi$.

    For $\phi:=\neg\psi$, apply the induction hypothesis to $\psi$ to obtain $\psi^{\prime}\in \mathcal{L}_{{\B[\A]}_I}$ such that $\vdash \psi \leftrightarrow \psi^{\prime} $. But then $\vdash \neg\psi \leftrightarrow \neg\psi^{\prime}$, so we take $\phi^{\prime}:=\neg\psi^{\prime}$.

    For $\phi:=\psi\wedge\chi$, the proof is similar.

    For $\phi:= \B_I\psi$, apply the induction hypothesis to $\psi$ to obtain $\psi^{\prime}\in \mathcal{L}_{{\B[\A]}_I}$ s.t. $\vdash \psi \leftrightarrow \psi^{\prime}$. Then $\vdash \B_I\psi\leftrightarrow \B_I\psi^{\prime}$ by the normality of $\B_I$, so we can take $\phi^{\prime}:=\B_I\psi^{\prime}$.

    For $\phi:= {[\A]}_I\psi$, the proof is similar to the proof for $\phi:= \B_J\psi$.

    For $\phi:=[\share_I]\psi$, apply the induction hypothesis to $\psi$ to obtain $\psi^{\prime}\in \mathcal{L}_{{\B[\A]}_I}$ such that $\vdash \psi \leftrightarrow \psi^{\prime}$. By the normality of $[\share_I]$ we have $\vdash [\share_I]\psi \leftrightarrow [\share_I]\psi^{\prime}$, while by \Cref{lem:comp-dyn-all_OLD} we have $\vdash [\share_I]\psi^{\prime}\leftrightarrow \psi^{\prime}_I$, thus we obtain that $\vdash [\share_I]\psi \leftrightarrow \psi^{\prime}_I$. So we can take $\phi^{\prime}:=\psi^{\prime}_I$.
\end{proof}

We also have the analogue co-expressivity result for the dynamic and static logics of group knowledge and belief:

\begin{proposition}\label{lem:comp2-dyn-all_NEW}
    For every ``dynamic'' formula $\phi$ in the language $\mathcal{L}_{KB_{i,A} [\share_A]}$, there exists some ``static'' formula $\phi^{\prime}\in \mathcal{L}_{KB_{i,A}}$ such that
    \[
    \vdash \phi \leftrightarrow \phi^{\prime}
    \]
    is provable in the system $\bm{KB_{i,A} [\share_A]}$.
\end{proposition}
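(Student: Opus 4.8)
The plan is to prove this by induction on the complexity of the dynamic formula $\phi$, exactly mirroring the argument for $\bm{{\B[\A]}_I[\share_I]}$ in \Cref{lem:comp2-dyn-all_OLD}, with \Cref{lem:comp-dyn-all_NEW} playing the role that \Cref{lem:comp-dyn-all_OLD} played there. The content of that lemma is that a single outermost $[\share_A]$ applied to a \emph{static} formula can always be provably rewritten into a static formula; the induction then propagates this through the dynamic connectives.

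First, the base and Boolean cases. If $\phi := p$ is atomic, then $\phi$ is already static and we take $\phi' := p$. If $\phi := \neg\psi$, apply the induction hypothesis to $\psi$ to get $\psi' \in \mathcal{L}_{KB_{i,A}}$ with $\vdash \psi \leftrightarrow \psi'$, whence $\vdash \neg\psi \leftrightarrow \neg\psi'$ and we take $\phi' := \neg\psi'$. The case $\phi := \psi \wedge \chi$ is the same, combining the two induction hypotheses. For $\phi := K_\alpha\psi$ with $\alpha \in A \cup \{A\}$, apply the induction hypothesis to $\psi$ and use the normality of $K_\alpha$ in $\bm{KB_{i,A}}$ (replacement of provable equivalents under $K_\alpha$) to conclude $\vdash K_\alpha\psi \leftrightarrow K_\alpha\psi'$, so $\phi' := K_\alpha\psi'$ works; this covers both $K_i$ and $K_A$ (and hence also the defined $B_\alpha$).

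The only genuinely new case is $\phi := [\share_A]\psi$. Here I first apply the induction hypothesis to $\psi$, obtaining a static $\psi' \in \mathcal{L}_{KB_{i,A}}$ with $\vdash \psi \leftrightarrow \psi'$. By the normality of $[\share_A]$ (axioms and rules of normal modal logic for $[\share_A]$ in \Cref{KB-dyn}), this gives $\vdash [\share_A]\psi \leftrightarrow [\share_A]\psi'$. Since $\psi'$ is now static, \Cref{lem:comp-dyn-all_NEW} supplies a static $\psi'_{(A)} \in \mathcal{L}_{KB_{i,A}}$ with $\vdash [\share_A]\psi' \leftrightarrow \psi'_{(A)}$; chaining the two equivalences yields $\vdash [\share_A]\psi \leftrightarrow \psi'_{(A)}$, so we take $\phi' := \psi'_{(A)}$.

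There is no real obstacle in this particular proposition: all the work is front-loaded into \Cref{lem:comp-dyn-all_NEW}, whose proof runs parallel to \Cref{lem:comp-dyn-all_OLD} but uses the $K$- and $B$-Reduction axioms of \Cref{KB-dyn}. The one subtlety worth flagging is that the $K$-Reduction axiom pushes $[\share_A]$ past $K_\alpha$ by rewriting it as $K_A[\share_A](\cdot)$ regardless of whether $\alpha$ is an individual agent or the full group, reflecting the semantic identity $K_i^{\mathfrak{M}(\share_A)} = K_A^{\mathfrak{M}}$ from \Cref{Virtual-K}; one must therefore observe that the group label can only grow to $A$ and no further, so the rewriting stays inside $\mathcal{L}_{KB_{i,A}}$. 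Once \Cref{lem:comp2-dyn-all_NEW} is established, together with \Cref{lem:comp2-dyn-all_OLD} it gives the provable co-expressivity of the dynamic logics with their static bases, which combined with Theorems~\ref{corr:compness-boxall-frag} and~\ref{corr:compness-kb} yields \Cref{compness-boxall-dyn}.
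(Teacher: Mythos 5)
Your proof is correct and follows essentially the same route as the paper: the paper proves this proposition by the same induction on formula complexity used for $\bm{{\B[\A]}_I[\share_I]}$ in \Cref{lem:comp2-dyn-all_OLD}, invoking \Cref{lem:comp-dyn-all_NEW} in the $[\share_A]$ case exactly as you do. Your added observation that the $K$-Reduction axiom only promotes the group label to $A$ (so the rewriting stays inside $\mathcal{L}_{KB_{i,A}}$) is a sensible sanity check but not a point the paper needed to belabor.
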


\begin{proof}
The proof is completely similar to the one of Proposition~\ref{lem:comp2-dyn-all_OLD}, but using Lemma~\ref{lem:comp-dyn-all_NEW}
instead of Lemma~\ref{lem:comp-dyn-all_OLD}.
\end{proof}

Finally, we prove the completeness and decidability of our dynamic logics: 

\medskip
\paragraph{Proof of \Cref{compness-boxall-dyn}} 
    We prove the claims for proof systems $\bm{{\B[\A]}_I [\share_I]}$ and $\bm{KB_{i,A} [\share_A]}$:
    
    \begin{enumerate}
        \item The \textit{soundness} of the reduction axioms from Table~\ref{pf-syst-boxall-dyn} is a routine verification. 

        The \textit{decidability} of the logic axiomatized by the proof system $\bm{KB_{i,A} [\share_A]}$ follows immediately from Proposition~\ref{lem:comp2-dyn-all_OLD} (the provable co-expressivity of the static and dynamic logics) together with the decidabilty of the logic axiomatized by the system $\bm{{\B[\A]}_I}$ (\Cref{corr:compness-boxall-frag}).

        For the \textit{completeness} of $\bm{KB_{i,A} [\share_A]}$, we also use the fact that the dynamic language $\mathcal{L}_{{\B[\A]}_I[\share_I]}$ is co-expressive with its static base $\mathcal{L}_{{\B[\A]}_I}$. Let $\phi\in \mathcal{L}_{{\B[\A]}_I[\share_I]}(\prop)$ be a consistent formula (w.r.t.\ the proof system $\bm{{\B[\A]}_I[\share_I]}$), and we need to show that $\phi$ is satisfiable. By \Cref{lem:comp2-dyn-all_OLD}, there exists $\phi^{\prime}\in \mathcal{L}_{{\B[\A]}_I}(\prop)$ such that $\vdash \phi \leftrightarrow \phi^{\prime}$ is a theorem in 
        the system $\bm{{\B[\A]}_I[\share_I]}$. By the soundness of $\bm{{\B[\A]}_I[\share_I]}$, it follows that $\varphi'$ is consistent (w.r.t.\ the system  $\bm{{\B[\A]}_I[\share_I]}$, hence w.r.t.\ the subsystem $\bm{{\B[\A]}_I}$). By \Cref{corr:compness-boxall-frag}, there exists a pointed multi-agent topo-e-model $(\mfM,x)$ such that $(\mfM,x)\vDash \phi^\prime$. Applying again the soundness of $\bm{{\B[\A]}_I[\share_I]}$, $\models \phi \leftrightarrow \phi^{\prime}$ is valid, and so also have  $(\mfM,x)\vDash \phi$, as desired. 

        \item The proofs for the system $\bm{KB_{i,A}[\share_A]}$ are completely similar, using Proposition~\ref{lem:comp2-dyn-all_NEW}
        and \Cref{corr:compness-kb} instead of Proposition~\ref{lem:comp2-dyn-all_OLD} and \Cref{corr:compness-boxall-frag}.
    \end{enumerate}

%%% Local Variables:
%%% mode: latex
%%% TeX-master: "main.tex"
%%% End:
}

\end{document}